\newcommand{\cmark}{\ding{51}}
\newcommand{\xmark}{\ding{55}}
\DeclareMathOperator*{\argmin}{argmin}
\theoremstyle{plain}
\newtheorem{theorem}{Theorem}[section]
\newtheorem{lemma}[theorem]{Lemma}
\newtheorem{corollary}[theorem]{corollary}
\theoremstyle{remark}
\newtheorem{definition}[theorem]{Definition}
\newtheorem{remark}{Remark}
\newtheorem{assumption}{Assumption}
\newcommand{\floor}[1]{\lfloor #1 \rfloor}
\newcommand{\ceil}[1]{\lceil {#1} \rceil}
\journal{Nuclear Physics B}
\begin{document}

\begin{frontmatter}



\title{Concurrent Density Estimation with Wasserstein Autoencoders: Some Statistical Insights}


\author[inst1]{Anish Chakrabarty}

\affiliation[inst1]{organization={Statistics and Mathematics Unit},
            addressline={Indian Statistical Institute, Kolkata}}

\author[inst2]{Arkaprabha Basu}
\author[inst2]{Swagatam Das}

\affiliation[inst2]{organization={Electronics and Communication Sciences Unit},
            addressline={Indian Statistical Institute, Kolkata}}

\begin{abstract}
Variational Autoencoders (VAEs) have been a pioneering force in the realm of deep generative models. Amongst its legions of progenies, Wasserstein Autoencoders (WAEs) stand out in particular due to the dual offering of heightened generative quality and a strong theoretical backbone. WAEs consist of an encoding and a decoding network--- forming a bottleneck--- with the prime objective of generating new samples resembling the ones it was catered to. In the process, they aim to achieve a target latent representation of the encoded data. Our work is an attempt to offer a theoretical understanding of the machinery behind WAEs. From a statistical viewpoint, we pose the problem as concurrent density estimation tasks based on neural network-induced transformations. This allows us to establish deterministic upper bounds on the realized errors WAEs commit. We also analyze the propagation of these stochastic errors in the presence of adversaries. As a result, both the large sample properties of the reconstructed distribution and the resilience of WAE models are explored.  
\end{abstract}




\end{frontmatter}



\section{Introduction}


Variational Autoencoder (VAE) \citep{Kingma2014} is one of the earlier agents of the modern-day revolution we call deep generative modeling. Vanilla autoencoders (AE), a precursor to VAEs, being used primarily for representation learning, were incapable of adding variation to the reconstructed signal. As such, they could not `generate' new observations resembling the target. VAEs came into being with the promise of overcoming this limitation, inspiring numerous variants in the process \citep{VAE_survey}. Perhaps the one that stirs up a statistician's intrigue the most is the Wasserstein Autoencoder (WAE) \citep{WAE}. Approaching the problem from an optimal transport (OT) point of view, it achieved significant improvement in generated image quality. 

The discussion regarding deep generative models starts with an unknown target probability distribution $\mu$. A model is devised to simulate new observations from the same by learning it gradually based on samples. While it is demanding to imagine a data set consisting of images following such a distribution, they can be readily deemed as residents of a high-dimensional non-Euclidean space, perhaps manifolds. However, in our discussion, we surmise that $\mu$ is defined on a Borel subset $\mathcal{X}$ of $\mathbb{R}^d$. This becomes a reasonable starting point for our discussion based on the well-known fact that the information necessary to `represent' an image typically possesses a low-dimensional structure compared to its ambient dimension $d$ \citep{bengio2013representation}. There lie two constituents in a typical WAE model: an `encoder' ($E$), and a `decoder' ($D$). It is the encoder that explores the prospect of achieving a low-dimensional representation of the data. Sampled observations from $\mu$ are fed into the encoder, which is tasked with producing replicates of such a reduced dimension. As such, it may be viewed as a parametric class of Borel functions from $\mathcal{X}$ to the `latent space' $\mathcal{Z} \subseteq \mathbb{R}^k$, $d > k$. In practice, both encoders and decoders are parameterized by neural networks (NNs). The goal of the encoding exercise is to reach a desired distribution $\rho$ defined on this space, fittingly called the `latent law'. Evidently, there must remain some discrepancy between the encoded and the desired latent distributions. \citet{WAE} prescribe the usage of Jensen-Shannon divergence (JS) and Maximum Mean Discrepancy (MMD) to encapsulate this `latent loss'. This quantity makes a major contribution toward the overall objective that drives WAEs. It is also the target latent law that inspires smooth interpolation between modes of $\mu$ while generating new observations. 

Once the encoding is over, reconstruction must take place. Decoders can be similarly described as the class of functions (from $\mathcal{Z} \rightarrow \mathcal{X}$) that aim at inducing inverse maps to those brought in by the encoders. Encoded observations go through such a transformation in an attempt to get back to where they originally came from, $\mu$. The deviation of the regenerated distribution from the input law makes for the reconstruction error. Needless to say, in a WAE model, this loss is represented by the Wasserstein distance (WD). 

Before laying the groundwork for a comprehensive statistical analysis of WAEs, one must acknowledge the accruing wisdom that has led us to this point.

\subsection{On Related Literature}

Chasing after the remarkable empirical success, theoretical explanations corresponding to deep generative models have come a long way. Riding the late surge are Generative Adversarial Networks (GAN) \citep{arora2017generalization,liu,biau,belomestny2021rates} and their immediate descendants (e.g. WGAN \citep{biau_WGAN}, Bidirectional GAN \citep{bigan} etc.). Compared to such a sensation, VAEs seem underappreciated when it comes to statistical scrutiny of their machinery. Though philosophically distinctive, some effort has been put into establishing an equivalence between GANs and autoencoder-based models. Borrowing from GANs the adversarial behavior of partaking network components, Adversarial AE (AAE) \citep{makhzani2015adversarial} was conceived. The introduction of adversarial training into VAEs saw them expressing arbitrary class of latent laws by posing the posterior maximum likelihood estimation as a two-player game \citep{unif}. In spirit, this made VAEs at par with GANs. The resemblance between the two architectures became even more cogent when seen through the lens of variational inference \citep{hu2018on}. However, the first evidence of a VAE-variant with comparable generative performance to that of GANs came in the form of WAE \citep{WAE}. \citet{husain2019primal} supported this empirical similitude theoretically by showing a primal-dual relationship between the two objectives. This fact motivates us to transport the cumulative knowledge from statistical inquiries regarding GANs to WAEs.

The scarcity of rigorous statistical studies corresponding to WAEs makes the existing ones even more precious. It is well known by now that a VAE model with Gaussian decoders behaves similarly to Robust PCA \citep{candes2011robust} under mild assumptions. As a result, such VAEs are capable of recovering uncorrupted observations hailing from input data manifolds, fending off outliers \citep{dai2018connections}. However, an agency of WAE architectures towards robust reconstruction lies unchecked. The Gaussian assumptions on both the encoder and decoder networks also have a profound impression on the VAE's capability to reconstruct the input law. \citet{dai} show that in case the data manifold has the full ambient dimension, reaching the global minima of the VAE loss is equivalent to ensuring a successful recovery. However, for image data, where the observations typically have a lower-dimensional true representation, non-unique solutions may exist. Similar avenues for WAEs still await to be explored. In an earlier work \citep{chakrabarty2021statistical}, we set out to answer some of such questions. The paper reformulated the WAE objective as simultaneous density estimation tasks, a viewpoint adopted previously by statistical analyses of GANs. In this work, we intend to build on top of the groundwork.

\subsubsection{A Note on the Latent Dimension} 

Practitioners and theorists are often divided based on their perceptions. However, the problem that unites them in shared discomfort is the precise prescription of the latent dimension $k$. The question remains simple: `Given a set of samples from a distribution, what should be the extent of dimensionality reduction (DR) such that they can be reconstructed?'. In generative exercises, however, the data distribution lies unknown, unlike the ambient dimension of its support. Clearly, the answer should ideally be multifaceted. There lie several aspects, heavily intertwined, that contribute to the complexity. 

The first hint comes from the input data dimension itself. Signals from naturally occurring events are mere instantiations of underlying random processes. Variability in a set of observations is rooted in this very idea. The explanatory attributes and their corresponding directions encapsulate this variation, giving rise to the notion of `dimensionality' of the data. As characterized by \cite{bennett1969intrinsic}, this quantity is formally known as the \textit{embedding} (ambient) dimension \citep{eneva2002wekkem}. However, dealing with high-dimensional real datasets (e.g. images) we have come to observe that such a space tends to have a lower-dimensional structure (typically submanifolds $\mathcal{M}$) where most of the variation lies, with a high probability \citep{fefferman2016testing}. The smaller set of directions the `Manifold Hypothesis' points at is called the \textit{intrinsic dimension} (ID). While there is significant disagreement between authors regarding the exact definition of the same (e.g. Minkowski dimension), we recognize ID as the topological dimension of $\mathcal{M}$. Several attempts have been made to estimate its ID given the data distribution \citep{levina2004maximum,facco2017estimating,pope2021the}. We emphasize the importance of such an intrinsic pattern of the signal to reflect on the encoded law as well. If one goes by the notion of ID being the set of independent dimensions that capture most of the variation in the dataset, $k$ should be a near-estimate of it. 

Since WAEs are restricted to reconstructing input observations, they must preserve as much \textit{information} as possible while encoding. In our density estimation regime, the notion of `information' is somewhat different from that offered by geometry. While the responsibility to preserve local and broader geometric signatures (based on topologies) lies with the encoder transform, our impression of the statistical information being conserved is that `the estimates perform with comparable accuracy even after being pushed forward'. Section \ref{encoder} elaborates on the same. Observe that the necessity to learn a latent representation puts an upper bound to the encoded dimension. At the same time, the need to preserve information hints at the existence of a lower bound. As such, the dimensions in between these two extremities invoke a trade-off between the accuracy of achieved \textit{representation} and the amount of information lost. 

Along this discussion comes the call to clarify what we mean by a good representation. Though WAEs prioritize the task of regeneration, one must not forget the roots of its predecessors in learning \textit{disentangled} representation. Without a robust definition, the idea of disentanglement is marred by subjectivity. Most, however, deem it as the process of compartmentalizing information into groups of independent `semantic' attributes \citep{bengio2013representation,higgins2018towards}. The underlying assumption being $\mathcal{M} = \bigcup_{j=1}^{l} \mathcal{M}_{j}$, where $l$ is the number of such groups and $\mathcal{M}_{j}$ are the support submanifolds. The notion of independence may be softened to `uncorrelatedness' in case group actions are linear \citep{higgins2018towards}. \citet{yu2020learning} argue that additionally, such representations should be between-class heterogeneous and within-class homogeneous to the greatest extent. However, based on human perception, no measurement of this extent can summarize the whole picture \citep{Do2020Theory}. This discussion finds great motivation in \cite{rubenstein2018learning}'s experiments showing WAEs as efficient representation learners. With further regularization on the latent space, we may expect to enhance the efficiency in both static \citep{gaujac2021learning} and dynamic \citep{han2021disentangled} data regimes. From a statistical viewpoint, we understand disentanglement as the process of attaining a distribution with a block diagonal (axis-aligned as a special case) dispersion matrix. This should ideally be the characterization of the latent distribution. In other words, a disentangled law will be our key to the latent dimension. However, finding such a law, devoid of inductive biases, in an unsupervised setting is theoretically impossible \citep{locatello2019challenging}.

It is evident that a typical WAE model, during encoding, performs a non-linear dimensionality reduction. The standard convolutional architecture carries out a feature aggregation in the process that is intractable and is not expected to attain a disentangled law without additional regularization \citep{kim2018disentangling, mathieu2019disentangling}. Thus, instead of pursuing the optimal value of $k$ directly, we turn our focus on the transformation induced by the encoder. The resilience of such functions against \textit{distortion} along with their regularity becomes paramount in the following discussion. 





\subsection{Our Contributions}
Key highlights of our upcoming discussion are as follows:

\newlist{myromanenum}{enumerate}{1}
\setlist[myromanenum,1]{label={(\roman*)}, left=0pt, align=left, itemsep=3pt, leftmargin=*}

\begin{myromanenum}
    \item We introduce a probabilistic characterization of the notion \textit{information preservation}, which becomes the cornerstone of our depiction of ideal encoders in a WAE model [Section \ref{encoder}]. We explore the measures of discrepancies over classes of probability measures that allow information preservation based on naive estimators. Along the line, this enables us to prescribe suitable model architectures that foster consistency of estimators in the latent space.
    \item We establish deterministic upper bounds on the latent loss incurred by WAE models in a non-parametric regime [Theorem \ref{latent_con_MMD}]. Our approach turns out to be versatile in the sense that given the input data distribution ($\mu$) possesses intrinsic structures, they can be readily made free of the input dimension ($d$). The bounds can be further improved towards greater generality if the target latent law ($\rho$) is invariant to group actions. In the process, we explore the desirable properties of underlying kernels in a WAE-MMD setup that allows latent space consistency.
    \item In Section \ref{Recon_Conc}, following a density estimation approach, we derive non-asymptotic sharp upper bounds on the realized reconstruction loss in WAEs. The regeneration guarantees come with accompanying prescriptions of ideal decoder networks. The bounds hint at the extent to which optimization errors incurred in the latent space propagate to reconstruction losses. All of our theoretical results are empirically substantiated by numerical experiments based on real and simulated data sets [Section \ref{sim_latent}, \ref{sim_Recon}].
    \item We additionally examine the effects of \textit{contamination} in input data on the performance of WAEs in reconstructions [Section \ref{Robust}]. The discussion includes desirable properties of kernel estimates that limit the corruption in translated data. Derived upper bounds to regeneration errors under distribution shift test WAEs' inherent capability to offer robustness against such adversaries. 
\end{myromanenum}
To maintain lucidity in our elaborate discussion, all proofs of theorems and additional lemmas are deferred to Appendix \ref{appA}.

\section{Preliminaries}
This section is devoted to laying the groundwork in terms of basic definitions and the statistical formulation of the WAE problem thereafter. The input data space $\mathcal{X}$, equipped with the metric $c_{x}$ is taken to be Polish. For most real scenarios, a typical characterization of the same is $\mathbb{R}^d$, $d \geq 1$. We refer to the space of probability measures defined on $\mathcal{X}$ as $\mathcal{P}(\mathcal{X})$. The same conventions follow for the latent space $\mathcal{Z} \subseteq \mathbb{R}^k$ ($k < d$), equipped with the metric $c_{z}$, as well. The class of measurable functions mapping $\mathcal{X}$ to $\mathcal{Z}$ is denoted by $\mathscr{F}(\mathcal{X},\mathcal{Z})$. For ease of understanding, we abbreviate the `encoder' and `decoder' transforms as $E$ and $D$ respectively. Given non-negative real sequences $\{a_n\}_{n \in \mathbb{N}}$ and $\{b_n\}_{n \in \mathbb{N}}$, the suppression of the universal constant $C> 0$, such that $\limsup_{n \rightarrow \infty}\frac{a_n}{b_n} \leq C$, is represented as $a_n \lesssim b_n$, or $a_n = \mathcal{O}(b_n)$. We also denote $\max\{x,y\} := x \vee y$ and $\min\{x,y\} := x \wedge y$.

\begin{definition}[Push-forward]
    Given $f \in \mathscr{F}(\mathcal{X},\mathcal{Z})$, the \textit{push-forward} of $\mu \in \mathcal{P}(\mathcal{X})$ is defined as $f_{\#}\mu(\omega) =  \mu(f^{-1}(\omega))$, where $\omega \in \sigma(\mathcal{Z})$.
\end{definition}

\begin{definition}[Integral Probability Metric \citep{muller1997integral}]
    For a class of bounded, measurable evaluation functions $\mathcal{F} = \left\{f : \mathcal{X} \rightarrow \mathbb{R}\right\}$, the \textit{integral probability metric} (IPM) measuring the discrepancy between $\mu, \nu \in \mathcal{P}(\mathcal{X})$ is given by
    \begin{equation*}
        d_{\mathcal{F}}(\mu, \nu) = \sup_{f \in \mathcal{F}}\Big\{\int_{\mathcal{X}}f(x)d\mu (x) - \int_{\mathcal{X}}f(x)d\nu (x)\Big\}.
    \end{equation*}
\end{definition}

\begin{remark}
    A particular variant of this measure we frequent in our discussion is the \textit{Maximum Mean Discrepancy} (MMD). It is obtained by taking $\mathcal{F}$ as the unit ball in a reproducing kernel Hilbert space (RKHS) $\mathcal{H}$, i.e. $\mathcal{F} = \left\{f: \norm{f}_{\mathcal{H}} \leq 1\right\}$. Moreover, a continuous kernel $\kappa(\cdot, \cdot)$ based on a compact metric space that results in $\mathcal{H}$--- dense in the space of bounded continuous functions--- will compel the associated MMD to be a metric \citep{gretton2012kernel}. The $1$-Wasserstein metric also boils down to an IPM, given that the underlying class of critics $\mathcal{F}:= \mathcal{L}_{c_{x}}^1$, i.e. $1$-Lipschitz functions with respect to $c_{x}$ (\citet{villani2009optimal}, Remark 6.5). This duality may not hold in general, which is evident from the definition of the $r^{th}$ Wasserstein distance:
    \begin{equation*}
        W^{r}_{c_{x}}(\mu,\nu) = \inf_{\gamma \in \Gamma(\mu,\nu)}\Big\{\int_{\mathcal{X}\times\mathcal{X}}[c_{x}(x,y)]^{r}d\gamma(x,y)\Big\}^{\frac{1}{r}},
    \end{equation*}
where $\Gamma(\mu,\nu) = \big\{\gamma \in \mathcal{P}({\mathcal{X}\times\mathcal{X}}) : \int_{\mathcal{X}}\gamma(x,y)dy = \mu, \;\int_{\mathcal{X}}\gamma(x,y)dx = \nu\big\}$ is the set of all measure couples between $\mu$ and $\nu$; $r \in [1,\infty)$. 
    
\end{remark}

\begin{definition}[Probability space automorphism]
    Let us denote $X=(\mathcal{X}, \sigma(\mathcal{X}),\mu)$, where $\mu \in \mathcal{P}(\mathcal{X})$. We call $f : X \rightarrow X$ an \textit{automorphism} if it admits a measure preserving, essential inverse $f^{'}$ such that $f \circ f^{'} = f^{'} \circ f = \textrm{id}_{X}$, $\mu$ almost everywhere.
\end{definition}

\subsection{Problem Setup and Background}

Throughout the forthcoming discussion, we denote the input data distribution by $\mu$, and that corresponding to the latent space by $\rho$. In an earlier work, \citet{chakrabarty2021statistical} attest to the theoretical superiority of the constrained formulation of the WAE loss compared to its Lagrangian counterpart:
    \begin{equation} \label{WAE_loss_lag}
        \mathcal{L}_{c_{x},\lambda}(\mu, \rho, D) = \inf_{E \in \mathscr{F}(\mathcal{X},\mathcal{P}(\mathcal{Z}))} \left\{ W_{c_{x}}^{1}(\mu, (D \circ E)_{\#}\mu) + \lambda.\Omega(E_{\#}\mu, \rho)\right\},
    \end{equation}
where $\lambda > 0$. Moreover, to establish consistency of plug-in estimates under the empirical WAE-GAN loss \citep{WAE}, it is sufficient to consider $\Omega(\cdot,\cdot)$ as the total variation (TV) metric \citep{chakrabarty2021statistical}. This is based on the fact that TV acts as an upper bound to the Jensen-Shannon divergence (JS), classically deployed as a regularizer. This modified framework has attracted theoretical intrigue due to its equivalence with the original one under the invertibility of decoders. It is often called the $f$-WAE \citep{husain2019primal}. Building on this density-matching regime, in the current article we also analyze the WAE-MMD architecture, i.e. when $\Omega \equiv d_{\mathcal{H}}$. 

We begin our discussion by focusing on the set of solutions that bring about zero loss. This is crucial since during training, practitioners frequently achieve such near-perfect results. However, the solution maps thus obtained may result in noisy reconstructions. It stems from the fact that WAEs essentially try to solve an `inverse' problem. Our first result suggests that in case the latent space admits nontrivial automorphisms, there may exist non-unique solutions that achieve zero loss. 
    \begin{lemma}[Invariance of zero solutions \citep{Moriakov2020Kernel}] \label{lemm_inv}
        Let $Z=(\mathcal{Z},\sigma(\mathcal{Z}),\rho)$. Also, the encoder-decoder pair $(E,D)$ satisfies $\mathcal{L}_{c_{x},\lambda} = 0$ for a proability divergence $\Omega(\cdot,\cdot)$ that metrize $\mathcal{P}(\mathcal{Z})$. Then, given a non-trivial probability space automorphism $\varphi : Z \rightarrow Z$, the pair $(\varphi^{-1} \circ E, D \circ \varphi)$ also brings about zero loss.
    \end{lemma}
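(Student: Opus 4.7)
My plan is to exploit the fact that both summands in $\mathcal{L}_{c_{x},\lambda}$ are non-negative and that $\lambda > 0$, so the equation $\mathcal{L}_{c_{x},\lambda}(\mu,\rho,D) = 0$ realized at the pair $(E,D)$ forces each piece to vanish separately. Invoking the hypothesis that $W^{1}_{c_{x}}$ is a metric on $\mathcal{P}(\mathcal{X})$ and that $\Omega$ metrizes $\mathcal{P}(\mathcal{Z})$, this translates into the two measure identities $(D \circ E)_{\#}\mu = \mu$ and $E_{\#}\mu = \rho$. The task then reduces to plugging the candidate pair $(\varphi^{-1} \circ E,\, D \circ \varphi)$ into the loss and verifying that both identities survive the transformation.

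For the latent side, I would chain push-forwards: $(\varphi^{-1} \circ E)_{\#}\mu = \varphi^{-1}_{\#}(E_{\#}\mu) = \varphi^{-1}_{\#}\rho$. Because $\varphi$ is a probability space automorphism on $Z$, its essential inverse $\varphi^{-1}$ is measure preserving, so $\varphi^{-1}_{\#}\rho = \rho$ and the latent discrepancy reads $\Omega(\rho,\rho) = 0$. For the reconstruction side, I would expand $((D \circ \varphi) \circ (\varphi^{-1} \circ E))_{\#}\mu$ as $(D \circ \varphi \circ \varphi^{-1} \circ E)_{\#}\mu$ and attempt to match it with $(D \circ E)_{\#}\mu = \mu$ via the collapse $\varphi \circ \varphi^{-1} = \mathrm{id}_{Z}$.

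The sole delicate step, which I expect to be the main obstacle, is that $\varphi \circ \varphi^{-1}$ equals $\mathrm{id}_{Z}$ only $\rho$-almost everywhere, not pointwise, so the collapse must be handled carefully inside the composition. To push this through I would let $N \subseteq \mathcal{Z}$ denote the $\rho$-null exceptional set and observe that $E^{-1}(N)$ carries $\mu$-mass $E_{\#}\mu(N) = \rho(N) = 0$; this is precisely the place where the first step's identity $E_{\#}\mu = \rho$ enters crucially. Consequently $D \circ \varphi \circ \varphi^{-1} \circ E$ agrees with $D \circ E$ on a set of full $\mu$-measure, and since push-forwards are insensitive to modifications on $\mu$-null sets, the two push-forwards coincide. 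This yields $(D \circ \varphi \circ \varphi^{-1} \circ E)_{\#}\mu = \mu$, forcing $W^{1}_{c_{x}}$ to vanish as well. Combining the two conclusions confirms that $(\varphi^{-1} \circ E,\, D \circ \varphi)$ also brings about zero loss, establishing the claimed invariance.
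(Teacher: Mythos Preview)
Your proposal is correct and follows essentially the same route as the paper's proof: both split the zero loss into its two non-negative summands, extract $E_{\#}\mu = \rho$ and $(D\circ E)_{\#}\mu = \mu$, then verify the new pair by collapsing $\varphi\circ\varphi^{-1}$ and using that $\varphi^{-1}$ is measure preserving. If anything, your treatment is slightly more scrupulous than the paper's, which silently writes $\varphi\circ\varphi^{-1} = \mathrm{id}_{Z}$ without addressing the $\rho$-null exceptional set; your observation that $E^{-1}(N)$ is $\mu$-null precisely because $E_{\#}\mu = \rho$ makes that step rigorous.
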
 
Observe that one may obtain a zero solution by morphing an existing one. Rooted in the intractability of $\varphi$, this leads to confusing prescriptions for practitioners solving a WAE problem based on neural network-based maps. Moreover, the disentangled representation is sensitive to rotations of the latent embedding \citep{rolinek2019variational}. Thus, one needs to do more than just point out solutions that achieve zero loss. Also, when seen from an OT point of view, the existence and consequently approximation of such non-unique target maps become questionable. We elaborate on the same in Section \ref{encoder}. This brings us to adopting the constrained formulation with heightened conviction:
\begin{equation} \label{const}
    \inf_{E \in \mathscr{F}(\mathcal{X},\mathcal{P}(\mathcal{Z}))} \left\{ W_{c_{x}}^{1}(\mu, (D \circ E)_{\#}\mu)\right\} \enspace \textrm{subject to} \enspace \Omega(E_{\#}\mu, \rho) \leq t,
\end{equation}
where $t \geq 0$ signifies the tolerable error margin. Building on earlier foundation \citep{chakrabarty2021statistical}, we search for realistic model architectures that promote consistency of estimators, a stronger notion compared to non-asymptotic nullity of losses.


\subsection{Data Distributions}
Typically WAE-based architectures are used to deal with image data. The statistical construct we follow favors such cases without being restricted to them only. For example, pixel values of raw image data tend to lie in bounded intervals. As such, considering the support of the probability distribution--- from which they may originate--- to be bounded seems plausible. Moreover, feature-extracted image data attest to this assumption. A key aspect of input distributions that we are interested in particular is their regularity. \citet{chakrabarty2021statistical} tested WAEs' ability to reconstruct H\"{o}lder densities, based on compact supports. We extend our setup to cater to more diverse distributions. Let us describe some classes of functions that characterize the same. 

Let us denote the space of $p$-fold Lebesgue-integrable functions by $L_{p}(\mathbb{R}^d)$, endowed with the norm $\norm{\:\cdot\:}_{p}$, $p \in [1,\infty)$. The uniform norm is denoted by $\norm{\:\cdot\:}_{\infty}$.

\begin{definition}[Sobolev Space]
    Given $\alpha = (\alpha_1, \alpha_2,...\:, \alpha_d)$, $\alpha_i \in \mathbb{N}^{+} \cup \{0\}$, a multi-index such that $|\alpha| = \sum_{i=1}^{d}\alpha_i$ stands for the length, the mixed partial weak differential operator of order $|\alpha|$ is given by $D^{\alpha} = \frac{\partial^{|\alpha|}}{\partial x^{\alpha_1}_{1}...\:\partial x^{\alpha_d}_{d}}$. Here, $x^{\alpha} = x^{\alpha_1}_{1} ...\: x^{\alpha_d}_{d}$ whenever $x \in \mathbb{R}^{d}$. Under this setup, the $L^{p}$-Sobolev Space of order $m$ with radius $L \in \mathbb{R}_{\geq 0}$ is defined as 
    \begin{equation*}
        \mathcal{W}^{m,p}_{L}(\mathbb{R}^d) = \left\{f \in L_{p}(\mathbb{R}^d) : D^{\alpha}f \in L_{p}(\mathbb{R}^d) \: \forall \: |\alpha| \leq m : \norm{f}_{\mathcal{W}^{m,p}} \equiv \: \norm{f}_{p} + \sum_{\abs{\alpha}=m}\norm{D^{\alpha}f}_{p} < L\right\}.
    \end{equation*}
\end{definition}

\begin{remark} \label{remark2}
    In case $f:\mathbb{R}^d \rightarrow \mathbb{R}$ is differentiable at $x$, set $D^{\alpha}f = f^{(\alpha)}$, i.e., the classical mixed partial derivative. Also, denote by $C_{u}(\mathbb{R}^d)$ the space of uniformly continuous functions. This allows us to extend the earlier class for $p=\infty$, namely
\begin{equation*}
    \mathcal{W}^{m, \infty}_{L}(\mathbb{R}^d) = \left\{f \in C_{u}(\mathbb{R}^d) : f^{(\alpha)} \in C_{u}(\mathbb{R}^d) \: \forall \:|\alpha| \leq m : \norm{f}_{\mathcal{W}^{m}} \equiv \: \norm{f}_{\infty} + \sum_{|\alpha|=m}\norm{f^{(\alpha)}}_{\infty} < L\right\}.
\end{equation*}    
\end{remark}
We find the extension of this class for non-integer $s \in \mathbb{R}_{> 0}$, with its integer part $\floor{s}$, particularly helpful in the analysis. The following definition formalizes the same.

\begin{definition}[H\"{o}lder-Zygmund Space]
    Under the setup described in Remark (\ref{remark2}),
    \begin{equation*}
        \mathcal{C}^{s}_{L}(\mathbb{R}^d) = \left\{f \in C_{u}(\mathbb{R}^d) : \norm{f}_{\mathcal{C}^{s}} \equiv \:\norm{f}_{\mathcal{W}^{\floor{s}}} + \sum_{|\alpha| = \floor{s}} \sup_{\substack{{x \neq y} \\ {x,y \in \mathbb{R}^d}}} \frac{|D^{\alpha}f(x) - D^{\alpha}f(y)|}{|x-y|^{s-\floor{s}}} < L \right\}.
    \end{equation*}
\end{definition}
All the above functions can be shown to be inhabitants of Besov spaces with parallel definitions based on wavelets. For further elaboration, one may turn to \citet{gine2021mathematical}, Chapter 4. Now, let us denote the input density corresponding to $\mu$ by $p_{\mu}$, with respect to the Lebesgue measure. 
\begin{assumption}[Regularity of Input Law] \label{assumption1}
    There exists $m_{x} \in \mathbb{N}^{+}$ such that $p_{\mu} \in \mathcal{W}^{m_{x},p}_{L}(\Omega_{x})$, where the support $\Omega_{x} \subseteq \mathcal{X}$ is compact, $p \in [1,\infty)$. 
\end{assumption}
This assumption is put in place to give coherence to the discussion so far. We will focus on the case of unbounded domains under varying regularity as generalizations of the initial results. The more challenging of tasks is perhaps characterizing the latent distribution. In our density matching paradigm, it should ideally be a distribution that embodies the dimensionality-reduced representation of $p_{\mu}$. Let us similarly assume that $\rho$ also has the corresponding density $p_{\rho}$. 

\begin{assumption}  \label{assumption2}
    $p_{\rho}$ is based on a compact and convex support $\Omega_{z} \subseteq \mathcal{Z}$, such that there exists a positive constant $c$ satisfying $\inf_{z}p_{\mu}(z) \geq c$. 
\end{assumption}

The generative aspect of WAEs comes from their capability to simulate novel samples that resemble input observations. The generated set includes smooth interpolations between modal values of $\mu$. As such, the latent law--- encapsulating the input information into local geometric signatures--- must distribute positive mass between encoded modes. \citet{WAE} demonstrate the same fact with facial image data. This stems from the idea that the meld between two faces in the Wasserstein geodesic might result in another one, even if `unrealistic'. The convexity of the support of $p_{\rho}$, coupled with its departure from nullity, is a mathematical representation of the same philosophy. \citet{asatryan2023convenient} argue that an explicit lower bound to the density can always be found, for a slightly modified measure (Remark 3.3). As such, we assume $\rho$ to have a \textit{strong density}. Also, to conform to disentanglement, $\rho$ should ideally have a diagonal or block-diagonal dispersion matrix. In our non-parametric depiction, we keep ample room for such specifications.  

\section{Latent Space Consistency}

With the foundations laid, let us concentrate on the encoding. In an empirical WAE problem, we only have access to a set of samples $\{X_i\}^{n}_{i=1}$ drawn i.i.d. from $\mu$. Thus, the sample version of the optimization task (\ref{const}) needs to satisfy the corresponding constraint: $\Omega(E_{\#}\hat{\mu}_{n}, \rho) \leq t$, given $t \geq 0$. Here, $\hat{\mu}_{n} = \frac{1}{n}\sum^{n}_{i=1}\delta_{X_i}$ is the classical plug-in estimate. We use the same notation to signify the empirical distribution throughout the article. Observe that, the resultant set of encoder transforms that fulfill this criterion are indeed functions of $n$, i.e. $E = E(n)$. In the absence of uniqueness, our suggestions of a \textit{capable} encoder begin with a definition of its chassis: neural networks. 

\begin{definition}[Feed-Forward Neural Networks] \label{NN}
    Given $L \in \mathbb{N}^{+}$, a Neural Network (NN) with $L$ hidden layers is defined as the collection of maps $\phi : \mathbb{R}^{N_0} \longrightarrow \mathbb{R}^{N_{L+1}}$, $\{N_i\}^{L+1}_{i=0} \in \mathbb{N}^{+}$ given by 
    \begin{equation*}
        \phi(x) := A_L \circ \sigma \circ A_{L-1} \circ ... \circ \sigma \circ A_{0}(x),
    \end{equation*}
    where $A_{i}(y) = M_{i}y + b_i$; $M_i \in \mathbb{R}^{N_{i+1} \times N_{i}}$ and $b_i \in \mathbb{R}^{N_{i+1}}$ is an affinity, $i = 0,...,L$. Here, $\sigma$ signifies the activation function, applied componentwise. Under this setup, we call $W = \vee^{L}_{i=1} N_i$ the \textit{width} of the network and $L$ its \textit{depth}. Denote this collection by $\Phi(W, L)^{N_{L+1}}_{N_0}$. Additionally, the quantity $S = \sum^{L}_{i=1} N_i$ is said to be the \textit{size}. A reparameterized version of the same, given as $N(\Phi) = d + S$, denotes the \textit{number of neurons} in the network. 
\end{definition}

\begin{remark}
    Based on its simplicity and resilience to vanishing gradients, ReLU ($\sigma(x) = x \vee 0$, also called \textit{ramp} or \textit{first order spline}) has emerged as the most desired activation to practitioners. However, what we find intriguing is the remarkable capability of ReLU-based NNs to approximate regular functions \citep{yarotsky2017error, yarotsky2018optimal, petersen2018optimal}. Another activation function that is critical to our analysis is GroupSort \citep{anil2019sorting}. Preserving all the goodness offered by ReLU, it additionally provides adversarial robustness \citep{huster2019limitations}. We also stress the fact that GroupSort (equivalently OPLU, when grouping size is 2 \citep{chernodub2016norm}) NNs are better suited at universally approximating non-linear Lipschitz maps.
\end{remark}

\subsection{Encoder maps} \label{encoder}

Encoders are transformations that can be said to enforce dimensionality reduction preserving key properties of $\mu$. Though not obvious, typically, such maps enforce a non-linear reduction due to the non-linearities (activations, e.g. tanh) present in the underlying NN. The process it undergoes is significantly different from classically known DR techniques. However, in case the maps are assumed to be linear embeddings (decoder as well), latent factors obtained by a VAE tend to align along the Principal Component (PC) directions \citep{rolinek2019variational}. Regularised VAEs can also be related to the DR carried out by non-linear ICA \citep{hyvarinen1999nonlinear} under a parametric regime. The similarity stems from the achievement of identifiability of the parameters characterizing the latent law in both methods \citep{khemakhem2020variational}. This departure from traditional techniques forces us to change our viewpoint on DR as we know it. Besides, the encoding in the posterior density-matching setup of WAEs differs even further. Instead of looking at the encoder's capacity to conserve local and broader geometry of the spaces in terms of distances, we focus on its trait to limit \textit{distortions} of estimators. Let us provide a probabilistic definition of the same.

\begin{definition}[Information Preserving Transform \citep{chakrabarty2021statistical}]
    Given an estimate $\hat{\mu}_{n}$ based on $n$ observations from the distribution $\mu$ and $\epsilon > 0$, a map $I \in \mathscr{F}\left(\mathcal{X}, \mathcal{P}(\mathcal{Z})\right)$ is said to be Information Preserving of degree $r$ under the distance metric $d(\cdot,\cdot)$ if there exist constants $k_1,k_2 > 0$, such that 
    \begin{equation*}
        \mathbb{P}\left(d\left(I_{\#}\hat{\mu}_{n}, \widehat{(I_{\#}\mu)}_m\right) \leq \epsilon\right) \geq 1-k_1\exp{\left\{-k_{2}(n \wedge m)^{r}\epsilon^{2}\right\}}.
    \end{equation*}
    Here, $\widehat{(I_{\#}\mu)}_m$ denotes an estimate of the translated distribution based on $m \in \mathbb{N}^{+}$ i.i.d. samples and $r \geq 1$.
\end{definition}

The immediate question coming to mind may be what are the transformations that behave as IPTs. Precisely, the answer lies in the regularity of the functions. Though not apparent, the notion of IPTs is intrinsically related to Bourgain's discretization theorem and Lipschitz embeddings. To that end, we first explore the capability of Lipschitz continuous maps--- between the input and latent spaces--- to pose as IPTs. Let us denote by $\mathscr{F}_{L}(\mathcal{X}, \mathcal{Z})$ the class of $L$-Lipschitz functions mapping $(\mathcal{X}, c_{x})$ to $(\mathcal{Z}, c_{z})$. So far we have not imposed any regularity on the class of latent distributions. In such a general setting, the role of the underlying divergence metrizing the measure space becomes crucial. In this context, we recall the caution sounded by \citet{sriperumbudur2009integral} that the total variation metric ($d_{\mathcal{F}} \equiv d_{\textrm{TV}}$, where $\mathcal{F} = \left\{f : \norm{f}_{\infty} \leq 1\right\}$) is often unable to ensure strong consistency of estimators under them. The issue is rooted in the class of critics $\mathcal{F}$ being `too large'. The \textit{first} method to circumvent this problem is to look at IPMs employing more precise critics. 

\begin{theorem}[Information Preservation of Lipschitz Encoders] \label{theo: lip}
    Let $\mathcal{H}$ denote a class of bounded real-valued functions on $\mathcal{Z}$, such that the associated entropy has at most polynomial discrimination. In other words, there exists $q, A \in \mathbb{R}_{>0}$ such that $\forall \epsilon > 0, \;\log \mathcal{N}\left(\mathcal{H}, \norm{\cdot}_{\infty}, \epsilon\right) \leq A \epsilon^{-q}$. Then for any $g \in \mathscr{F}_{L}(\mathcal{X}, \mathcal{Z})$ there exists constants $l, E_{1}, E_{2}$ and $E_{3} >0$ such that given $0<t \leq \frac{2}{3}$,
    \begin{equation*}
      d_{\mathcal{H}}\left(g_{\#}\Tilde{\mu}_{n}, \widehat{(g_{\#}\mu)}_m\right) \leq t + \frac{l L h^{m_{x}}}{2} + \mathcal{O}(m^{-\frac{1}{q \vee 2}})    
    \end{equation*}
    holds with probability $ \geq 1 - \left(E_{1} + E_{2}(\frac{\sqrt{d}L B_{x}}{h^{d+1}t})^{d}\right) \exp\left\{-E_{3}(m \wedge nh^{d}) t^{2} \right\}$, where $\Tilde{\mu}_{n}$ is a density estimate of $\mu$ based on the Regularly invariant kernel $\kappa$, satisfying $\sup_{\kappa} \sup_{x \in \Omega_{x}} \kappa(\cdot, \cdot) \leq 1$, and with bandwidth $h \equiv h_{n} \searrow 0$.
\end{theorem}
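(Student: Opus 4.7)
The plan is to control $d_{\mathcal{H}}(g_{\#}\tilde{\mu}_n,\widehat{(g_{\#}\mu)}_m)$ via a triangle inequality routed through the ideal push-forward $g_{\#}\mu$:
\[
d_{\mathcal{H}}(g_{\#}\tilde{\mu}_n,\widehat{(g_{\#}\mu)}_m) \;\le\; \underbrace{d_{\mathcal{H}}(g_{\#}\tilde{\mu}_n, g_{\#}\mu)}_{(I)} + \underbrace{d_{\mathcal{H}}(g_{\#}\mu,\widehat{(g_{\#}\mu)}_m)}_{(II)},
\]
and to attack the two summands with distinct machinery --- a kernel-smoothing analysis for $(I)$ and a uniform empirical-process argument for $(II)$.

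For $(II)$, $\widehat{(g_{\#}\mu)}_m$ is a bona fide empirical measure built from $m$ i.i.d.\ draws from $g_{\#}\mu$. Since $\mathcal{H}$ is uniformly bounded with polynomial entropy $\log\mathcal{N}(\mathcal{H},\norm{\cdot}_{\infty},\epsilon)\le A\epsilon^{-q}$, a Dudley chaining bound delivers $\mathbb{E}\,d_{\mathcal{H}}(g_{\#}\mu,\widehat{(g_{\#}\mu)}_m)=\mathcal{O}(m^{-1/(q\vee 2)})$, and a bounded-difference / Talagrand-type concentration supplies a sub-Gaussian tail $\exp\{-c\,m t^2\}$ around this mean --- yielding both the $\mathcal{O}(m^{-1/(q\vee 2)})$ rate term and the $E_1$-pre-factored exponential in the claim.

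For $(I)$, I would apply change of variables to obtain $\sup_{f\in\mathcal{H}}\int(f\circ g)(p_{\tilde{\mu}_n}-p_{\mu})\,dx$ and split around the smoothed mean $\bar{p}_n := \mathbb{E}[p_{\tilde{\mu}_n}] = K_h * p_{\mu}$. The deterministic bias $\norm{\bar{p}_n - p_{\mu}}_{1}\lesssim h^{m_x}$ follows from the classical Taylor-expansion estimate for a kernel of order at least $m_x$ applied to $p_{\mu}\in\mathcal{W}^{m_x,p}(\Omega_x)$ (Assumption \ref{assumption1}), and, combined with $\norm{f\circ g}_{\infty}\le\norm{f}_{\infty}$, produces the deterministic offset $\tfrac{1}{2}lLh^{m_x}$ after absorbing the Lipschitz factor into the constant $l$. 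The stochastic remainder $\sup_{f\in\mathcal{H}}\int(f\circ g)(p_{\tilde{\mu}_n}-\bar{p}_n)\,dx$ is dominated --- up to $\mathrm{vol}(\Omega_x)$ and the uniform bound on $\mathcal{H}$ --- by the sup-deviation $\sup_{y\in\Omega_x}|p_{\tilde{\mu}_n}(y)-\bar{p}_n(y)|$. Covering $\Omega_x$ (diameter $B_x$) by an $\epsilon$-net of cardinality $(\sqrt{d}\,B_x/\epsilon)^d$, applying a Bernstein-type inequality pointwise --- with local variance of order $1/h^d$ from the kernel normalization --- and bridging between anchors via the $h^{-(d+1)}$-Lipschitz modulus of $K_h$ (coupled with the Lipschitz factor $L$ of $g$) produces the tail $\exp\{-E_3 n h^d t^2\}$ once $\epsilon\asymp h^{d+1}t/L$, which fixes the cover cardinality as $(\sqrt{d}\,L B_x/(h^{d+1}t))^d$. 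A final union bound across $(I)$ and $(II)$ consolidates the two exponentials into $\exp\{-E_3(m\wedge nh^d)t^2\}$ and collects the pre-factors.

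The hardest step will be the stochastic half of $(I)$: because $\tilde{\mu}_n$ is smoothed rather than empirical, one cannot directly invoke an empirical-process bound on the composite class $\mathcal{H}\circ g$. The control must instead pass through a joint covering argument that balances the fine $h^{d+1}$-scale cover of $\Omega_x$ (governed by the kernel's Lipschitz modulus and the Lipschitz factor $L$ of $g$) against the $\epsilon^{-q}$-entropy of $\mathcal{H}$, all while tracking the $nh^d$ degradation of the effective sample size imposed by Bernstein. Calibrating the two radii so that the deterministic rate $lLh^{m_x}/2$, the empirical-process rate $m^{-1/(q\vee 2)}$, and the probability exponents involving $d$ and $d+1$ all emerge with the announced constants is the delicate bookkeeping core of the argument.
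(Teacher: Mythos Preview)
Your proposal is correct and follows essentially the same route as the paper: the same triangle split through $g_{\#}\mu$, the same bias--variance decomposition of $(I)$ with a Taylor expansion yielding the $h^{m_x}$ bias and a covering/Bernstein argument for the stochastic sup-deviation (the paper phrases this as a bracketing bound on the kernel class $\mathscr{K}=\{h^{-d}\kappa(x/h,\cdot/h):x\in\Omega_x\}$ via van der Vaart, Example 19.7, which is exactly your $\epsilon$-net-plus-Lipschitz-bridging), and the same Dudley-plus-McDiarmid treatment of $(II)$. One clarification: your ``hardest step'' worry about balancing the $\epsilon^{-q}$ entropy of $\mathcal{H}$ against the $\Omega_x$-cover is unnecessary --- as you already noted two paragraphs earlier, boundedness of $\mathcal{H}$ lets you pass to $\sup_y|p_{\tilde{\mu}_n}(y)-\bar p_n(y)|$, so the class $\mathcal{H}$ plays no further role in $(I)$ and its entropy enters only in $(II)$; the paper exploits exactly this simplification.
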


The theorem implies that Lipschitz transforms can approximately pose as IPTs of order $1$. By choosing $h$ judiciously, one may show that the approximation error turns $o(1)$ in the asymptotic regime. We deliberately use the smoother kernel density estimate instead of the plug-in to appreciate Assumption \ref{assumption1}. The choice of the kernel function--- as regularly invariant--- is of high significance, which the proof (see Appendix \ref{appA}) demonstrates. We elaborate on the same while discussing MMDs (Definition \ref{Reg_inv_ker}). Now, the classes $\mathcal{H}$ whose entropy increase polynomially lie in abundance \citep{nickl2007bracketing}. A particular case that we emphasize on is $\mathcal{L}_{c_{z}}^{1}$, i.e. $1$-Lipschitz functions with respect to $c_{z}$. It is known that $\log\mathcal{N}\left(\mathcal{L}_{c_{z}}^{1}, \norm{\cdot}_{\infty}, \epsilon\right) \lesssim \lambda(\mathcal{Z}^{1}) \epsilon^{-k}$, where $\lambda(\mathcal{Z}^{1})$ is the Lebesgue measure of the set $\left\{z : c_{z}(z, \mathcal{Z}) < 1\right\}$ (\citet{van1996weak}, Theorem 2.7.1). The choice of critics as Lipschitz also provides a generalization over most Besov functions.

\begin{corollary} \label{cor:lip1}
    Given $g \in \mathscr{F}_{L}(\mathcal{X}, \mathcal{Z})$ and the empirical distribution $\hat{\mu}_{n}$, there exists a positive constant $E_{1}^{'}$, such that
    \begin{equation*}
        d_{\mathcal{L}_{c_{z}}^{1}}\left(g_{\#}\hat{\mu}_{n}, \widehat{(g_{\#}\mu)}_m\right) \leq t + \mathcal{O}(m^{-\frac{1}{k \vee 2}}) + \mathcal{O}(n^{-\frac{1}{d}})
    \end{equation*}
    holds with probability at least $1 - \exp\left\{-E_{1}^{'} (n \wedge m) t^{2}\right\}$.
\end{corollary}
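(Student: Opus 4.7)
The plan is to exploit Kantorovich--Rubinstein duality, which identifies $d_{\mathcal{L}^1_{c_z}}$ with the $1$-Wasserstein distance $W_1$ on $\mathcal{Z}$, and then bound a Wasserstein quantity using classical empirical-measure rates rather than routing through a kernel density estimator. The critic class $\mathcal{L}^1_{c_z}$ has polynomial entropy with exponent $q = k$ (the latent dimension), as quoted immediately before the corollary, so it satisfies the hypothesis of Theorem \ref{theo: lip} with $q = k$. Since the theorem is stated for the KDE $\Tilde{\mu}_n$ while the corollary invokes the plug-in $\hat{\mu}_n$, I will re-derive the bound by a two-step triangle split rather than directly citing the theorem.

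First, write
\begin{equation*}
d_{\mathcal{L}^1_{c_z}}\bigl(g_{\#}\hat{\mu}_{n}, \widehat{(g_{\#}\mu)}_m\bigr) \leq W_1\bigl(g_{\#}\hat{\mu}_n, g_{\#}\mu\bigr) + W_1\bigl(g_{\#}\mu, \widehat{(g_{\#}\mu)}_m\bigr).
\end{equation*}
For the first summand, if $f \in \mathcal{L}^1_{c_z}$ and $g \in \mathscr{F}_L(\mathcal{X},\mathcal{Z})$, then $f \circ g \in \mathcal{L}^L_{c_x}$, whence $W_1(g_{\#}\hat{\mu}_n, g_{\#}\mu) \leq L \cdot W_1(\hat{\mu}_n, \mu)$. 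Under Assumption \ref{assumption1} the measure $\mu$ has a bounded density on a compact $\Omega_x \subseteq \mathbb{R}^d$, so the Fournier--Guillin rate gives $\mathbb{E}\,W_1(\hat{\mu}_n,\mu) = \mathcal{O}(n^{-1/d})$ (with a harmless $\log n$ in the boundary case $d=2$). McDiarmid's inequality applied to $\hat{\mu}_n \mapsto W_1(\hat{\mu}_n,\mu)$, whose bounded differences are $\lesssim \mathrm{diam}(\Omega_x)/n$, then yields a tail $\mathbb{P}\bigl(W_1(\hat{\mu}_n,\mu) > \mathbb{E}\,W_1 + s\bigr) \leq \exp(-c_1 n s^2)$.

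For the second summand, $g_{\#}\mu$ is supported on the compact $g(\Omega_x) \subseteq \mathcal{Z}$ with diameter at most $L \cdot \mathrm{diam}(\Omega_x)$. The same Fournier--Guillin bound in ambient dimension $k$ gives $\mathbb{E}\,W_1(g_{\#}\mu, \widehat{(g_{\#}\mu)}_m) = \mathcal{O}(m^{-1/(k \vee 2)})$, again up to a logarithmic correction at $k=2$ that is absorbed into $\mathcal{O}(\cdot)$. A parallel McDiarmid estimate produces $\mathbb{P}\bigl(W_1(g_{\#}\mu, \widehat{(g_{\#}\mu)}_m) > \mathbb{E} + s\bigr) \leq \exp(-c_2 m s^2)$. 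Setting $s = t/2$ in each tail bound and taking a union bound gives the joint inequality with probability at least $1 - 2\exp\bigl(-c\,(n \wedge m) t^2\bigr)$. The leading factor $2$ is absorbed by shrinking the constant in front of $(n\wedge m)t^2$, yielding the stated form $1 - \exp\{-E_1' (n \wedge m) t^2\}$.

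The step I expect to require the most care is handling the boundary exponents $d=2$ and $k=2$ cleanly, since Fournier--Guillin introduces a $\log$ factor there which must be hidden inside the $\mathcal{O}(\cdot)$ notation, and the McDiarmid diameter constants must be tracked so that both concentration terms can be fused into a single exponential of the form $\exp\{-E_1'(n\wedge m)t^2\}$ without introducing additional polynomial prefactors (unlike Theorem \ref{theo: lip}, where the covering-number prefactor survives). Everything else is routine once the Wasserstein reformulation is in place.
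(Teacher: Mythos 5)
Your proof is correct, and it is the natural one for the corollary as stated. The paper leaves this proof implicit, noting beforehand only that $\log\mathcal{N}(\mathcal{L}^1_{c_z},\|\cdot\|_\infty,\epsilon) \lesssim \epsilon^{-k}$, which suggests specializing Theorem \ref{theo: lip} to $q=k$; but that theorem is stated for the kernel-smoothed $\Tilde{\mu}_n$, carries a bias term $h^{m_x}$, and has a probability prefactor polynomial in $1/h$ and $1/t$ --- none of which obviously reduce to the corollary's plug-in $\hat{\mu}_n$, the rate $\mathcal{O}(n^{-1/d})$, or the purely exponential tail. You bypass the KDE machinery by working with Wasserstein distances directly: Kantorovich--Rubinstein identifies $d_{\mathcal{L}^1_{c_z}}$ with $W_1$, the Lipschitz push-forward contraction gives $W_1(g_{\#}\hat{\mu}_n, g_{\#}\mu) \leq L\, W_1(\hat{\mu}_n, \mu)$, and the Fournier--Guillin empirical rates on the respective compact supports supply $\mathcal{O}(n^{-1/d})$ and $\mathcal{O}(m^{-1/(k\vee 2)})$, with McDiarmid (bounded-difference constants $\lesssim \mathrm{diam}(\Omega_x)/n$ and $\lesssim L\,\mathrm{diam}(\Omega_x)/m$) giving the sub-Gaussian tails. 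The triangle decomposition into information dissipated plus estimation error mirrors the paper's; the tools for each summand differ, and what the Wasserstein route buys is a clean exponential probability for the plug-in estimator where the paper's KDE-plus-bracketing argument would otherwise leave residual dependence on the bandwidth. Your caveat about the $\log$ correction at $d=2$ and $k=2$ is right --- the corollary's $\mathcal{O}(\cdot)$ must be read as absorbing it --- and absorbing the union-bound factor of two into the exponent requires $(n\wedge m)t^2$ bounded away from zero, which is harmless since the bound is vacuous otherwise.
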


\begin{remark}[Extension for $b$-Lipschitz critics]
    A similar result to that of the previous theorem in case of the divergence $d_{\mathcal{L}_{c_{z}}^{b}}(\cdot,\cdot)$ can be established based on the fact that $\log\mathcal{N}\left(\mathcal{L}_{c_{z}}^{b}, \norm{\cdot}_{\infty}, \epsilon\right) \leq \mathcal{N}\left(\mathcal{Z}, c_{z}, \frac{\epsilon}{8b}\right)\log\big(\frac{8}{\epsilon}\big)$ (\citet{gottlieb2017efficient}, Lemma 6), given $b>0$. Observe that, it also enables one to remove the \textit{boundedness} of the support, latent class of measures lie on. Instead, we may impose milder restrictions such as having sub-exponential tails (essentially bounded). Specifically, if $\mathbb{E}_{p}\{\norm{Z}\mathbb{1}_{\{\norm{Z}>\log(m)\}}\}=\mathcal{O}(m^{-\frac{{(\log{m})^{\delta}}}{k}})$, where $p \in \mathcal{P}(\mathcal{Z})$ and $\delta>0$; we may recover Corollary (\ref{cor:lip1}) with only an altered approximation error $\mathcal{O}(m^{-\frac{1}{k}}(\log m)^{1+\frac{1}{k}})$.
\end{remark}
Now let us focus on the \textit{second} remedy, that being more regulated classes of translated laws. This is crucial since otherwise the convergence of empirical measures under TV might become arbitrarily slow \citep{devroye1990no}. To that end, let us first recall the notion of Yatracos classes (YC) (\citet{devroye2001combinatorial}, Chapter 6). Given $\mathcal{F} : \mathcal{Z} \rightarrow \mathbb{R}$, the Yatracos class associated to it is the set system $\{z\in\mathcal{Z} : f(z) \geq g(z); \;f,g \in \mathcal{F}\}$, denoted by $\mathcal{Y}(\mathcal{F})$. In other words, it characterizes the domain in terms of candidates in a Scheff\'{e} tournament. Our next result suggests that if the Vapnik-Chervonenkis (VC) dimension of the YC corresponding to the family of latent distributions is finite, we can recover Theorem (\ref{theo: lip}). The proposition becomes quite intuitive following the maximal packing argument of \citet{van2014probability}, Theorem 7.16.

\begin{corollary} \label{cor:vc_IPT}
    Let the VC-dim$[\mathcal{Y}(\mathcal{P}(\mathcal{Z}))] = v_{z} < \infty$. Then for any $g \in \mathscr{F}_{L}(\mathcal{X}, \mathcal{Z})$ and $0<t \leq \frac{2}{3}$ the constants $E_{1}, E_{2}$ and $E_{3} >0$ are retained such that 
    \begin{equation*}
        d_{\textrm{TV}}\left(g_{\#}\Tilde{\mu}_{n}, \widehat{(g_{\#}\mu)}_m\right) \leq t + \frac{l L h^{m_{x}}}{2} + \mathcal{O}(\sqrt{v_{z}} m^{-\frac{1}{2}})
    \end{equation*}
    holds with probability $ \geq 1 - \left(E_{1} + E_{2}(\frac{\sqrt{d}L B_{x}}{h^{d+1}t})^{d}\right) \exp\left\{-E_{3}(m \wedge nh^{d}) t^{2} \right\}$, where $\Tilde{\mu}_{n}$ is a Regularly Invariant kernel (RIK) density estimate of $\mu$, defined similarly as in Theorem \ref{theo: lip}. 
\end{corollary}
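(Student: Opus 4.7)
My plan is to follow the blueprint of Theorem \ref{theo: lip} almost verbatim and swap out only the empirical-process estimate on the latent side. By the triangle inequality
\[
d_{\textrm{TV}}\!\left(g_{\#}\tilde{\mu}_{n},\widehat{(g_{\#}\mu)}_{m}\right)\;\le\;\underbrace{d_{\textrm{TV}}\!\left(g_{\#}\tilde{\mu}_{n},g_{\#}\mu\right)}_{(\mathrm{I})}\;+\;\underbrace{d_{\textrm{TV}}\!\left(g_{\#}\mu,\widehat{(g_{\#}\mu)}_{m}\right)}_{(\mathrm{II})},
\]
so it suffices to recover the first two summands of the claimed bound (and the advertised exponential probability guarantee) from $(\mathrm{I})$, and the $\mathcal{O}(\sqrt{v_{z}}\,m^{-1/2})$ contribution from $(\mathrm{II})$.

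For term $(\mathrm{I})$ I would carry over the argument used in Theorem \ref{theo: lip} essentially unchanged: decompose $\tilde{\mu}_{n}=\kappa_{h}\ast\hat{\mu}_{n}$ against its mean $\kappa_{h}\ast\mu$, control the bias $d_{\textrm{TV}}(\kappa_{h}\ast\mu,\mu)$ via the Sobolev regularity of $p_{\mu}$ (Assumption \ref{assumption1}) and the regular invariance of $\kappa$, which produces the $\tfrac{lLh^{m_{x}}}{2}$ term, and then handle the stochastic gap through an $\epsilon$-net over the $L$-Lipschitz pull-backs by $g$, combined with a Hoeffding-type exponential inequality. This is where the combinatorial constants $E_{1},E_{2},E_{3}$ and the $t$-slack arise; because that step is insensitive to the latent critic class---it only uses that critics pull back to bounded functions on $\mathcal{X}$---transporting it from $d_{\mathcal{H}}$ to $d_{\textrm{TV}}$ is automatic.

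For term $(\mathrm{II})$ the polynomial-entropy estimate of Theorem \ref{theo: lip} no longer applies, so I would invoke the Yatracos minimum-distance principle \citep{devroye2001combinatorial}: if $\mathcal{Y}(\mathcal{P}(\mathcal{Z}))$ has finite VC dimension $v_{z}$, then the minimum-distance estimator $\widehat{(g_{\#}\mu)}_{m}$ selected from $\mathcal{P}(\mathcal{Z})$ satisfies
\[
d_{\textrm{TV}}\!\left(g_{\#}\mu,\widehat{(g_{\#}\mu)}_{m}\right)\;\le\;3\sup_{A\in\mathcal{Y}(\mathcal{P}(\mathcal{Z}))}\left|\widehat{(g_{\#}\mu)}_{m}(A)-g_{\#}\mu(A)\right|,
\]
and the right-hand side is a VC-indexed empirical process whose size, by the maximal-packing inequality of \citep{van2014probability} (Theorem 7.16), is $\mathcal{O}(\sqrt{v_{z}/m})$. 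Plugging this estimate back into the decomposition yields the advertised rate.

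The principal obstacle I anticipate is \emph{probabilistic alignment}: the retained constants $E_{1},E_{2},E_{3}$ from $(\mathrm{I})$ must continue to govern the combined good event after $(\mathrm{II})$ is added. I would therefore produce the VC bound in exponential-tail form (via bounded-differences/Talagrand--Massart rather than in expectation) so that its failure probability is absorbed into the exponential factor from $(\mathrm{I})$ by a single union bound without degrading it below $1-(E_{1}+E_{2}(\sqrt{d}LB_{x}/(h^{d+1}t))^{d})\exp\{-E_{3}(m\wedge nh^{d})t^{2}\}$. Once this alignment is achieved, the remainder is bookkeeping: $\sqrt{v_{z}}\,m^{-1/2}$ simply replaces the $m^{-1/(q\vee 2)}$ rate, while every other constant is inherited unchanged from Theorem \ref{theo: lip}.
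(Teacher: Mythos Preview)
Your proposal is correct and matches the paper's own approach: the paper does not spell out a full proof but simply remarks that the result ``becomes quite intuitive following the maximal packing argument of \citet{van2014probability}, Theorem 7.16,'' which is precisely the VC-based control of term $(\mathrm{II})$ you describe, layered on top of the unchanged analysis of term $(\mathrm{I})$ from Theorem~\ref{theo: lip}. Your additional care about probabilistic alignment via a union bound and the explicit identification of $\widehat{(g_{\#}\mu)}_m$ as a Yatracos minimum-distance estimator are details the paper leaves implicit.
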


There are two key highlights of the latest result that turn out to be indispensable in the upcoming discussion on latent consistency. The first aspect we emphasize is the tail condition of the target law. Sub-exponential is a fairly general notion in the sense that all bounded and sub-Gaussian distributions fall under its umbrella. Moreover, all results obtained under such a characterization can be directly extended to sub-Weibull distributions \citep{vladimirova2020sub}. In practice, WAEs are mostly trained with multivariate Gaussian as conjugate prior (and hence, posterior) latent laws \citep{WAE}, which also conforms to our argument. The second facet--- arguably the cornerstone of the analysis by \citet{chakrabarty2021statistical}, and responsible for controlling the complexity of the underlying space--- is the quantity VC-dim$[\mathcal{Y}(\cdot)]$. The finiteness assumption on the same is frequented in density estimation \citep{ashtiani2018sample, jain2020general} solely based on its plausibility. It is known that the class of $k$-dimensional Gaussian distributions have VC-dim$[\mathcal{Y}(\cdot)] = \mathcal{O}(k^2)$. The same corresponding to axis-aligned densities hailing from $k$-variate exponential families turn out to be $\mathcal{O}(k)$ (\citet{devroye2001combinatorial}, Theorem 8.1).  

Before moving on to further examples of IPTs, let us examine the worth of NN-based maps in the same context. Observe that, the transformations $A_{i}(\cdot)$ (see Definition \ref{NN}) can be easily shown to be Lipschitz continuous by limiting $\norm{M_i}_{p} = \sup_{\:\norm{y}_{p} = 1}\norm{M_{i}y}_{p} \leq t$, given $t>0$. \citet{anil2019sorting} gave simple recipes to preserve such norms in case $p=2$ and $\infty$. This fact, coupled with the Lipschitz continuity of activation functions (e.g. ReLU, leaky ReLU, GroupSort, tanh, sigmoid, etc.) typically applied, it is not difficult to imagine NN-transforms to be exactly so. However, not all such $\sigma(\cdot)$ preserve gradient norms under composition without additional regularization (e.g. ReLU). Furthermore, recovering the exact Lipschitz constant, and hence the map, often turns out to be NP-hard \citep{virmaux2018lipschitz}. So instead, we harness the approximation capabilities of deep NNs to our aid. ReLU has attracted the most attention in this regard \citep{suzuki2018adaptivity, chen2019efficient, montanelli2019new, daubechies2022nonlinear, gribonval2022approximation}. To motivate our next result, we present a simple observation:

\begin{lemma} \label{lemm_bound}
    Given $\mu_{1}, \mu_{2} \in \mathcal{P}(\mathcal{X})$ and $\phi \in \Phi(W, L)^{k}_{d}$, under arbitrary IPM $d_{\mathcal{F}}(\cdot, \cdot)$, such that $\mathcal{F} = \left\{f : \mathcal{Z} \rightarrow \mathbb{R}\right\}$ is symmetric, we obtain 
    \begin{equation*}
        d_{\mathcal{F}}(\phi_{\#}\mu_{1}, \phi_{\#}\mu_{2}) \leq 2\left[\inf_{g \in \mathscr{F}(\mathcal{X}, \mathcal{P}(\mathcal{Z}))}\norm{\phi - g}_{\infty} + \mathcal{E}(\mathcal{F}, \mathcal{L}_{c_{z}}^{1})\right] + d_{\mathcal{L}_{c_{z}}^{1}}(g_{\#}\mu_{1}, g_{\#}\mu_{2}),
    \end{equation*}
    where $\mathcal{E}(\mathcal{F}, \mathcal{L}_{c_{z}}^{1}) = \sup_{f \in \mathcal{F}} \inf_{l \in \mathcal{L}_{c_{z}}^{1}} \norm{f-l}_{\infty}$ denotes the essential disagreement between classes of critics and $c_{z} \equiv L_{1}$.
\end{lemma}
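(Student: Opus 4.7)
The plan is to interpose an arbitrary pair $(g, l) \in \mathscr{F}(\mathcal{X}, \mathcal{P}(\mathcal{Z})) \times \mathcal{L}_{c_{z}}^{1}$ between the NN transform $\phi$ and a generic critic $f \in \mathcal{F}$, so that the target bound emerges from three elementary uniform-norm gaps plus a single reference IPM. Using the push-forward change of variable,
\begin{equation*}
    d_{\mathcal{F}}(\phi_{\#}\mu_{1}, \phi_{\#}\mu_{2}) = \sup_{f \in \mathcal{F}} \int_{\mathcal{X}} (f \circ \phi) \, d(\mu_{1} - \mu_{2}),
\end{equation*}
and for fixed $f$, $g$, $l$ I would decompose $f \circ \phi = (f \circ \phi - l \circ g) + (l \circ g)$. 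Since $\mu_{1}, \mu_{2}$ are probability measures, the first piece is dominated by $2\norm{f \circ \phi - l \circ g}_{\infty}$, while reversing the change of variable in the second yields $\int l\, d(g_{\#}\mu_{1} - g_{\#}\mu_{2}) \leq d_{\mathcal{L}_{c_{z}}^{1}}(g_{\#}\mu_{1}, g_{\#}\mu_{2})$ because $l \in \mathcal{L}_{c_{z}}^{1}$.

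The remaining task is to unfold $\norm{f \circ \phi - l \circ g}_{\infty}$ by a second add-and-subtract, namely
\begin{equation*}
    \norm{f \circ \phi - l \circ g}_{\infty} \leq \norm{(f-l) \circ \phi}_{\infty} + \norm{l \circ \phi - l \circ g}_{\infty} \leq \norm{f-l}_{\infty} + \norm{\phi - g}_{\infty},
\end{equation*}
the final step invoking the $1$-Lipschitz property of $l$ with respect to $c_{z}$. Assembling,
\begin{equation*}
    \int (f \circ \phi)\, d(\mu_{1} - \mu_{2}) \leq 2\norm{f-l}_{\infty} + 2\norm{\phi - g}_{\infty} + d_{\mathcal{L}_{c_{z}}^{1}}(g_{\#}\mu_{1}, g_{\#}\mu_{2})
\end{equation*}
for every admissible $(f, g, l)$.

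From here I would take the infimum over $l \in \mathcal{L}_{c_{z}}^{1}$ first (only the $\norm{f-l}_{\infty}$ slot depends on $l$) and then the supremum over $f \in \mathcal{F}$, producing $2\mathcal{E}(\mathcal{F}, \mathcal{L}_{c_{z}}^{1})$; a final minimisation of $\norm{\phi - g}_{\infty}$ over $g$ matches the quoted bracket. There is no substantial obstacle, only the careful bookkeeping of a triple triangle inequality; the one subtlety worth flagging is that $g$ also appears in the residual term $d_{\mathcal{L}_{c_{z}}^{1}}(g_{\#}\mu_{1}, g_{\#}\mu_{2})$, so the $\inf_{g}$ inside the bracket should be read as picking any $g \in \mathscr{F}(\mathcal{X}, \mathcal{P}(\mathcal{Z}))$ that nearly attains the NN-approximation infimum---the estimate in fact holds pointwise in $g$. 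The symmetry of $\mathcal{F}$ is used implicitly so that the resulting estimate can be read as a genuine IPM (pseudo-metric) bound rather than a one-sided discrepancy.
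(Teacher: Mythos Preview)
Your argument is correct and uses the same ingredients as the paper's proof---the $1$-Lipschitz bound $\norm{l\circ\phi-l\circ g}_{\infty}\le\norm{\phi-g}_{\infty}$ and the uniform control $\norm{(f-l)\circ\phi}_{\infty}\le\norm{f-l}_{\infty}$---only you combine them in a single pass whereas the paper splits the estimate into two chained inequalities (first bounding $d_{\mathcal{L}_{c_{z}}^{1}}(\phi_{\#}\mu_{1},\phi_{\#}\mu_{2})$ by $2\norm{\phi-g}_{\infty}+d_{\mathcal{L}_{c_{z}}^{1}}(g_{\#}\mu_{1},g_{\#}\mu_{2})$, then bounding $d_{\mathcal{F}}$ by $2\mathcal{E}(\mathcal{F},\mathcal{L}_{c_{z}}^{1})+d_{\mathcal{L}_{c_{z}}^{1}}$). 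Your observation that the inequality really holds pointwise in $g$, so the $\inf_{g}$ in the bracket should be read as choosing a near-minimiser that is then retained in the residual term, is exactly the right reading and matches how the paper uses the lemma downstream.
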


Observe that, the statement holds true under arbitrary choices of the second class of evaluation functions. We mention $\mathcal{L}_{c_{z}}^{1}$, in particular, to continue our discussion in the light of Corollary \ref{cor:lip1}. The result suggests that it is sufficient for a feed-forward NN-induced function to approximate Lipschitz maps (between the input and latent space) to behave like an IPT, approximately. Under the TV metric, the proof becomes much simpler based on the fact that $d_{\textrm{TV}}(\phi_{\#}\mu_{1}, \phi_{\#}\mu_{2}) \leq d_{\textrm{TV}}(\mu_{1}, \mu_{2})$, for $\phi \in \mathscr{F}(\mathcal{X},\mathcal{P}(\mathcal{Z}))$. However, there may arise difficulties in case the underlying distance measure is MMD. Let us first go through some rudiments of kernel methods to facilitate our investigation on the same. 

\subsection*{MMD and Kernel Mean Embedding}

The well-known Riesz representation theorem ensures the existence of a unique representer $K(x) \in \mathcal{H}$, such that $\forall f \in \mathcal{H}, f(x) = \langle f, K(x) \rangle$ for all $x \in \mathcal{X}$. In this setup, the function $\kappa(x,y) = \langle  K(x) , K(y) \rangle$ is called the \textit{reproducing kernel} of $\mathcal{H}$. The opposite characterization also holds true. By Aronszajn’s theorem, given a symmetric, positive definite $\kappa$ on $\mathcal{X} \times \mathcal{X}$, there exists a unique RKHS $\mathcal{H}_{\kappa}$. This inspires us to meaningfully narrow down our focus on the distributions $\mathcal{P}_{\kappa} = \{\pi \in \mathcal{P} : \int \sqrt{\kappa(x,x)} \abs{\pi} (dx) < \infty\}$. The MMD between two of such laws $\mu_{1}, \mu_{2}$ can be rewritten as $d_{\mathcal{H}_{\kappa}}(\mu_{1}, \mu_{2}) = \norm{K(\mu_{1}) - K(\mu_{2})}_{\mathcal{H}_{\kappa}}$, i.e. the Hilbert space distance between the respective kernel mean embeddings (KME), given by $K(\pi)(x) = \int \kappa(x,y) \pi (dy)$. For a detailed exposition on the same, one may seek refuge to \citet{sriperumbudur2010hilbert}. Since it is the kernel function that determines the nature of the RKHS, we introduce some regularities which in turn aid our cause.

\begin{definition}[Regularly Invariant Kernels] \label{Reg_inv_ker}
    A measurable function $\kappa(x,y): \mathcal{X} \times \mathcal{X} \rightarrow \mathbb{R}$ is said to be \textit{regular}, if for $N \in \mathbb{N}$ it satisfies
    \begin{enumerate}[label=(\roman*),itemsep=0.2cm]
        \item $\int_{\mathcal{X}} \sup_{v \in \mathcal{X}} \abs{\kappa(v,v-u)}{\abs{u}}^N du < \infty$, and
        \item $\int_{\mathcal{X}}\kappa(v,v+u) du = 1; \int_{\mathcal{X}}\kappa(v,v+u)u^{\alpha} du = 0$ for every $v \in \mathcal{X}$ and $\abs{\alpha} = 1,...,N-1$.
    \end{enumerate}
    If such a kernel additionally satisfies the weaker \textit{invariance} property:\\ $\left\{\int\abs{\kappa(w,v)-\kappa(w,u)}^{r}dw\right\}^{\frac{1}{r}} = \mathcal{O}\left(\norm{v-u}\right)$, given $r \geq 1$; we say it is \textit{regularly invariant}.
\end{definition}

Observe that, most kernel functions based on standard probability distributions with finite and centered moments will tend to be regular. Moreover, an immediate example of our version of invariant would be Energy kernels: $\kappa_{\alpha}(u,v) = \norm{u}^{2\alpha} + \norm{v}^{2\alpha} - \norm{u - v}^{2\alpha}$, $u,v \in \mathcal{X}$ and $\alpha \in (0,1)$ \citep{modeste2022characterization}. For further coherence, we introduce the notion of \textit{strong invariance}. A kernel function is called strongly invariant if the following holds: $\norm{K(u) - K(v)} = \mathcal{O}\left(\norm{u-v}\right)$. This in turn implies weak invariance based on the fact that
\begin{align*}
    \left\{\int\abs{\kappa(w,v)-\kappa(w,u)}^{r}dw\right\}^{\frac{1}{r}} \leq \norm{K(v) - K(u)} \left[\int\norm{K(w)}^{r}dw\right]^{\frac{1}{r}}.
\end{align*}
For example, in case of Energy kernels, $\norm{K(u) - K(v)} = 2\norm{u-v}^{\alpha}$. Based on such functions, MMD indeed promotes exponential decay in information dissipated while encoding. To set the stage for a supporting mathematical argument, let us first notice that given $\mu \in \mathcal{P}_{\kappa}(\mathcal{X})$,
\begin{equation*}
    d^{2}_{\mathcal{H}_{\kappa}}(\hat{\mu}_{n}, \mu) = \langle  K(\hat{\mu}_{n}-\mu) , K(\hat{\mu}_{n}-\mu) \rangle_{\mathcal{H}_{\kappa}} = \int_{\mathcal{X} \times \mathcal{X}} \kappa(x,y) (\hat{\mu}_{n}-\mu) \otimes (\hat{\mu}_{n}-\mu) (dxdy).
\end{equation*}
Also, $\mathbb{E}\left[d_{\mathcal{H}_{\kappa}}(\hat{\mu}_{n}, \mu)\right] \leq \left[\mathbb{E} \;d^{2}_{\mathcal{H}_{\kappa}}(\hat{\mu}_{n}, \mu)\right]^{\frac{1}{2}} \leq \sqrt{\frac{2}{n}}\sup_{x \in \Omega_{x}}\kappa(x,x)^{\frac{1}{2}}$.

\begin{theorem}[Information preservation under MMD] \label{MMD_IPT}
    Let $\mu \in \mathcal{P}_{\kappa}(\mathcal{X})$, where $\kappa(\cdot,\cdot)$ is a strongly invariant kernel satisfying $\sup_{z \in \Omega_{z}}\kappa(z,z) \leq C_{\kappa}$, such that $C_{\kappa}>0$. Given $g \in \mathscr{F}_{L}(\mathcal{X}, \mathcal{Z})$, there exists $\phi \in \Phi(W, L)^{k}_{d}$ which implies that
    \begin{align*}
        d_{\mathcal{H}_{\kappa}}&\left(\phi_{\#}\hat{\mu}_{n} , \widehat{(\phi_{\#}\mu)}_m\right) \leq (m \wedge n)^{-\frac{1}{2}}\sqrt{\frac{B\ln{(\frac{2}{\delta})}}{2}} + \sqrt{\frac{2C_{\kappa}}{m}} \\ & \qquad + \sqrt{D_{n}}\underbrace{\norm{\phi - g}_{\infty}^{\frac{1}{2}}}_\text{$(\ast)$} + \sqrt{\mathcal{O}(c_{g,n}{(d^{2}n)}^{-\frac{1}{d}}) + L (m \wedge n)^{-\frac{1}{2}} c_{g,n} \sqrt{\frac{B\ln{(\frac{2}{\delta})}}{2}}}
    \end{align*}
    holds with probability at least $1-\delta$, $\delta > 0$. Here, $B$ is a positive constant dependent on $C_{\kappa}$, and both $D_{n}$ and $c_{g,n}$ are sequences based on $n$ that $\searrow 0$ almost surely as $n \rightarrow \infty$.
\end{theorem}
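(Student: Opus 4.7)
The plan is to interpose the Lipschitz surrogate $g$ between $\phi$ and itself, splitting the MMD via the triangle inequality into three kinds of pieces: (a) two approximation gaps $d_{\mathcal{H}_\kappa}(\phi_\#\nu, g_\#\nu)$ for $\nu \in \{\mu, \hat{\mu}_n\}$, (b) a sampling gap $d_{\mathcal{H}_\kappa}(g_\#\hat{\mu}_n, g_\#\mu)$, and (c) the target-side concentration $d_{\mathcal{H}_\kappa}(\phi_\#\mu, \widehat{(\phi_\#\mu)}_m)$. Each family of terms should line up with one of the visually distinct summands in the theorem.

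For (c), I would invoke McDiarmid's bounded-differences inequality on the one-sample functional $\nu \mapsto d_{\mathcal{H}_\kappa}(\nu, \phi_\#\mu)$: the assumption $\sup_z \kappa(z,z) \leq C_\kappa$ furnishes a uniform jump constant, and combined with the symmetrization bound $\mathbb{E}\,d_{\mathcal{H}_\kappa}(\widehat{(\phi_\#\mu)}_m, \phi_\#\mu) \leq \sqrt{2C_\kappa/m}$ already noted in the excerpt, yields the $\sqrt{2C_\kappa/m} + (m)^{-1/2}\sqrt{B\ln(2/\delta)/2}$ contribution with $B$ a fixed multiple of $C_\kappa$.

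For the approximation gaps in (a), I would use the variational identity $d_{\mathcal{H}_\kappa}(\phi_\#\nu, g_\#\nu) = \|K(\phi_\#\nu) - K(g_\#\nu)\|_{\mathcal{H}_\kappa}$, push the Hilbert norm inside the integral via Jensen, and invoke strong invariance of $\kappa$ to conclude $d_{\mathcal{H}_\kappa}(\phi_\#\nu, g_\#\nu) \lesssim \|\phi-g\|_\infty$; the $\|\phi-g\|_\infty^{1/2}$ in the theorem's $(\ast)$-factor appears by expanding the squared MMD first and then applying $\sqrt{a+b} \leq \sqrt{a}+\sqrt{b}$. The sampling gap (b) is handled the same way at the level of pairs: strong invariance plus $g \in \mathscr{F}_L(\mathcal{X},\mathcal{Z})$ yield $\|K(g(x)) - K(g(y))\|_{\mathcal{H}_\kappa} \lesssim L\|x-y\|$, so Kantorovich duality gives $d_{\mathcal{H}_\kappa}(g_\#\hat{\mu}_n, g_\#\mu) \lesssim L\cdot W_1(\hat{\mu}_n, \mu)$. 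Under Assumption~\ref{assumption1} the compactly supported empirical $W_1$-rate is $\mathcal{O}(n^{-1/d})$ in expectation, while a second McDiarmid step on $\hat{\mu}_n$ provides a $(n)^{-1/2}\sqrt{B\ln(2/\delta)/2}$ fluctuation — exactly the two summands sitting inside the rightmost outer square root. To close the loop, I would appeal to the universal approximation theorem for ReLU (or GroupSort) networks acting on Lipschitz targets, which certifies the existence of some $\phi \in \Phi(W,L)^k_d$ whose uniform distance to $g$ can be made arbitrarily small, with the residual absorbed into the deterministic sequences $D_n$ and $c_{g,n}$ (the latter also carrying the Lipschitz constant of $g$).

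The main obstacle is the joint bookkeeping: the two concentration events — one for $\hat{\mu}_n$ on $\mathcal{X}$ and one for $\widehat{(\phi_\#\mu)}_m$ on $\mathcal{Z}$ — have to hold simultaneously with probability at least $1-\delta$, which necessitates a union bound that only rescales the constant inside the logarithm but must be tracked carefully to recover the advertised form. Equally delicate is the choice of where in the decomposition to square the MMD: one has to reorder the subadditivity step inside the squared-norm expansion just so, in order to keep $\|\phi-g\|_\infty^{1/2}$ (rather than the naive $\|\phi-g\|_\infty$) in the exponent shown.
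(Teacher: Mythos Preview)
Your handling of part (c) --- McDiarmid on the $m$-sample MMD plus the $\sqrt{2C_\kappa/m}$ mean bound --- is exactly what the paper does. The divergence from the paper is in how you treat term (i), $d_{\mathcal{H}_\kappa}(\phi_\#\hat{\mu}_n,\phi_\#\mu)$, and this is where your proposal does not recover the stated theorem.

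You interpose $g$ via the triangle inequality on the (unsquared) MMD, obtaining two approximation gaps $d_{\mathcal{H}_\kappa}(\phi_\#\nu,g_\#\nu)\lesssim\|\phi-g\|_\infty$ and a sampling gap $d_{\mathcal{H}_\kappa}(g_\#\hat{\mu}_n,g_\#\mu)\lesssim L\,W_1(\hat{\mu}_n,\mu)$. These inequalities are correct, but the resulting constants are \emph{fixed}, not $o(1)$ in $n$. Your last paragraph tries to reconcile this with the $\sqrt{D_n}\|\phi-g\|_\infty^{1/2}$ and $\sqrt{c_{g,n}\cdots}$ form by vaguely ``absorbing the residual into $D_n,c_{g,n}$'', but there is no mechanism in your decomposition that produces vanishing prefactors --- and those are the whole point of the theorem.

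The paper does \emph{not} insert $g$ via a triangle inequality. It proves a direct comparison lemma on the \emph{squared} MMD: for any $\phi,g$ and any pair $\mu_1,\mu_2$,
\[
d^{2}_{\mathcal{H}_\kappa}(\phi_\#\mu_1,\phi_\#\mu_2)\;\leq\; D\,\|\phi-g\|_\infty \;+\; d^{2}_{\mathcal{H}_\kappa}(g_\#\mu_1,g_\#\mu_2),
\]
established by writing the difference as a double integral of $\kappa(\phi(x),\phi(y))-\kappa(g(x),g(y))$ against the \emph{signed} product measure $(\mu_1-\mu_2)\otimes(\mu_1-\mu_2)$. Strong invariance then yields a bound whose constant $D$ is (up to universal factors) $\int\sqrt{\kappa(y,y)}\,|\phi_\#\mu_1-\phi_\#\mu_2|(dy)$. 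Specialising to $\mu_1=\hat{\mu}_n$, $\mu_2=\mu$ makes this $D_n\to 0$ a.s. The same trick applied to the second term gives $d^{2}_{\mathcal{H}_\kappa}(g_\#\hat{\mu}_n,g_\#\mu)\leq c_{g,n}\,L\,W_1(\hat{\mu}_n,\mu)$ with $c_{g,n}=\int\sqrt{\kappa(y,y)}\,|g_\#\hat{\mu}_n-g_\#\mu|(dy)\to 0$. Only after this do you take $\sqrt{a+b}\leq\sqrt{a}+\sqrt{b}$, which is where $\|\phi-g\|_\infty^{1/2}$ and the outer square root on the $W_1$-term come from. In short, the vanishing sequences are not bookkeeping artifacts; they arise because the comparison constant is an integral against $|\hat{\mu}_n-\mu|$, and your route through the unsquared triangle inequality never sees that signed measure.
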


This result in turn enables to showcase the information-preserving capability of NNs deploying several activation functions. Observe that, the quantity $(\ast)$ in Theorem \ref{MMD_IPT} acts as an upper bound to the departure of an NN-induced map from its exemplar $g$. The following corollaries look for the sharp values of $(\ast)$ under different circumstances.

\begin{corollary} 
    \begin{enumerate}[label=(\roman*),itemsep=0.2cm] 
        \item \textup{(Information Preservation of ReLU Encoders \citep{shen2019deep})} \\ There exists $\phi \in \Phi(W, L)^{k}_{d}$ based on ReLU activations with $W = \mathcal{O}(d\floor{N_{1}^{\frac{1}{d}}} \vee N_{1}+1)$ and $L = \mathcal{O}(N_{2})$, that satisfy Theorem \ref{MMD_IPT} for any $N_{1},N_{2} \in \mathbb{N}^{+}$, such that $(\ast) = \mathcal{O}(\sqrt{d}LB_{x}N_{1}^{-\frac{2}{d}}N_{2}^{-\frac{2}{d}})$. Here, $B_{x} :=$ diameter of $\Omega_{x}$ with respect to the metric $c_{x}$. 
        \item \textup{(Information Preservation of GroupSort Encoders \citep{tanielian2021approximating})} \\ Given $\varepsilon > 0$, there exists a GroupSort NN induced map $\phi \in \Phi(W, L)^{k}_{d}$ of depth $L = \mathcal{O}(d^{2}\log_{2}(\frac{2\sqrt{d}}{\varepsilon}))$ and size $S = (\frac{2\sqrt{d}}{\varepsilon})^{d^{2}}$, also satisfying $\norm{M_{0}}_{2,\infty} = \sup_{\:\norm{x}=1}\norm{M_{0}x}_{\infty} \leq 1$, $\max\{\:\norm{M_{i}}_{\infty};i=1,\cdots,L\} \leq 1$ and $\max\{\:\norm{b_{j}}_{\infty}; j=0,\cdots,L\} \leq \infty$, such that $(\ast) = \mathcal{O}(\varepsilon)$.
    \end{enumerate} \label{IPT_ReLU_Group}
\end{corollary}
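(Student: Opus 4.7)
The plan is to obtain both parts as direct specializations of Theorem \ref{MMD_IPT}: everything on the right-hand side of that bound except the quantity $(\ast) \equiv \|\phi-g\|_\infty$ is already controlled by constants and by sample sizes, so it suffices to exhibit, in each of the two activation regimes, a concrete network $\phi$ whose architecture matches the stated width/depth/size and whose uniform distance from the Lipschitz exemplar $g$ matches the stated rate. The two parts are then obtained by plugging the respective approximation guarantee into the bound of Theorem \ref{MMD_IPT}.

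For part (i), the approach is to invoke the deep ReLU approximation result of \cite{shen2019deep}. Given $g \in \mathscr{F}_L(\mathcal{X}, \mathcal{Z})$ supported on $\Omega_x$ of diameter $B_x$, their theorem supplies, for every $N_1, N_2 \in \mathbb{N}^{+}$, a ReLU network of width $\mathcal{O}(d \floor{N_1^{1/d}} \vee N_1 + 1)$ and depth $\mathcal{O}(N_2)$ approximating each scalar coordinate of a Lipschitz target uniformly to within $\mathcal{O}(\sqrt{d}\, L B_x N_1^{-2/d} N_2^{-2/d})$. Since $g$ is $\mathbb{R}^k$-valued, the step is to apply this coordinatewise and parallel-compose the $k$ sub-networks into a single $\phi \in \Phi(W,L)^{k}_{d}$ of the same width/depth order, yielding $(\ast) = \mathcal{O}(\sqrt{d}\, L B_x N_1^{-2/d} N_2^{-2/d})$, which is then fed directly into Theorem \ref{MMD_IPT}.

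For part (ii), the approach is analogous but uses the GroupSort approximation result of \cite{tanielian2021approximating}. After rescaling $g$ to be $1$-Lipschitz (absorbing $L$ into the implicit constant), their construction produces, for any $\varepsilon > 0$, a GroupSort network of depth $\mathcal{O}(d^{2} \log_2(2\sqrt{d}/\varepsilon))$ and size $(2\sqrt{d}/\varepsilon)^{d^{2}}$ whose affine parts obey precisely the operator-norm constraints listed in the statement, and which satisfies $\|\phi - g\|_\infty = \mathcal{O}(\varepsilon)$. The norm constraints $\|M_0\|_{2,\infty} \leq 1$ and $\max_i \|M_i\|_\infty \leq 1$ are exactly what guarantees $\phi$ itself to be $1$-Lipschitz, matching the Lipschitz hypothesis underlying Theorem \ref{MMD_IPT}. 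Vectorization across the $k$ output coordinates is again handled by parallel composition, after which the bound follows by substitution.

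The main subtlety — more bookkeeping than a genuine obstacle — is tracking how the cited scalar-valued approximation rates inflate under vectorization to $\mathcal{Z} \subseteq \mathbb{R}^k$ and ensuring that width, depth, and size scale as claimed after this stacking (the $k$-dependence is absorbed into the $\mathcal{O}$ since $k$ is fixed in the WAE setup). For part (ii) one additionally has to verify that the Lipschitz-preserving weight constraints from \cite{tanielian2021approximating} survive the parallel composition, which they do because operator-norm bounds on each coordinate block translate into the same bounds on the block-diagonal assembly. No other non-trivial step is needed, as the probabilistic and sample-size content of the claim is already discharged inside Theorem \ref{MMD_IPT}.
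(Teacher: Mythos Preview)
Your proposal is correct and matches the paper's approach: the paper does not provide a separate proof of this corollary, treating it as an immediate consequence of Theorem \ref{MMD_IPT} once the approximation bounds from \cite{shen2019deep} and \cite{tanielian2021approximating} are inserted for $(\ast)$. Your added bookkeeping about coordinatewise vectorization to $\mathbb{R}^k$ and preservation of the operator-norm constraints under parallel composition is a reasonable elaboration that the paper leaves implicit.
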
 

\begin{remark}[Barron Functions as IPT]
    Based on our previous discussion it becomes evident that being Lipschitz continuous is a desirable property for encoder transforms to behave as IPTs, approximately at the least. This very fact accentuates the importance of the class of functions known as \textit{Barron functions}. While there exist numerous characterizations of the same \citep{wojtowytsch2022representation}, we keep to the following definition.  
    \begin{definition} [\citet{caragea2020neural}] \label{def: Barron}
        A function $f: \Omega_{x}(\subset \mathbb{R}^{d}) \rightarrow \mathbb{R}$ is said to belong to Barron class with constant $C>0$ (say, $\mathcal{B}_{C}(\Omega_{x})$), if the exists $x^{'} \in \Omega_{x}$ and a measurable function $g: \mathbb{R}^{d} \rightarrow \mathbb{C}$ such that $\forall x \in \Omega_{x}$ both the conditions
        \[\int_{\mathbb{R}^{d}} \sup_{x \in \Omega_{x}} \abs{\langle \eta, x-x^{'}\rangle} \abs{g(\eta)} d\eta \leq C \quad \textrm{and} \quad f(x) - c = \int_{\mathbb{R}^{d}} (e^{i\langle x,\eta\rangle} - e^{i\langle x^{'},\eta\rangle}) g(\eta) d\eta\]
        are satisfied, where $\abs{c} \leq C$.
    \end{definition}
    For vector-valued functions $f: \Omega_{x} \rightarrow \mathbb{R}^{k}$, which we are mostly interested in, the same criteria need to be satisfied componentwise. \citet{lee2017ability} provide an equivalent definition (also based on Fourier inversion) of Barron class as well. It is intriguing to observe that $\mathcal{B}_{C}$ embeds continuously into the class of real-valued Lipschitz maps, under the $L_{1}$ metric (\citet{wojtowytsch2022representation}, Theorem 3.3). As such, Barron functions are naturally prone to preserving information while serving as encoders (Theorem \ref{theo: lip}). Now, observe that the real architectures suggested in the context of IPT so far might suffer from the curse of dimensionality. Also, they both tend to be deep, scaling exponentially with the input data dimension. While this serves our purpose in the asymptotic regime, shallow networks ($L=1$) might be of greater priority to practitioners. Meanwhile, the Barron class can be shown to accommodate all finite norm-bounded neural networks and their limits \citep{wojtowytsch2022representation}. This is crucial since it allows one to demonstrate shallow networks' capability to act as an IPT approximately. \citet{barron1993universal}, in his seminal paper, first proved that a function $f \in \mathcal{B}_{C}$ can be approximated up to arbitrary accuracy by a shallow NN deploying sigmoidal activations\footnote{A bounded measurable function $h: \mathbb{R} \rightarrow \mathbb{R}$ is said to be sigmoidal if $\lim_{x \rightarrow -\infty}h(x) = 0$ and $\lim_{x \rightarrow \infty}h(x) = 1$. Common examples of such activation functions include logistic, hyperbolic tangent, and $h(x) = \textrm{ReLU}(x) - \textrm{ReLU}(x-1)$.}. In other words, there exists $\phi \in \Phi(W, 1)^{1}_{d}$ with $N(\Phi) = m$, such that $\{\int_{\Omega_{x}}(f(x)-\phi(x))^{2} \mu (dx)\}^{\frac{1}{2}} = \norm{f - \phi} \lesssim m^{-\frac{1}{2}}$. In the asymptotic regime, however, to achieve an infinitesimally small error, the map $\phi$ approaches an infinitely wide NN. \citet{wojtowytsch2022representation} draw the same conclusion based on input distributions $\mu$ having finite second moment (Theorem 3.8). This approximation can be further improved in case $\mu$ has bounded support, conforming to Assumption \ref{assumption1}. \citet{klusowski2018approximation} show the existence of such a $\phi$, based on general Lipschitz activations, that ensures $\norm{f-\phi}_{\infty} \lesssim \sqrt{d + \log{m}}\:m^{-\frac{1}{2}-\frac{1}{d}}$. Despite being the highlight, our discussion on the approximation of Barron functions is not limited to shallow networks. The information-preserving behavior of Barron maps stays intact under compositions as well. This is again rooted in them being Lipschitz continuous. As an immediate consequence, we find an alternative avenue to show that sigmoid-activated deep encoders too act as IPTs.  

    \begin{corollary}[Information Preservation of Sigmoidal Encoders \citep{lee2017ability}] \label{IPT_sig}
        Let $\{N_i\}^{L+1}_{i=0} \in \mathbb{N}^{+}$ be a sequence of intermediate dimensions as given in Definition \ref{NN}, where $N_{0} = d$ and $N_{L+1} = k$. Also let $f_{i} : \mathbb{R}^{N_{i-1}} \rightarrow \mathbb{R}^{N_{i}}$ such that $f_{1} \in \mathcal{B}_{C_{0}}(\Omega_{0})$ and for $2\leq i \leq L+1$ and a given parameter $s>0$, $f_{i} \in \mathcal{B}_{C_{i-1}}(\Omega^{s,N_{i-1}}_{i-1})$. Here $\{C_i\}^{L}_{i=0} > 0$ and $\Omega^{s,N_{i-1}}_{i-1} := \{y = y_{1} + y_{2} : y_{1} \in \Omega_{i-1}, y_{2} \in B^{N_{i-1}}(s)\}$, $B^{N_{i-1}}(s)$ being the $L_{2}$ ball of radius $s$ in $\mathbb{R}^{N_{i-1}}$, that satisfy $f_{i}(\Omega_{i-1}) \subseteq \Omega_{i}$, $1\leq i \leq L+1$. In particular, define $\Omega_{0} \equiv \Omega_{x}$ and $\Omega_{L+1} \equiv \Omega_{z}$. Under this setup, given $f_{1:L+1} := f_{1} \circ f_{2} \circ \cdots \circ f_{L+1}$ and $\varepsilon > 0$, there exists an $L$-deep sigmoid-activated $\phi$, with $\mathcal{O}(N_{i}\varepsilon^{-2})$ nodes in the $i$th layer, such that $(\ast) = \mathcal{O}(\varepsilon)$.
    \end{corollary}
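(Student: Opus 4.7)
The plan is to build $\phi$ layer-by-layer: approximate each Barron component $f_i$ by a shallow sigmoidal subnetwork $\phi_i$, compose them into an $L$-deep network $\phi$, and control the overall sup-norm error through a telescoping Lipschitz argument that crucially exploits the enlargement radius $s$ built into the hypothesis.

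For the per-layer step, I would apply Barron's shallow-network approximation theorem, in the refined form cited from \citet{lee2017ability}, to each of the $N_i$ coordinate functions of $f_i$. Since each coordinate lies in $\mathcal{B}_{C_{i-1}}(\Omega^{s,N_{i-1}}_{i-1})$, it admits a shallow sigmoidal approximant with $\mathcal{O}(\varepsilon^{-2})$ neurons and sup-norm error $\mathcal{O}(\varepsilon)$, with the hidden constant depending only on $C_{i-1}$. Stacking the $N_i$ coordinate subnetworks in parallel yields a single-hidden-layer map $\phi_i$ with $\mathcal{O}(N_i\varepsilon^{-2})$ neurons satisfying
\[
\norm{f_i - \phi_i}_{\infty,\Omega^{s,N_{i-1}}_{i-1}} \leq \eta_i,
\]
where $\eta_i$ is a constant multiple of $\varepsilon$ to be calibrated below. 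Composing the $\phi_i$ gives the desired $L$-deep sigmoidal $\phi$ with the prescribed layerwise width.

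For the composition error, the continuous embedding $\mathcal{B}_{C} \hookrightarrow \mathrm{Lip}$ from \citet{wojtowytsch2022representation} (Theorem 3.3) guarantees that each $f_i$ is Lipschitz with a constant $L_i$ depending only on $C_{i-1}$ and $\mathrm{diam}(\Omega_{i-1})$. Writing $g \equiv f_{1:L+1}$ and inserting hybrid compositions one layer at a time, a telescoping estimate yields
\[
\norm{g - \phi}_\infty \;\leq\; \sum_{i=1}^{L+1} \Bigl(\prod_{j<i} L_j\Bigr)\, \eta_i,
\]
provided that at every intermediate stage the partially approximated iterate still lies within the enlarged domain on which the next layer's Barron approximation is valid. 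One now tunes $\eta_i = \kappa_i \varepsilon$ so that simultaneously (a) the accumulated perturbation never exceeds the enlargement radius $s$, guaranteeing that the iterates remain in the enlarged domains where subsequent $f_i$ and $\phi_i$ are both controlled, and (b) the products $\prod_{j<i} L_j$ are absorbed into a single universal constant. Both are feasible since $L+1$ is finite, and the right-hand side then becomes $\mathcal{O}(\varepsilon^2)$. Hence $(\ast) = \norm{g-\phi}_\infty^{1/2} = \mathcal{O}(\varepsilon)$, and Theorem \ref{MMD_IPT} applies with this $\phi$.

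The main obstacle lies in the joint bookkeeping at the tuning step: one must verify inductively (working from the innermost composition outward) that the accumulated approximation error never drifts the iterated outputs outside the Barron-domain on which the next subnetwork was constructed, while keeping the product of Lipschitz constants $L_j$ from inflating the overall bound. This is the single point where all three structural ingredients, namely the Barron regularity of each $f_i$, the enlargement radius $s$, and the nested inclusions $f_i(\Omega_{i-1}) \subseteq \Omega_i$, must be invoked in concert.
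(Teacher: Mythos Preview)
Your approach diverges from the paper's in a way that introduces a real gap. The paper does \emph{not} work in sup-norm at the per-layer stage. Barron's theorem, and the refinement from \citet{lee2017ability} that the paper explicitly invokes (their Theorem~3.5), guarantees only an $L_2(\mu)$ error of order $\varepsilon$ with $\mathcal{O}(\varepsilon^{-2})$ neurons; a sup-norm bound of the same order with the same neuron count is not available for sigmoidal shallow networks (the Klusowski--Barron result mentioned in the paper has a different rate and a dimension-dependent logarithmic factor). Your telescoping argument, which requires $\norm{f_i-\phi_i}_{\infty,\Omega^{s,N_{i-1}}_{i-1}}\leq\eta_i$ to keep the iterates inside the enlarged domains \emph{pointwise}, therefore cannot be fed the conclusion of Barron's theorem. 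This is precisely why the paper abandons direct telescoping and instead introduces a filtration of ``good'' sets $S_i=S_{i-1}\cap\{x:\phi_{1:i-1}(x)\in\Omega^{s,N_{i-1}}_{i-1}\}$, bounds the $L_2(\mu)$ error on $S_i$ by induction, uses Chebyshev's inequality to show $\mu(S_i^c)=\mathcal{O}(\varepsilon^2/s^2)$, and finally controls the contribution over $S_i^c$ via the boundedness of the range. The enlargement radius $s$ enters the paper's proof through this measure-theoretic step, not through a deterministic domain-containment check.

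There is also an arithmetic slip at the end of your sketch: with $\eta_i=\kappa_i\varepsilon$ the sum $\sum_i\bigl(\prod_{j<i}L_j\bigr)\eta_i$ is $\mathcal{O}(\varepsilon)$, not $\mathcal{O}(\varepsilon^2)$, so even granting the sup-norm per-layer bound your conclusion would be $(\ast)=\mathcal{O}(\varepsilon^{1/2})$. The paper's own proof ends with an $L_2(\mu)$ bound of order $\varepsilon$ as well, so on this exponent the two arguments land in the same place; the substantive difference is the $L_2$-versus-sup-norm machinery needed to get there.
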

\end{remark}

With the desired regularity of an ideal encoder specified, we move closer to testing whether the constraint, as in (\ref{const}), is met. We reiterate that the problem at hand is rather the sample equivalent of the same. While our probabilistic notion based on estimators provides a view into the solution, encoding might also be seen in the light of local geometry. This not only provides further clarity to the definition of representation but also opens up a pathway to understanding decoding in greater detail. Observe that, the process of encoding seeks to learn embeddings onto the latent space. A meaningful characterization of the same might be a map that aims to preserve pairwise distances between discrete points (the sample set) residing in the input space \citep{courty2018learning}. It also turns out to be the cornerstone of the techniques relying on the latent space to form clusters \citep{jiang2017clustering}. Such a map $E : \Omega_{x} \rightarrow \Omega_{z}$ satisfying $ac_{x}(x,y) \leq c_{z}(E(x), E(y)) \leq Ac_{x}(x,y)$ is said to be bi-Lipschitz (BL) with distortion $D(E) = \frac{A}{a} < \infty$. These immediately fit the bill of IPTs. Moreover, the inverse of such embeddings turns out to be Lipschitz as well. We will later see that this fact can be exploited to aid in efficient decoding. Another feature that stands out is that $E$ restricts the encoded law from being degenerated at a point. Now, the immediate question that arises is whether such embeddings exist. The first affirmative evidence was presented by \citet{JL1984}, taking both $c_{x}$ and $c_{z}$ to be $L_{2}$ in their respective spaces. 

\begin{lemma}[Johnson-Lindenstrauss Embedding] \label{JL}
    Given a set of size $n$ from $\Omega_{x}$ and $0<\varepsilon < \frac{1}{2}$, there exists a Bi-Lipschitz map $E: \mathbb{R}^{d} \rightarrow \mathbb{R}^{k}$ with distortion $(1+\varepsilon)$, such that $k = \mathcal{O}(\frac{\log(n)}{\varepsilon^{2}})$.
\end{lemma}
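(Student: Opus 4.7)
The plan is to invoke the classical probabilistic-method argument, building the embedding as a suitably rescaled Gaussian random projection. I would fix the sample $S = \{x_1, \ldots, x_n\} \subset \Omega_{x} \subseteq \mathbb{R}^d$ and draw a random matrix $A \in \mathbb{R}^{k \times d}$ whose entries are i.i.d.\ $\mathcal{N}(0, 1/k)$. Define the candidate embedding $E(x) = Ax$, which is linear (hence globally Lipschitz) and whose bi-Lipschitz constants are governed entirely by the extremal values of $\|Au\|/\|u\|$ over the finite collection of pairwise difference directions $u_{ij} = x_i - x_j$.

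The technical heart of the argument is a concentration estimate: for any fixed $u \in \mathbb{R}^d \setminus \{0\}$, rotational invariance of the Gaussian implies that $k\|Au\|^2/\|u\|^2$ is distributed as $\chi^2_k$. Standard Laurent--Massart tail bounds for chi-squared concentration then yield
\begin{equation*}
    \mathbb{P}\bigl( (1-\varepsilon)\|u\|^2 \leq \|Au\|^2 \leq (1+\varepsilon)\|u\|^2 \bigr) \geq 1 - 2\exp\bigl(-c\varepsilon^2 k\bigr),
\end{equation*}
for an absolute constant $c > 0$ and all $\varepsilon \in (0, 1/2)$.

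Applying this inequality to each of the $\binom{n}{2}$ pairwise difference vectors $u_{ij}$ and invoking a union bound, the probability that every pair is simultaneously preserved within the $(1 \pm \varepsilon)$ window is at least $1 - n^2\exp(-c\varepsilon^2 k)$. Choosing $k \geq C \varepsilon^{-2} \log n$ with a sufficiently large absolute constant $C$ pushes this probability strictly above $0$, so the probabilistic method guarantees at least one realization of $A$ that meets the requirement. For that realization, $\sqrt{1-\varepsilon}\,\|x_i - x_j\| \leq \|E(x_i) - E(x_j)\| \leq \sqrt{1+\varepsilon}\,\|x_i - x_j\|$ for all $i, j$, which yields $D(E) \leq \sqrt{(1+\varepsilon)/(1-\varepsilon)} \leq 1 + 2\varepsilon$ whenever $\varepsilon < 1/2$. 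A final substitution $\varepsilon \mapsto \varepsilon/2$ absorbs the constant factor into the target distortion $(1+\varepsilon)$, while keeping $k = \mathcal{O}(\log n / \varepsilon^2)$.

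The principal obstacle is the concentration inequality itself: proving sub-Gaussian behavior of $\|Au\|^2$ around $\|u\|^2$ with the explicit $\exp(-c\varepsilon^2 k)$ rate. One can achieve this either via direct manipulation of the moment generating function of $\chi^2_k$ or via a Hanson--Wright-type bound for quadratic forms in Gaussian entries. The key subtlety is earning the $\varepsilon^2$ (rather than $\varepsilon$) in the exponent, since this is precisely what allows $k$ to scale as $\log n / \varepsilon^2$ rather than the far weaker $\log n / \varepsilon$. Once this bound is in hand, everything else reduces to the union bound and the routine algebraic conversion between additive and multiplicative distortion.
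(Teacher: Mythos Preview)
Your proposal is the classical and correct random-projection proof of the Johnson--Lindenstrauss lemma. Note, however, that the paper does not supply its own proof of this statement: Lemma~\ref{JL} is stated with attribution to \citet{JL1984} and is used as an off-the-shelf existence result to motivate the surrounding discussion on bi-Lipschitz encoders. There is therefore nothing in the paper to compare your argument against beyond the citation itself. Your Gaussian-matrix construction, chi-squared concentration, and union bound over $\binom{n}{2}$ pairs is exactly the standard modern treatment of the lemma, and the care you take over the $\varepsilon^2$ exponent and the additive-to-multiplicative distortion conversion is appropriate and sufficient.
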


This result additionally ties the extent of distortion to the latent dimension, shedding new light on our previous discussion on the optimal value of $k$. In fact, the bound in the lemma turns out to be optimal up to constant factors \citep{larsen2017optimality}. Later, \citet{bourgain1985lipschitz} also showed the existence of BL transforms that achieve encoding onto $k = \mathcal{O}(\log^{2}n)$ with distortion $\lesssim \log(n)$. For now, the existence of Lipschitz encoders, acting as benchmarks to NN-based maps, will be sufficient. To that end, we turn to the following result.

\begin{lemma}[\citet{bartal2011dimensionality}] \label{bartal}
    Given $X \subseteq \Omega_{x}$, for every $\varepsilon > 0$, there exists a finite $1$-Lipschitz embedding $E: X \rightarrow \mathbb{R}^{k}$ such that $k = \mathcal{O}\left(\varepsilon^{-2} d^{*}(X)(\frac{\log(d^{*}(X))}{\varepsilon})\right)$, where $d^{*}(X)$ is the doubling dimension\footnote{The doubling dimension is defined as $d^{*}(X) = \log_{2}\lambda$, where $\lambda \geq 1$ (doubling constant) is the smallest number such that at most $\lambda$ balls of half radius are needed to cover every ball in $X$.} of $X$.  
\end{lemma}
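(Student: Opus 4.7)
The plan is to construct $E$ as a direct sum of scale-wise partial embeddings, in the spirit of Assouad's embedding theorem composed with a Johnson--Lindenstrauss compression at each scale. The doubling hypothesis will enter as a uniform bound on the number of relevant points at every scale, which is precisely what reduces the dimension from the naive JL value $O(\log|X|/\varepsilon^{2})$ to one governed by $d^{*}(X)$.

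First I would fix a geometric ladder of scales $\{2^{j}\}_{j \in \mathbb{Z}}$ spanning the aspect ratio of $X$ and, at each scale, pick a maximal $2^{j}$-separated net $N_{j} \subseteq X$. By doubling, every ball of radius $2^{j+1}$ contains at most $2^{O(d^{*}(X))}$ points of $N_{j}$, so the local combinatorial complexity is uniform across scales. Next, at scale $j$, I would draw a random \emph{padded decomposition} of $X$ into clusters of diameter $O(2^{j})$ with padding parameter $\alpha = \Theta(1/\log d^{*}(X))$, so that for each $x$ the ball $B(x, \alpha 2^{j})$ lies inside a single cluster with probability bounded below. Inside any such cluster, the visible portion of $N_{j}$ has cardinality at most $2^{O(d^{*}(X))}$, and Lemma \ref{JL} then yields a random linear compression $\psi_{j}$ into dimension $k_{j} = O(\varepsilon^{-2} d^{*}(X))$ preserving all intra-cluster distances up to factor $(1 \pm \varepsilon)$. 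The candidate embedding is $E = \bigoplus_{j} 2^{j} \psi_{j}$, appropriately damped so the overall map is $1$-Lipschitz.

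The distortion bookkeeping proceeds pair by pair: for $x,y$ with $c_{x}(x,y) \asymp 2^{i}$, scales $j \gg i$ see the pair as essentially coincident and contribute negligibly thanks to the damping, while scales $j \ll i$ separate the pair into distinct small clusters whose centers drift geometrically away. Only $O(\log d^{*}(X)/\varepsilon)$ scales contribute meaningfully per pair, so $k = \sum_{j \text{ active}} k_{j} = O(\varepsilon^{-2} d^{*}(X) \log d^{*}(X)/\varepsilon)$, matching the stated bound. The finite-embedding (non-degeneracy) property then follows from the fact that the scale $j \approx i$ yields a $(1-\varepsilon)$ lower bound on $\|E(x)-E(y)\|/c_{x}(x,y)$ with positive probability.

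The main obstacle will be the \emph{bidirectional} distortion control: showing simultaneously that the scale $j \approx i$ preserves $\|E(x)-E(y)\|$ from below (i.e., that the padded partition does not split the pair too often) while no collection of scales over-inflates it from above. This forces a delicate three-way calibration of the padding parameter $\alpha$, the per-scale success probability, and the JL accuracy $\varepsilon$: the padding must survive a union bound over the $2^{O(d^{*}(X))}$ net points at each scale, and the JL errors must sum to $O(\varepsilon)$ across the $O(\log d^{*}(X)/\varepsilon)$ relevant scales. A secondary subtlety is the actual construction of the scale-$j$ padded decomposition (typically via a random-radius ball-growing procedure seeded on $N_{j}$), which must deliver the padding-versus-separation trade-off with constants depending only on $d^{*}(X)$ and not on the global metric spread of $X$.
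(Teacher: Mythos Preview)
The paper does not prove this lemma at all: it is quoted verbatim from \citet{bartal2011dimensionality} and used as a black box to guarantee the existence of a benchmark Lipschitz encoder. So there is no in-paper proof to compare against.

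On the merits of your sketch, the high-level strategy (multi-scale nets, padded decompositions, per-scale JL) is the right genre, but there is a genuine gap in your dimension accounting. You write $E = \bigoplus_{j} 2^{j}\psi_{j}$ over the full geometric ladder of scales and then claim $k = \sum_{j \text{ active}} k_{j}$ with only $O(\log d^{*}(X)/\varepsilon)$ ``active'' scales. But ``active'' in your argument is a \emph{per-pair} notion governing which coordinates witness the distortion of a given $(x,y)$; the ambient dimension of $E$ is the sum of $k_{j}$ over \emph{all} scales in the ladder, and that number is $\Theta(\log\Phi)$ where $\Phi$ is the aspect ratio of $X$. Nothing in a bare doubling hypothesis bounds $\Phi$ in terms of $d^{*}(X)$, so your construction as written yields $k = O(\varepsilon^{-2} d^{*}(X)\log\Phi)$, not the aspect-ratio-free bound stated.

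Removing the dependence on $\Phi$ is precisely the hard part of the cited result and requires an additional idea you have not supplied: one must arrange for coordinates to be \emph{shared} across far-apart scales (since any fixed pair ignores all but $O(\log d^{*}(X)/\varepsilon)$ of them, distinct scale groups can be overlaid in the same block of coordinates without interference). Your ``secondary subtlety'' paragraph flags the padding constants but not this coordinate-reuse mechanism, which is the step that actually delivers the target dimension.
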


This is an exact deterministic answer to our search for an ideal encoder. Such a map can immediately be extended to the whole input space using Kirzbraun's theorem. To show latent space consistency, first observe that given a metric $\Omega$ and encoder $E$, the realized latent loss turns out to be $\Omega(E_{\#}\hat{\mu}_{n}, \rho)$. It can be fragmented into the following parts based on the independent sources of variation: 
\begin{equation}\label{encode}
    \Omega(E_{\#}\hat{\mu}_{n}, \rho) \leq \underbrace{\Omega(E_{\#}\hat{\mu}_{n}, \widehat{(E_{\#}\mu)}_m)}_\text{Information dissipated} + \underbrace{\Omega(\widehat{(E_{\#}\mu)}_m, \rho)}_\text{Estimation error}.
\end{equation}
While a suitably chosen IPT takes care of the first part, the latter embodies the error committed trying to estimate $\rho$ using samples from the encoded law. In a \textit{lossless encoding} ($E_{\#}\mu \:{=}_{d} \:\rho$), it will boil down to the usual estimation error. Otherwise, one might be left with a surplus error due to the discrepancy between $\widehat{(E_{\#}\mu)}_m$ and a certain $\hat{\rho}_{m}$. \citet{chakrabarty2021statistical} give a non-asymptotic upper bound to the same based on the empirical Yatracos minimizer of $\hat{\rho}_{m}$ (Theorem 1). In the process, they assume the finiteness of VC-dim$[\mathcal{Y}(\cdot)]$ corresponding to $\mathcal{P}(\mathcal{Z})$. Before stating such results, let us take a quick look at the `lossless' setting itself.

\begin{remark}[Lossless encoding in WAEs]
    This occurs only when $E$ is an exact measure preserving map. Since we allow the distributions to have densities, the idea translates to having a change of variables given as $\int_{\Omega_{z}}p_{\rho}(z) dz = \int_{\Omega_{x}}(p_{\rho} \circ E)(x) \:[J_{E}(x)] dx$, where in general, $J_{E}(x)$ is the Clarke differential or generalized Jacobian at $x$ \citep{chiappori2017ot} \footnote{See the exact form the transported density achieves under the co-area formula in \citet{mccann2020optimal}, Section 2. In general cases concerning transformations between variables, the surplus multiplicand is rather $\textrm{vol}[J_{E}(x)]:=$ product of singular values of the $k \times d$ Jacobian matrix $J_{E}$ \citep{ben1999change}.}. This boils down to the more familiar $\abs{\textrm{det}(\nabla E(x))}$, given $E: \mathbb{R}^{d} \rightarrow \mathbb{R}^{d}$ is injective on its domain and continuously differentiable. Consequently, our desired distribution $p_{\rho}$ turns out to be the density corresponding to a $k$-marginal of $E_{\#}\mu$ \footnote{Such transformations can merely be of the Rosenblatt type. Given that $p_{\rho} \in \mathcal{C}^{m_{z}}_{L}(\Omega_{z})$, for some $m_{z} \in \mathbb{R}_{> 0}$; $E$ can be shown to be smooth in the sense of H\"{o}lder-Zygmund \citep{asatryan2023convenient}.}. The existence, let alone the regularity of such a map is not automatically guaranteed. In case both $\mu$ and $\rho$ are nonatomic, such that $\mu$ vanishes on all Lipschitz $(d-1)$-surfaces, there exists a unique $E$ ($\mu$ a.e.), that offers a lossless encoding \citep{McCann1995Optimal}. Brenier theorem \citep{brenier1991polar} argues the same under the additional assumption that the distributions have finite variance. Though obtained under restrictive scenarios, a Brenier map pushing forward the standard Gaussian measure to a uniformly log-concave target distribution would be locally Lipschitz \citep{caffarelli2000monotonicity,COLOMBO20211007}. While this belongs to the class of possible encoders under a special case, it is rather challenging to verify that minimizing the WAE loss attains such a solution. Moreover, given a sample (semi-discrete) problem like ours, the optimal map is likely to be discontinuous. Even if they do not, the injectivity will be sacrificed when the supports are unbounded since they cannot be continuously embedded at the same time.
\end{remark}

Hence, we focus on finding the tolerable margins of losses incurred during encoding instead. (\ref{encode}) becomes a platform on which the exploration of deterministic upper bounds rests. To obtain an upper bound in the case of WAE-GANs, first notice that given $\rho_{1}, \rho_{2} \in \mathcal{P}(\mathcal{Z})$, equipped with corresponding densities such that $\rho_{1} \ll \rho_{2}$,
\begin{align*}
    \textrm{JS}(\rho_{1}, \rho_{2}) \leq \left[\pi \ln\left(\frac{1}{\pi}\right) + (1-\pi) \ln\left(\frac{1}{1-\pi}\right)\right] d_{\textrm{TV}}(\rho_{1}, \rho_{2}) \leq \ln(2) \:d_{\textrm{TV}}(\rho_{1}, \rho_{2})
\end{align*}
(this form is also called the \textit{Information Transmission Rate} \citep{topsoe2000some}), $0\leq \pi \leq 1$. While easier to calculate, it is often technically challenging to deal with JS from a density estimation perspective. By convention, it is assigned value $+\infty$ in case the underlying distributions do not have densities, and as a result, does not metrize $\mathcal{P}(\mathcal{Z})$ in general. However, when taken under square root, JS follows the triangle inequality \citep{endres2003new}. Now, given a Lipschitz encoder $E$, let us consider the realized latent loss under JS-divergence due to the RIK estimator $\Tilde{\mu}_{n}$ (as discussed in Theorem \ref{theo: lip}), defined as $\frac{d\Tilde{\mu}_{n}}{dx} = \frac{1}{nh^d}\sum^{n}_{i=1} K\left(\frac{x}{h}, \frac{x_i}{h}\right)$, $x \in \Omega_x$ where $h_{n} \rightarrow 0$. Fragmenting the same based on unique sources of variation yields,
\begin{align} \label{JS_ineq}
    f(\pi)^{-1}\:\textrm{JS}(E_{\#}\Tilde{\mu}_{n}, \rho) - \Delta_{E,n} \leq d_{\textrm{TV}}\left(E_{\#}\Tilde{\mu}_{n}, \widehat{(E_{\#}\mu)}_n\right) + d_{\textrm{TV}}\left(\widehat{\rho}_n,\rho\right),
\end{align}
given that $\Delta_{E,n} = d_{\textrm{TV}}(\widehat{(E_{\#}\mu)}_n,\widehat{\rho}_n)$, which essentially (in asymptotic regime) determines how much latent regularization can be tolerated. As such, the choice of the optimal encoder must be the one that minimizes $\Delta_{E,n}$. It coincides with the minimum distance estimator (\citet{devroye2001combinatorial}, Theorem 6.4) of $\rho$ amongst encoded candidates ($E_{n}^{*} = \argmin d_{\textrm{TV}}(\widehat{(E_{\#}\mu)}_n,\widehat{\rho}_n)$). Since the other error terms shrink arbitrarily asymptotically ($n \rightarrow \infty$ with fixed $d,k$), given $t \in \mathbb{R}_{>0}$ (as in \ref{const}), we only need to ensure that $\Delta_{E_{n}^{*},n} < t$.   

\begin{remark}[Cost to the Scheff\'{e} Tournament]
    Substituting the plug-in estimates $\widehat{(E_{\#}\mu)}_n$ and $\widehat{\rho}_n$ with smoother alternatives might be computationally beneficial. A successful search for the Scheff\'{e} tournament-winning encoder takes quadratic time \citep{acharya2014sorting}. Instead, let us consider estimators $\widetilde{(E_{\#}\mu)}$ and $\Tilde{\rho}$ respectively, such that $\frac{d\widetilde{(E_{\#}\mu)}}{dz}, \frac{d\Tilde{\rho}}{dz} \in L_{1}(\mathbb{R}^{k})$. The benefit is rooted in viewing the problem from an OT standpoint. Let us write,
    \begin{align}
        &2 d_{\textrm{TV}}\left(\widetilde{(E_{\#}\mu)}_n , \Tilde{\rho}_n\right) = \norm{\widetilde{(E_{\#}\mu)}_n - \Tilde{\rho}_n}_{1} \nonumber \\ &\leq \underbrace{\norm{\widetilde{(E_{\#}\mu)}_n - K_{h}(\widetilde{(E_{\#}\mu)}_n)}_{1}}_{\text{(i)}} + \underbrace{\norm{K_{h}(\widetilde{(E_{\#}\mu)}_n) - K_{h}(\Tilde{\rho}_n)}_{1}}_{\text{(ii)}} + \underbrace{\norm{K_{h}(\Tilde{\rho}_n) - \Tilde{\rho}_n}_{1}}_{\text{(iii)}} \label{rem_scheffe},
    \end{align}
    where given $\rho_{1} \in \mathcal{P}(\mathcal{Z})$, $K_{h}(\rho_{1}) = \int_{\Omega_z} K_{h}(.,y)\rho_{1}(dy) = \frac{1}{h^k} \int_{\Omega_z} K(\frac{.}{h},\frac{y}{h})\rho_{1}(dy)$ defines the convolution with RI kernel $K$. Also, $\frac{y}{h} = (\frac{y_1}{h},...,\frac{y_k}{h})^{'}$, for $h > 0$. The terms (i) and (iii) both $\rightarrow 0$ as $h \rightarrow 0$ (\citet{gine2021mathematical}, Proposition 4.3.31). On the other hand,  
    \begin{align}
        \norm{K_{h}(\widetilde{(E_{\#}\mu)}_n) - K_{h}(\Tilde{\rho}_n)}_{1} &\leq \int \Big\{\frac{1}{h^k} \int \abs{K(\frac{z}{h},\frac{y}{h}) -  K(\frac{z}{h},\frac{y^{'}}{h})} dz \Big\}d\Pi(y,y^{'}) \label{scheffe_jen} \\ &= \int \Bigg\{\frac{\int \abs{K(z^{'},\frac{y}{h}) -  K(z^{'},\frac{y^{'}}{h})} dz^{'}}{\norm{y-y^{'}}}\Bigg\} \:\norm{y-y^{'}} d\Pi(y,y^{'}) \nonumber \\ &\lesssim \frac{1}{h} \int \norm{y-y^{'}} d\Pi(y,y^{'}) \label{scheffe_was}, 
    \end{align}
    where (\ref{scheffe_jen}) is due to Jensen's inequality and $\Pi$ denotes a coupling between $\widetilde{(E_{\#}\mu)}_n$ and $\Tilde{\rho}_n$. The invariance of $K$ implies the inequality (\ref{scheffe_was}), which holds for all such measure couples. Hence, given $c_{z} \equiv L_{2}$, the quantity (ii) $\lesssim \frac{1}{h}d_{\mathcal{L}_{c_{z} }^{1}}(\widetilde{(E_{\#}\mu)}_n, \Tilde{\rho}_n)$. As such, to obtain an optimal encoder $E_{n}^{*}$--- achieving latent consistency--- it is sufficient to compute $\Delta^{'}_{E_{n}^{*},n} = \inf_{E}d_{\mathcal{L}_{c_{z} }^{1}}(\widetilde{(E_{\#}\mu)}_n, \Tilde{\rho}_n)$ instead. This is highly maneuverable computationally due to the sheer attention the problem has received in recent years. One can achieve a complexity of $\Tilde{O}(\frac{n^{\frac{9}{4}}}{t} \wedge \frac{n^{2}}{t^{2}})$ \citep{dvurechensky18a}, even beyond what Sinkhorn's algorithm offers.
\end{remark}


Inequality (\ref{JS_ineq}) provides a clear pathway to a non-asymptotic upper bound to the realized latent loss in a WAE-GAN setup. Given Assumptions \ref{assumption1} and \ref{assumption2}, we begin with $\Tilde{\mu}_{n}$, an RIK density estimate (strongly invariant) of $\mu$ based on bandwidth $h \equiv h_{n} \rightarrow 0$ as $n \rightarrow \infty$, such that $\frac{nh^{d}_{n}}{\abs{\log h^{d}_{n}}} \rightarrow \infty$. Corollary \ref{cor:vc_IPT} implies the existence of a positive constant $E_{1}^{''}$ such that  \begin{equation}
    d_{\textrm{TV}}\left(g_{\#}\Tilde{\mu}_{n}, \widehat{(g_{\#}\mu)}_n\right) \geq t + \mathcal{O}(h^{m_{x}} \vee \sqrt{v_{z}} n^{-\frac{1}{2}})
\end{equation}
holds with probability $ \leq E_{1}^{''} \exp\left\{-E_{3}(1 \wedge h^{d}) nt^{2} \right\}$, whenever for $t \geq \sqrt{\frac{\abs{\log h_{n}}}{nh^{d}_{n}}}$ \citep{gine2002rates} and $g \in \mathscr{F}_{L}(\mathcal{X}, \mathcal{Z})$. This eventually determines the rate associated with the probabilistic statement
\begin{equation*}
    \textrm{JS}(g_{\#}\Tilde{\mu}_{n}, \rho) - f(\pi)\sup_{\rho^{\otimes n}}\Delta_{L,n} = o_{\mathbb{P}}(1),
\end{equation*}
obtained as a consequence of (\ref{JS_ineq}) and assuming VC-dim$[\mathcal{Y}(\mathcal{P}(\mathcal{Z}))] = v_{z} < \infty$. Here, $\Delta_{L,n} = \inf_{g \in \mathscr{F}_{L}(\mathcal{X}, \mathcal{Z})}d_{\textrm{TV}}(\widehat{(g_{\#}\mu)}_n,\widehat{\rho}_n)$ and the supremum is taken over naive estimators constructed based on $n$ replicates from $\rho$. If one employs instead a NN-based encoder $\phi \in \Phi(W, L)^{k}_{d}$ according to our previous prescriptions--- for example Corollary \ref{IPT_ReLU_Group} (i) or (ii)--- an additional estimation error is duly incurred. This, along with the realized $\Delta_{\Phi,n}$ contributes to the extent of tolerable latent loss.


\begin{remark}
    There are some interesting implications in The WAE-GAN regime if along with Assumption \ref{assumption2}, there exists $m_{z} \in \mathbb{R}_{>0}$, such that $p_{\rho} \in \mathcal{C}^{m_{z}}_{L^{'}}(\Omega_{z})$, $L^{'} > 0$. Observe that, due to its definition
\begin{align}
    \textrm{JS}(E_{\#}\Tilde{\mu}_{n}, \rho) &= \pi \textrm{KL}\left(E_{\#}\Tilde{\mu}_{n} \Big| \mathcal{M}_{E_{\#}\Tilde{\mu}_{n},\rho}(\pi)\right)+ (1-\pi)\textrm{KL}\left(\rho \Big| \mathcal{M}_{E_{\#}\Tilde{\mu}_{n},\rho}(\pi)\right) \nonumber \\ & \geq \frac{1}{2}\left[\pi d^{2}_{\textrm{TV}}\left(E_{\#}\Tilde{\mu}_{n}, \mathcal{M}_{E_{\#}\Tilde{\mu}_{n},\rho}(\pi)\right) + (1-\pi)d^{2}_{\textrm{TV}}\left(\rho, \mathcal{M}_{E_{\#}\Tilde{\mu}_{n},\rho}(\pi)\right)\right] \label{Pinsker}\\ & = \frac{1}{2}\pi(1-\pi) d^{2}_{\textrm{TV}}\left(E_{\#}\Tilde{\mu}_{n}, \rho\right) \geq \frac{1}{8}\pi(1-\pi)\norm{E_{\#}\Tilde{\mu}_{n} - \rho}^{2} \label{TV_ineq},
\end{align}
where we define the the mixture as $\mathcal{M}_{E_{\#}\Tilde{\mu}_{n},\rho}(\pi) = \pi E_{\#}\Tilde{\mu}_{n} + (1-\pi) \rho$. The step (\ref{Pinsker}) is due to Pinsker's inequality. We reach (\ref{TV_ineq}) using 
\begin{align*}
    d_{\textrm{TV}}\left(E_{\#}\Tilde{\mu}_{n}, \mathcal{M}_{E_{\#}\Tilde{\mu}_{n},\rho}(\pi)\right) &= \sup_{\omega \in \sigma(\mathcal{Z})}\abs{E_{\#}\Tilde{\mu}_{n}(\omega) - \pi E_{\#}\Tilde{\mu}_{n}(\omega) - (1-\pi) \rho(\omega)} \\ & = (1-\pi)\sup_{\omega \in \sigma(\mathcal{Z})}\abs{E_{\#}\Tilde{\mu}_{n}(\omega) -  \rho(\omega)} = (1-\pi)d_{\textrm{TV}}\left(E_{\#}\Tilde{\mu}_{n},\rho\right).
\end{align*}
Typically, the value of $\pi$ is taken to be $1/2$. Now,
\begin{equation*}
    \norm{E_{\#}\Tilde{\mu}_{n} - \rho}^{2} \geq \inf_{\Tilde{\rho}_{n}} \norm{\Tilde{\rho}_{n} - \rho}^{2},
\end{equation*}
where the infimum is taken over the class of RIK density estimates based on $n$ i.i.d. samples from $p_{\rho}$. Such estimators, under the $L_{2}$ loss tend to have the optimal convergence rate, i.e. $\inf_{\Tilde{\rho}_{n}} \mathbb{E}\norm{\Tilde{\rho}_{n} - \rho}^{2} \gtrsim n^{-\frac{2m_{z}}{2m_{z}+k}}$ (\citet{van2000asymptotic}, Theorem 24.4). As such, this gives us a sharp lower bound for latent performance.    
\end{remark}

Since the latent distribution embodies the hidden representation in input images, it must also remain invariant to certain deformations. For example, latent codes corresponding to an image of a lesion and its rotated counterpart ($\textrm{SO}(k)$) should ideally appear equiprobably. Generative models achieve this by ensuring group symmetry in the target space \citep{Birrell2022StructurepreservingG}. Now, given a group $\Sigma$ (an ordered pair of a nonempty set and a binary operation, satisfying the group axioms), a \textit{group action} $\varphi$ on $\mathcal{Z}$ is an automorphism defined as $\varphi_{\sigma} = \varphi(\sigma, \cdot) : \mathcal{Z} \rightarrow \mathcal{Z}$, for all $\sigma \in \Sigma$, also satisfying $\varphi_{\sigma_{1}} \circ \varphi_{\sigma_{2}} = \varphi_{\sigma_{1}\cdot \sigma_{2}}$, $\forall \sigma_{1}, \sigma_{2} \in \Sigma$. The following definition gives such a specific characterization to $\rho$.
\begin{definition}[Invariant Distributions]
    Given a group $\Sigma$, the class of $\Sigma$-invariant probability distributions on $\mathcal{Z}$ is defined as 
    \begin{equation*}
        \mathcal{P}_{\Sigma}(\mathcal{Z}) = \{\mathbb{P} \in \mathcal{P}(\mathcal{Z}) : \mathbb{P} = (\varphi_{\sigma})_{\#}\mathbb{P}, \forall \sigma \in \Sigma\}.
    \end{equation*}
\end{definition}
Throughout the paper, we only consider finite groups, i.e. $\abs{\Sigma} < \infty$. This makes the representation of the underlying space under group actions much easier. To that end, let us introduce the \textit{fundamental domain} $\mathcal{Z}_{0} \subset \mathcal{Z}$, which is defined such that the subsets $\sigma \mathcal{Z}_{0}$, $\sigma \in \Sigma$ form a locally finite cover of $\mathcal{Z}$ without sharing common interior points. This translates to saying that there exists a unique $z_{0} \in \mathcal{Z}_{0}$ corresponding to each $z \in \mathcal{Z}$ such that $z=\sigma z_{0}$ \citep{chen2023sample}. In order to adapt to the estimation of $\Sigma$-invariant latent distributions, we make additional assumptions for the underlying kernels in MMDs.

\begin{assumption}[Group Invariant Kernels] \label{group_ker}
    The kernel $\kappa(\cdot,\cdot)$ satisfies $\forall \sigma (\neq \textrm{id}) \in \Sigma$
    \begin{enumerate}[label=(\roman*),itemsep=0.2cm]
        \item Given $z,z^{'} \in \mathcal{Z}$, $\kappa(\sigma z, \sigma z^{'}) = \kappa(z, z^{'})$, and
        \item There exists $0<\varsigma_{\kappa,\Sigma}<1$ such that $\kappa(\sigma z, z) \leq \varsigma_{\kappa,\Sigma}C_{\kappa}$, where $z \in \mathcal{Z}_{0}$.
    \end{enumerate}
\end{assumption}

\begin{theorem}[Latent Space Consistency in WAE-MMD under Invariance] \label{latent_con_MMD}
    Let, $\rho \in \mathcal{P}_{\Sigma}(\Omega_{z})$ such that $\abs{\Sigma} < \infty$. Also, let $\kappa(\cdot,\cdot)$ be strongly invariant satisfying Assumption \ref{group_ker} such that $\sup_{z \in \Omega_{z}}\kappa(z,z) \leq C_{\kappa}$, for $C_{\kappa} > 0$. Then, there exists a probabilistic encoder $\phi \in \Phi(W, L)^{k}_{d}$, based on ReLU activations with $W = \mathcal{O}(d\floor{N_{1}^{\frac{1}{d}}} \vee N_{1}+1)$ and $L = \mathcal{O}(N_{2})$ such that given $\delta>0$, we have with probability $1-\delta$
    \begin{align*}
        d_{\mathcal{H}_{\kappa}}&\left(\phi_{\#}\hat{\mu}_{n}, \rho\right) - \sqrt{C_{\kappa}} \sup_{\rho^{\otimes n}}\Delta_{\Phi,n} \leq \sqrt{c_{g,n}L}\left(\frac{\max\{B^{2}_{x}, 4C_{\kappa}[1+\varsigma_{\kappa,\Sigma}\left(\abs{\Sigma}-1\right)]\}}{2n}\ln{(\frac{2}{\delta})}\right)^{\frac{1}{4}} \\ &+ \mathcal{O}(\sqrt{c_{g,n}}(d^{2}n)^{-\frac{1}{2d}}) + \sqrt{\frac{2C_{\kappa}}{n}}\left[1 + \sqrt{\frac{1+\varsigma_{\kappa,\Sigma}\left(\abs{\Sigma}-1\right)}{\abs{\Sigma}}}\right] + \mathcal{O}(\sqrt{d D_{n}}N_{1}^{-\frac{2}{d}}N_{2}^{-\frac{2}{d}}),
    \end{align*}
    where both $D_{n}$ and $c_{g,n}$ are sequences based on $n$ that $\searrow 0$ almost surely as $n \rightarrow \infty$.
\end{theorem}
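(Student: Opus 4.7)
The plan is to mirror inequality (\ref{encode}) but within the RKHS $\mathcal{H}_{\kappa}$, with $\rho$ estimated by a $\Sigma$-symmetrised measure that exploits the invariance hypothesis. Given $n$ auxiliary i.i.d. draws $\{Y_{j}\}_{j=1}^{n}\sim\rho$, I would introduce the orbit-averaged empirical measure
\begin{equation*}
    \tilde{\rho}_{n}=\frac{1}{n\,\abs{\Sigma}}\sum_{j=1}^{n}\sum_{\sigma\in\Sigma}\delta_{\varphi_{\sigma}Y_{j}},
\end{equation*}
which lies in $\mathcal{P}_{\Sigma}(\mathcal{Z})$ by construction and shares its mean KME with $\hat{\rho}_{n}$. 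A triangle inequality splits the total loss as
\begin{equation*}
    d_{\mathcal{H}_{\kappa}}(\phi_{\#}\hat{\mu}_{n},\rho)\leq d_{\mathcal{H}_{\kappa}}(\phi_{\#}\hat{\mu}_{n},\widehat{(\phi_{\#}\mu)}_{n})+d_{\mathcal{H}_{\kappa}}(\widehat{(\phi_{\#}\mu)}_{n},\tilde{\rho}_{n})+d_{\mathcal{H}_{\kappa}}(\tilde{\rho}_{n},\rho),
\end{equation*}
yielding an information-dissipation term, an identifiability term and a target-estimation term to attack separately.

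The first summand is handled by invoking Theorem \ref{MMD_IPT} with $m=n$ and choosing $\phi$ to be the ReLU network from Corollary \ref{IPT_ReLU_Group}(i); since that network approximates a reference $g\in\mathscr{F}_{L}(\mathcal{X},\mathcal{Z})$ uniformly at rate $\mathcal{O}(\sqrt{d}LB_{x}N_{1}^{-2/d}N_{2}^{-2/d})$, it produces the contributions $\mathcal{O}(\sqrt{c_{g,n}}(d^{2}n)^{-1/(2d)})$, $\mathcal{O}(\sqrt{dD_{n}}N_{1}^{-2/d}N_{2}^{-2/d})$ and one $\sqrt{2C_{\kappa}/n}$ piece of the stated bound. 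The middle summand is absorbed by $\sqrt{C_{\kappa}}\sup_{\rho^{\otimes n}}\Delta_{\Phi,n}$ after passing to the worst case over the $\rho$-samples used to construct $\tilde{\rho}_{n}$, exactly as $\Delta_{E,n}$ was extracted in the WAE-GAN argument following (\ref{JS_ineq}).

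For the third summand I would expand $\mathbb{E}[d^{2}_{\mathcal{H}_{\kappa}}(\tilde{\rho}_{n},\rho)]$ as a $V$-statistic. The kernel identity $\kappa(\varphi_{\sigma}z,\varphi_{\sigma}z')=\kappa(z,z')$ combined with $\Sigma$-invariance of $\rho$ collapses every cross-sample expectation onto $\iint\kappa\,d\rho\otimes d\rho$, which cancels against the term coming from $-2\langle K(\tilde{\rho}_{n}),K(\rho)\rangle$. What survives is the within-sample contribution: orbit-diagonal pieces are bounded by $C_{\kappa}$, while orbit-off-diagonal pieces reduce (via kernel $\Sigma$-invariance, after representing each $Y_{j}$ as $\varphi_{\tau}Y_{j}^{0}$ with $Y_{j}^{0}\in\mathcal{Z}_{0}$) to $\kappa(\varphi_{\sigma^{-1}\sigma'}Y_{j}^{0},Y_{j}^{0})\leq\varsigma_{\kappa,\Sigma}C_{\kappa}$ by Assumption \ref{group_ker}(ii). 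Summing and applying Jensen yields
\begin{equation*}
    \mathbb{E}[d_{\mathcal{H}_{\kappa}}(\tilde{\rho}_{n},\rho)]\leq\sqrt{\frac{2C_{\kappa}[1+\varsigma_{\kappa,\Sigma}(\abs{\Sigma}-1)]}{n\abs{\Sigma}}},
\end{equation*}
which is precisely the second bracketed term in the statement.

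To upgrade the expectation into a $1-\delta$ bound I would apply the bounded-differences inequality to the \emph{joint} functional $(X_{1},\ldots,X_{n},Y_{1},\ldots,Y_{n})\mapsto d_{\mathcal{H}_{\kappa}}(\phi_{\#}\hat{\mu}_{n},\tilde{\rho}_{n})$ rather than concentrate the two pieces separately. Perturbing one $X_{i}$ shifts $\phi_{\#}\hat{\mu}_{n}$ by an RKHS element whose squared norm is controlled by $B_{x}^{2}$ through strong invariance of $\kappa$ and the $\sqrt{D_{n}}\|\phi-g\|_{\infty}^{1/2}$ device already present in Theorem \ref{MMD_IPT}; perturbing one $Y_{j}$ shifts $\tilde{\rho}_{n}$ by an orbit average of squared RKHS norm at most $4C_{\kappa}[1+\varsigma_{\kappa,\Sigma}(\abs{\Sigma}-1)]/\abs{\Sigma}$. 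Taking the larger of the two per-coordinate variances delivers exactly the constant $\max\{B_{x}^{2},4C_{\kappa}[1+\varsigma_{\kappa,\Sigma}(\abs{\Sigma}-1)]\}$ inside the $(\,\cdot\,)^{1/4}$ deviation term. The main obstacle is the bookkeeping in this last step: Theorem \ref{MMD_IPT} already internalises its own McDiarmid step, so I must rerun that inequality with the enlarged bounded-differences constant rather than layer a union bound on top (which would split the $\max$ into a sum and halve the probability budget). The hypothesis $\varsigma_{\kappa,\Sigma}<1$ is the crux: without it the orbit-off-diagonal blow-up would absorb the $1/\abs{\Sigma}$ gain from symmetrisation, and the improvement over the unsymmetrised bound would vanish.
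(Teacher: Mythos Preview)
Your decomposition, your handling of the identifiability term via the $\sqrt{C_\kappa}\,d_{\textrm{TV}}$ bound, and your use of the orbit-averaged empirical measure $\tilde\rho_n$ are exactly what the paper does (the paper calls it $S^\Sigma[\hat\rho_n]$ and introduces it only for the third summand, but that is cosmetic). Your direct $V$-statistic computation of $\mathbb{E}[d^2_{\mathcal{H}_\kappa}(\tilde\rho_n,\rho)]$ is a clean alternative to the paper's Rademacher symmetrisation and lands on the same constant.

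The one place your plan diverges from the paper---and where it does not reproduce the stated bound---is the concentration step. The $(\,\cdot\,)^{1/4}$ exponent together with the prefactor $\sqrt{c_{g,n}L}$ and the constant $B_x^2$ do \emph{not} arise from a bounded-differences argument applied to $d_{\mathcal{H}_\kappa}(\phi_{\#}\hat\mu_n,\tilde\rho_n)$ as a function of the $X_i$'s. They come instead from inequality (\ref{thm4.8_3}) inside the proof of Theorem~\ref{MMD_IPT}: one bounds $d^2_{\mathcal{H}_\kappa}(\phi_{\#}\hat\mu_n,\phi_{\#}\mu)\le D_n\|\phi-g\|_\infty + c_{g,n}L\,d_{\mathcal{L}^1_{c_x}}(\hat\mu_n,\mu)$, applies McDiarmid to the \emph{Wasserstein} term $d_{\mathcal{L}^1_{c_x}}(\hat\mu_n,\mu)$ (whose per-coordinate fluctuation is $B_x/n$, whence the $B_x^2$), and only then takes a square root---producing $\sqrt{c_{g,n}L\,t}$ and hence the quarter power after the change of variables $t\mapsto\delta$. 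A direct McDiarmid on the MMD functional would give per-$X_i$ fluctuation $2\sqrt{C_\kappa}/n$ (not $B_x/n$) and a $(\,\cdot\,)^{1/2}$ deviation without $c_{g,n}$, which is a different bound from the one asserted. The paper therefore does not run one joint bounded-differences inequality; it keeps Theorem~\ref{MMD_IPT}'s internal concentration on $W_1$, applies a second McDiarmid to $d_{\mathcal{H}_\kappa}(\hat\rho_n,\rho)$ with the orbit-controlled constant you computed, and then takes a union bound, absorbing both exponents into the single $\max\{B_x^2,\,4C_\kappa[1+\varsigma_{\kappa,\Sigma}(|\Sigma|-1)]\}$. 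Your instinct that ``rerunning with the enlarged constant'' is the right move is correct, but the functional to which McDiarmid is applied on the $X$-side is $d_{\mathcal{L}^1_{c_x}}(\hat\mu_n,\mu)$, not the push-forward MMD.
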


\begin{remark}[Mitigating Curse of Dimensionality] \label{curse_dim}
    The term contributing to a slower convergence rate (second on the RHS) due to its dependence on $d$ is rooted in the estimation error under the Wasserstein metric (see proof). While we do not allow the input dimension $d$ to grow as a function of $n$, it being inherently large degrades the sharpness of the non-asymptotic bound. In search of a remedy, we recall the solution \citet{chakrabarty2021statistical} resorted to, namely the 1-upper Wasserstein dimension ($d^{*}_{1}$). It is typically smaller compared to the Minkowski–Bouligand dimension. However, in case $\mu$ is essentially supported on a `latent' regular space, e.g. a compact $d^{'}$-dimensional differentiable manifold ($d^{'} < d$), we have $d^{*}_{1} = d^{'}$ \citep{Weed2017Sharp}. This suits our discussion since such a phenomenon regularly occurs in image datasets \citep{pope2021the}. The definition of $d^{*}_{1}$ goes as follows
    \begin{align*}
        d^{*}_{1}(\mu) = \inf\{s \in (2,\infty): \limsup_{\varepsilon \rightarrow 0}\frac{\mathcal{N}_{\varepsilon}(\mu, \varepsilon^{\frac{s}{s-2}})}{-\log(\varepsilon)} \leq s\},
    \end{align*}
    where $\mathcal{N}_{\varepsilon}$ denotes the covering number. Now, if $s > d^{*}_{1}(\mu)$, the second term on the right hand of the inequality in Theorem \ref{latent_con_MMD} can be replaced by $\mathcal{O}(\sqrt{c_{g,n}}n^{-\frac{1}{2s}})$. We also point out that the other term carrying $d$ in the exponent does not contribute to asymptotic rates since the encoders constructed are usually of fixed proportions. Since the upper bound becomes $o(1)$, using the Borel-Cantelli lemma, the latent WAE-MMD error deviated from $\sqrt{C_{\kappa}} \sup_{\rho^{\otimes n}}\Delta_{\Phi,n}$ vanishes almost surely. 
\end{remark}



\subsection{Simulations} \label{sim_latent}
To validate our findings empirically, we carry out experiments on both real and synthetic data [Fig \ref{fig:dataset}]. The existing data set we work on is MNIST, consisting of 70,000 2D images of hand-written digits. The other data set, `Five-Gaussian' is a collection of 50,000 observations, drawn independently at random out of five trivariate Gaussians with unit dispersion and mean at five vertices of a unit cube. We run both WAE-GAN and WAE-MMD to reconstruct observations from the two data sets. All the experiments were carried out on an RTX 3090 GPU.




\begin{figure}[ht]
    \centering
    \includegraphics[width=.7\linewidth]{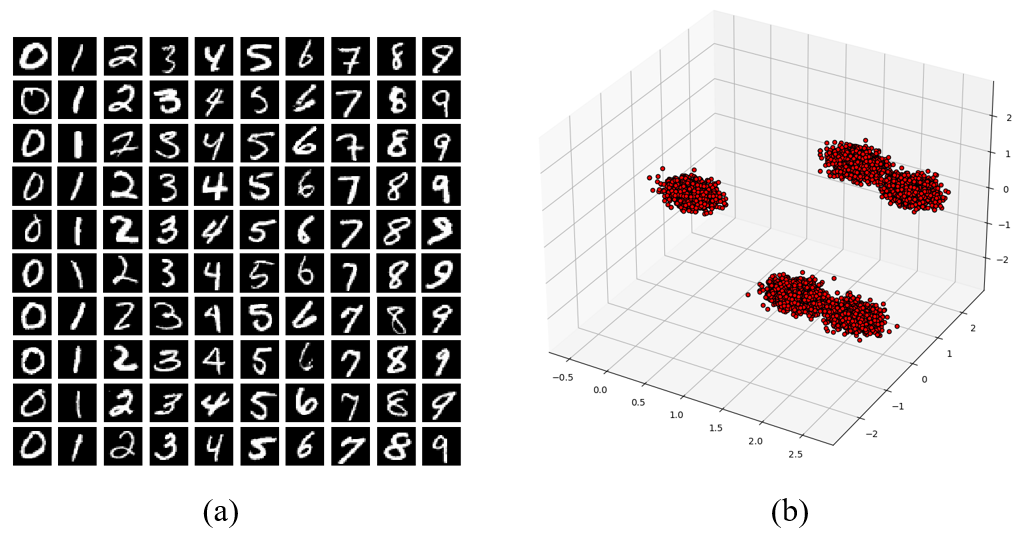}
    \caption{The (a) MNIST and (b) Five-Gaussian data sets.}
    \label{fig:dataset}
\end{figure}

\textit{Five-Gaussian:} The encoders we use in the case of Five-Gaussian data, map the points to a two-dimensional latent space. The first kind we deploy is $4$-deep and is based on ReLU activation. The last layer uses an additional rescaling to span the target support and mitigate zero-inflation. To suit our theoretical specifications, we experiment with diverse latent distributions. Namely, Bivariate standard Gaussian, and the classes of bivariate distributions having Beta and Exponential marginals respectively. We call them Beta and Exponential \textit{copulas} by somewhat abusing the terminology. This way we encompass unbounded supports, multimodality, and skewed densities. Firstly, we train the model based on the entire sample ($n=50,000$) to obtain the nearest estimate of the population loss. Our goal now is to observe the propagation of the losses as we gradually increase the sample size $n = (1000, 3000, 5000, \cdots, 50,000)$, drawn uniformly at random. Our findings from 20 runs corresponding to each $n$ are given as follows.  

\begin{figure}[ht]
    \centering 
\begin{subfigure}{0.29\textwidth}
  \includegraphics[width=\linewidth]{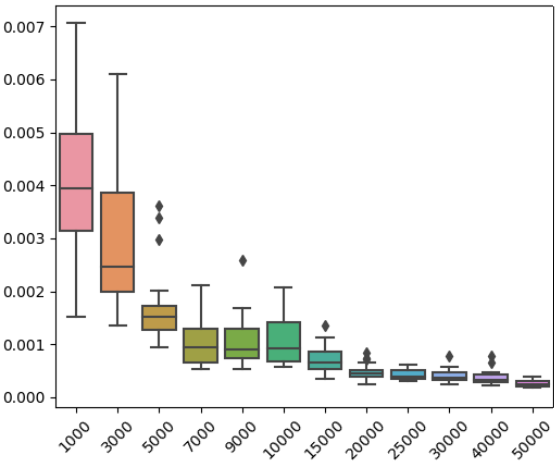}
  \caption{Gaussian}
  \label{fig:JS_Gauss}
\end{subfigure}\hspace{3pt} 
\begin{subfigure}{0.32\textwidth}
  \includegraphics[width=\linewidth]{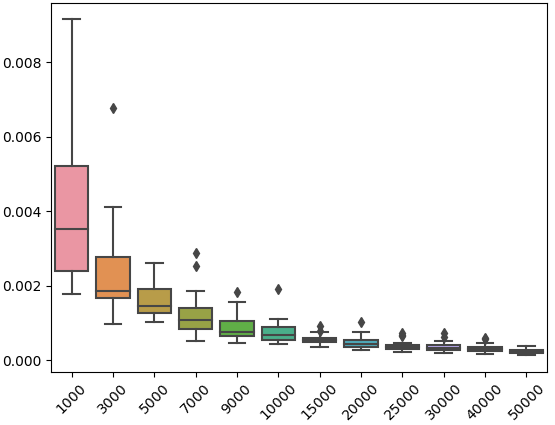}
  \caption{Beta copula}
  \label{fig:JS_Beta}
\end{subfigure}\hspace{3pt} 
\begin{subfigure}{0.32\textwidth}
  \includegraphics[width=\linewidth]{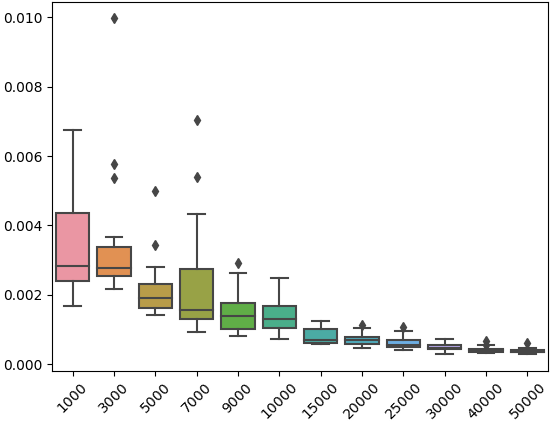}
  \caption{Exponential copula}
  \label{fig:JS_Exp}
\end{subfigure}\hfil 
\begin{subfigure}{0.6\textwidth}
  \includegraphics[width=\linewidth]{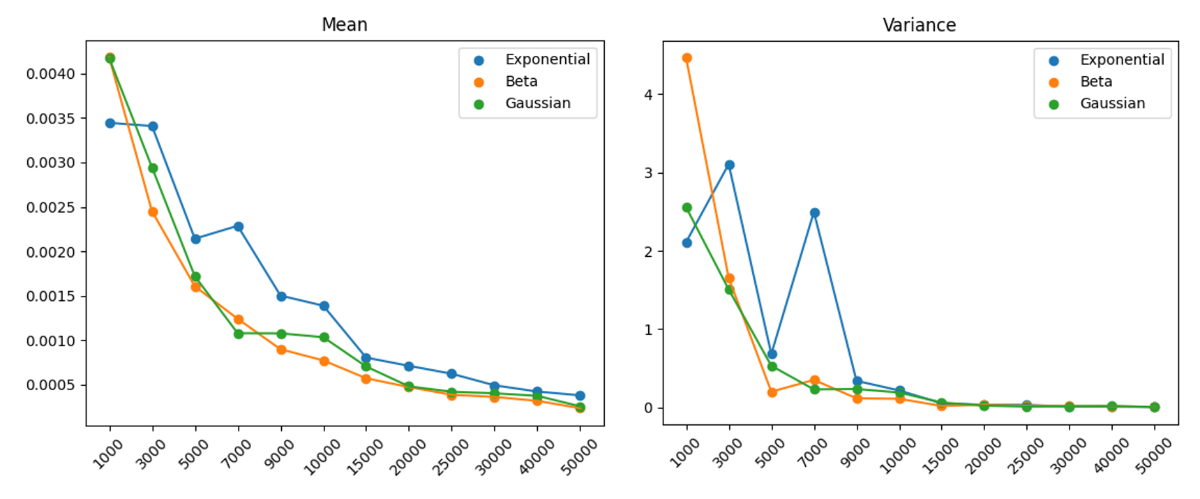}
  \caption{Mean and Variance of latent loss}
  \label{fig:JS_Mean_Var}
\end{subfigure}\hfil 
\caption{Latent loss under Jensen-Shannon divergence and ReLU encoders. The Lagrangian weight assigned to the latent space, as given in \ref{WAE_loss_lag}, is taken to be $\lambda = 0.2$. We consider both Gaussian and Exponential marginal densities standard. The parameters for Beta marginals are taken as $(0.5,0.8)$.}
\label{fig:JS}
\end{figure}

The illustrations show that for all three latent laws, the sample losses approach their population counterpart with diminishing variance. In search of the sharpest asymptotic rate associated--- even beyond the theoretically achievable $\mathcal{O}(n^{-\frac{1}{2}})$--- we observe the movement of the loss multiplied by $n$. In our experiments [Fig \ref{fig:JS*n}], such a sequence also tends to converge to a small constant, namely the population error margin. This is an empirical guarantee to the impression \citet{asatryan2023convenient} [Remark 4.6] had ($\hat{d}_{\textrm{JS}} \sim n^{-1}$) in a parametric GAN generation. However, using a GroupSort activated encoder (grouping size $2$) one may observe the approximate rate of $\mathcal{O}(n^{-\frac{1}{2}})$ in diminishing MMD losses [Fig \ref{fig:GB_recon_MMD_Group} (b), (c)]. The choice of regularizing parameter $\lambda$ is chosen based on a trade-off between quality reconstruction and latent performance. 

\begin{figure}[ht]
\begin{subfigure}{0.29\textwidth}
  \includegraphics[width=\linewidth]{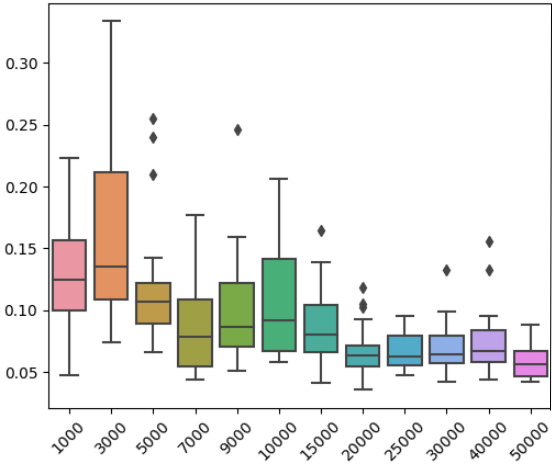}
  \caption{Gaussian}
  \label{fig:JS_Gauss*n}
\end{subfigure}\hspace{2pt} 
\begin{subfigure}{0.32\textwidth}
  \includegraphics[width=\linewidth]{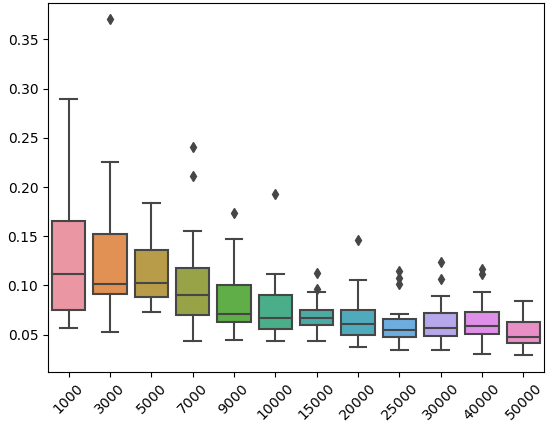}
  \caption{Beta}
  \label{fig:JS_Beta*n}
\end{subfigure}\hspace{2pt} 
\begin{subfigure}{0.32\textwidth}
  \includegraphics[width=\linewidth]{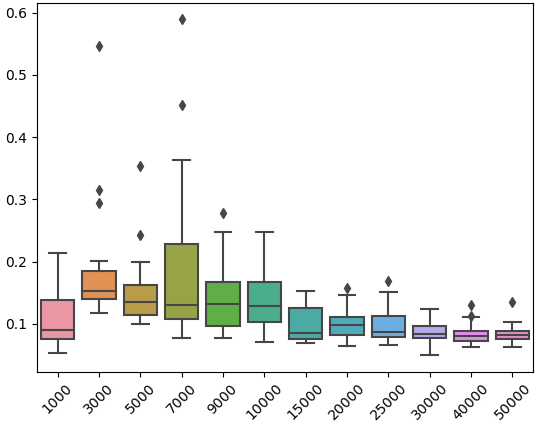}
  \caption{Exponential}
  \label{fig:JS_Exp*n}
\end{subfigure} 
\caption{Propagation of sample corrected ($\times n$) latent JS loss under ReLU encoders.}
\label{fig:JS*n}
\end{figure}

We follow the same experimental protocol for WAE-MMD [Fig \ref{fig:MMD_total}], taking the latent distribution as bivariate standard Gaussian. Here also, a similar trait is noticed. The observed MMD losses gradually decrease to their population counterpart with diminishing variability. The rate of convergence however, becomes comparable to $\mathcal{O}(n^{-\frac{1}{2}})$, attesting the theoretical result. While the latent loss--- asymptotically at the least--- moves close to nullity, the bin estimates corresponding to the target and the encoded law must differ. This discrepancy, as we have already discussed, is rooted in the information preserved.  

\begin{figure}[ht]
    \centering 
\begin{subfigure}{0.31\textwidth}
  \includegraphics[width=\linewidth]{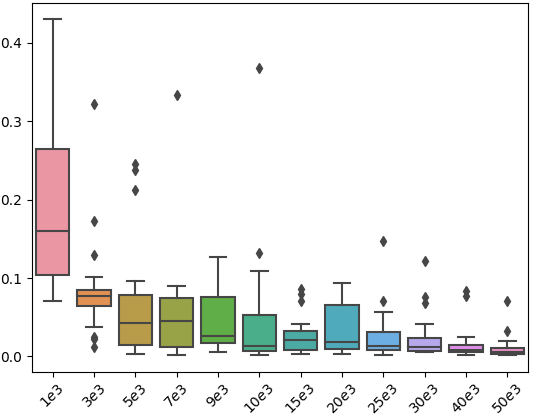}
  \caption{}
  \label{fig:MMD}
\end{subfigure} 
\begin{subfigure}{0.32\textwidth}
  \includegraphics[width=\linewidth]{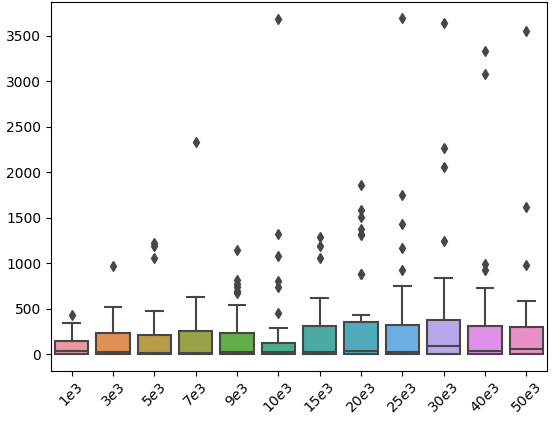}
  \caption{}
  \label{fig:MMD*n^1/3}
\end{subfigure} 
\caption{Propagation of (a) sample MMD losses and (b) sample corrected ($\times n^{\frac{1}{2}}$) MMD losses, trained with the Lagrangian parameter $\lambda = 0.2$ using ReLU encoders.}
\label{fig:MMD_total}
\end{figure}

We study the concentration of encoded bins in contrast with the latent ones over regular intervals of 200 training epochs. It is fascinating to observe the rearrangement of density as the losses slowly diminish. We present a detailed commentary on the same in the Appendix [Fig \ref{fig:JS_Beta_Latent_whole}, \ref{fig:MMD_Gauss_Latent_whole}]. Here instead, we show the encoded estimates after the completion of 2000 epochs [Fig \ref{fig:Hist_latent}]. The information retention can be readily identified from the high-density areas representing the distinct clusters. The quantile-quantile (QQ) plots [Fig \ref{fig:MMD_Gauss_qqplot}] between the empirical encoded distribution and the targets also tell the same story. To check the extent of approximation, we also perform multivariate goodness-of-fit tests (see Appendix \ref{appB}).        

\begin{figure}[ht]
     \centering 
\begin{subfigure}{0.32\textwidth}
  \includegraphics[width=\linewidth]{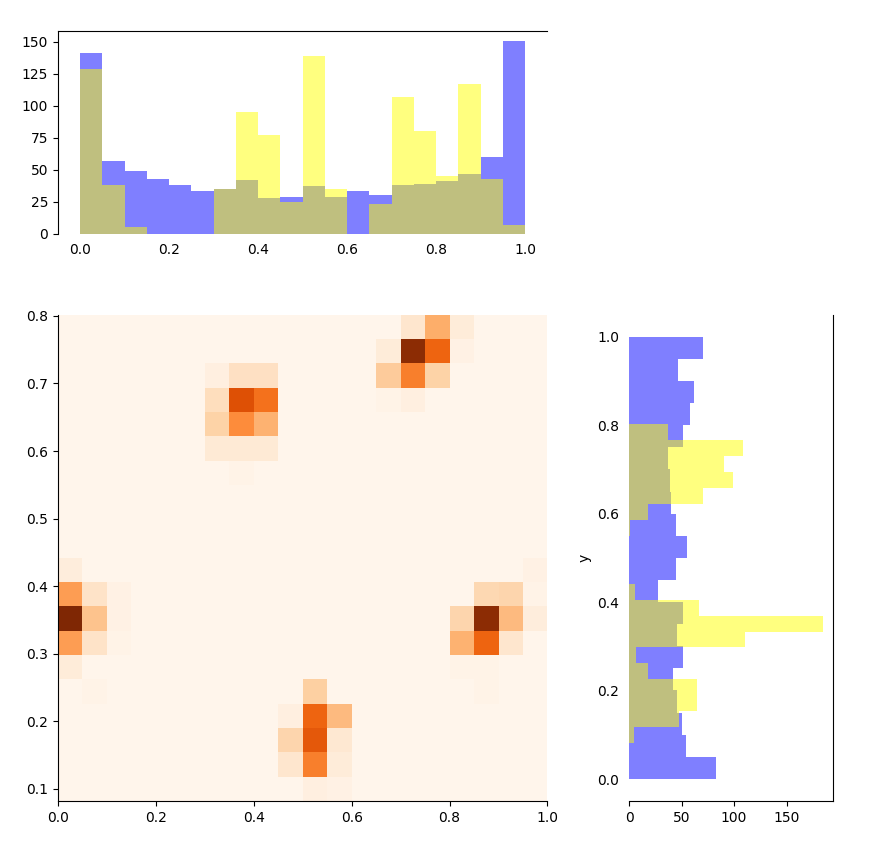}
  \caption{}
  \label{fig:Bin_JS_Beta}
\end{subfigure} 
\begin{subfigure}{0.32\textwidth}
  \includegraphics[width=\linewidth]{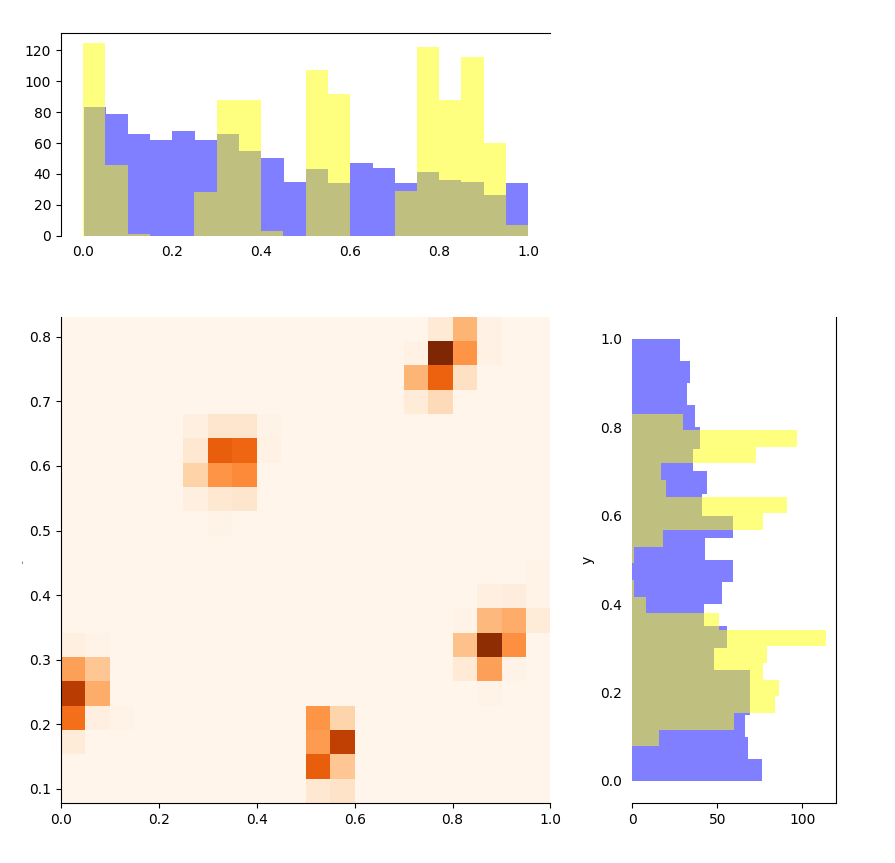}
  \caption{}
  \label{fig:Bin_JS_Exp}
\end{subfigure} 
\begin{subfigure}{0.32\textwidth}
  \includegraphics[width=\linewidth]{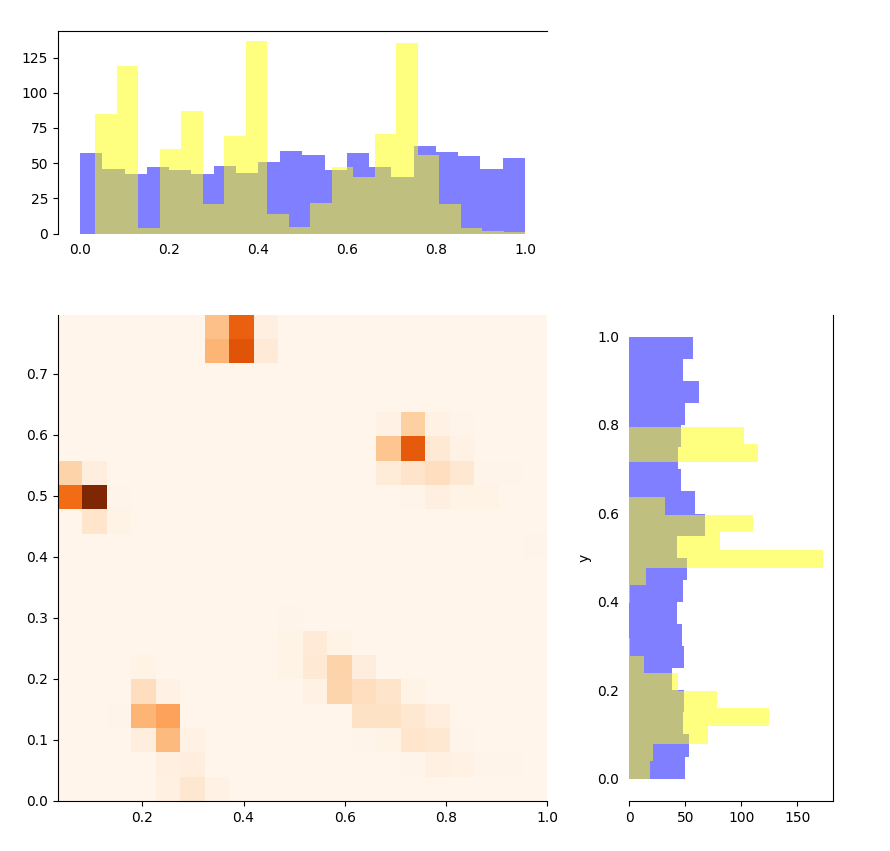}
  \caption{}
  \label{fig:Bin_MMD_Gauss}
\end{subfigure} 
\caption{Bin estimates of ReLU-encoded ({\color{yellow}yellow}) vs latent distribution ({\color{blue}blue}) in case of the Five-Gaussian data. Under the JS loss, we observe (a) Beta $(0.5,0.8)$ and (b) standard Exponential copula, (c) shows standard Gaussian under MMD loss. (Effective range of values scaled to aid visualization)} 
\label{fig:Hist_latent}
\end{figure}

For checking the efficacy of GroupSort activations in encoding, we repeat the experiment in a WAE-GAN setup [Fig \ref{fig:Groupsort}]. The regularization remains at $\lambda =0.2$ and grouping size is taken as $2$ (OPLU). Quite similar to previous observations, the losses tend to decrease at a familiar rate. 
\begin{figure}[ht]
\begin{subfigure}{0.32\textwidth}
  \includegraphics[width=\linewidth]{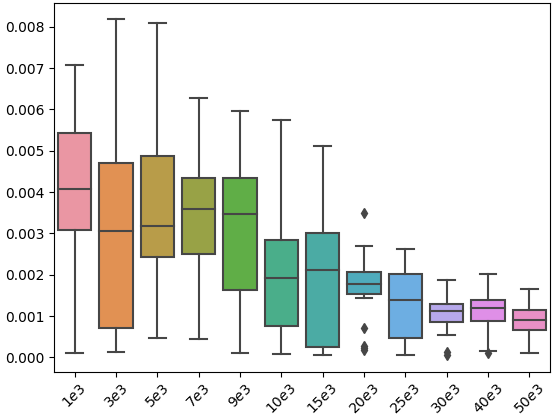}
  \caption{Gaussian}
  \label{fig:Groupsort_Gauss}
\end{subfigure}\hspace{2pt} 
\begin{subfigure}{0.32\textwidth}
  \includegraphics[width=\linewidth]{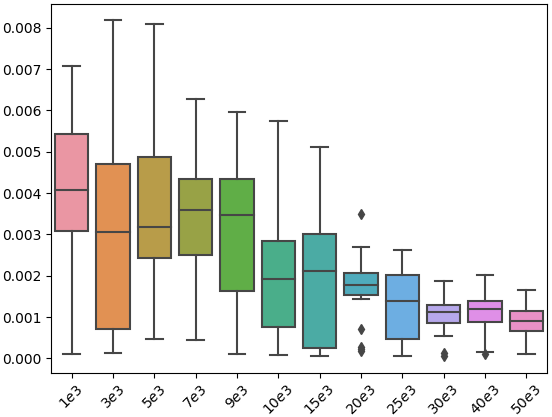}
  \caption{Beta}
  \label{fig:Groupsort_Beta}
\end{subfigure}\hspace{2pt} 
\begin{subfigure}{0.32\textwidth}
  \includegraphics[width=\linewidth]{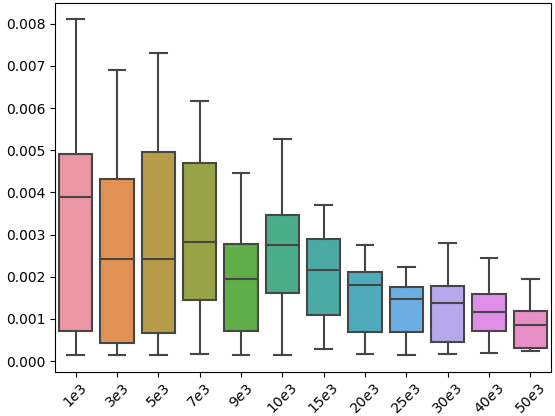}
  \caption{Exponential}
  \label{fig:Groupsort_Exp}
\end{subfigure} 
\caption{Latent JS loss under Groupsort encoders of grouping size 2.}
\label{fig:Groupsort}
\end{figure}

\textit{MNIST:} The individual images in MNIST are of size $(28 \times 28)$. In vectorized form, the input observations are reduced to $d=512$. Here also we deploy a $4$-deep ReLU encoder with layers of width 512-256-128-64$=k$. During decoding, the output tensor is reshaped to have a size $(\textrm{batch size}, 1, 28, 28)$, which in turn enables us to calculate the reconstruction loss. Keeping in mind the high dimensionality of the latent space, we only consider a multivariate standard Gaussian target. The findings from training runs on both WAE-GAN and WAE-MMD with regularization $\lambda=0.2$ are obtained as in Fig \ref{fig:MNIST_Gauss_latent}. 
\begin{figure}[ht]
     \centering 
\begin{subfigure}{0.34\textwidth}
  \includegraphics[width=\linewidth]{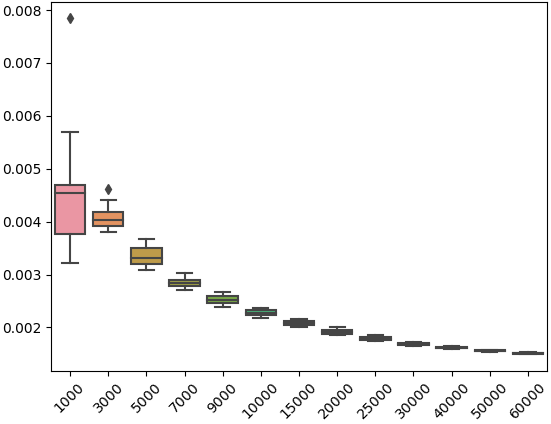}
  \caption{}
  \label{fig:MNIST_JS_Gauss}
\end{subfigure} 
\hspace{4pt}
\begin{subfigure}{0.32\textwidth}
  \includegraphics[width=\linewidth]{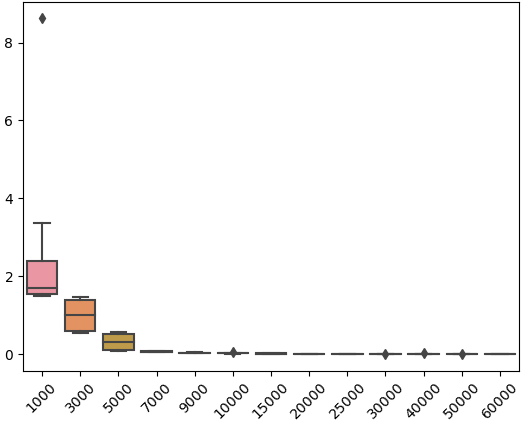}
  \caption{}
  \label{fig:MNIST_MMD_Gauss}
\end{subfigure} 
\caption{Latent (a) JS and (b) MMD loss for MNIST data set with Gaussian targets.}
\label{fig:MNIST_Gauss_latent}
\end{figure}

\section{Reconstruction Consistency} \label{Recon_Conc}
With a clearer understanding of WAEs' performance towards meeting the constraint it was formulated under, we move on to its main objective. If we position ourselves along the flow of information--- first through the encoder and now at the footsteps of the decoder--- we have ourselves a density estimation task in higher dimensions. This is typical of inverse models and the underlying goal is to utilize the encoded information at one's disposal to reach as close as to $\hat{\mu}_{n}$. Needless to say, it comes as a consequence of the error $W_{c_{x}}^{1}(\hat{\mu}_{n}, (D \circ E^{*}_{n}(t))_{\#}\hat{\mu}_{n})$ being minimized, given the optimal encoder $E^{*}_{n}(t)$ incurring latent loss $\leq t$.

In spirit, the role of the decoder is somewhat similar to a generative map. The aspects in which they differ from those in a GAN architecture are mainly twofold. Firstly there is no dynamic critic in the form of a discriminator to guide its learning. The role is taken up by $\mathcal{L}_{c_{x}}^1$ only. However, on the upside, while the latent distribution in GANs is non-informative, to begin with, WAEs have latent laws with input information `preserved'. During decoding, this very information needs to be used with utmost efficacy. Since the resemblance between spaces of different dimensions in a sample problem lies in the local geometry based on pairwise distances between samples, \citet{chakrabarty2021statistical} identify quasi-isometries as ideal decoders. The definition provided in their article however indicates a bi-Lipschitz characterization. In general, $\mathbb{R}^k$ is not quasi-isometric to $\mathbb{R}^d$, $d > k$. However, there may exist bi-Lipschitz maps from $\Omega_{z} \rightarrow \mathbb{R}^l$, $l \leq d$, which can thus be used to form outer extensions mapping $\mathbb{R}^{k} \rightarrow \mathbb{R}^{d}$ \citep{mahabadi2018nonlinear}. Such extensions typically preserve distortions up to constant factors and as a result, do not depreciate the asymptotic behavior of estimates post-translation. Another technique of ensuring bi-Lipschitzness is to search for \textit{regular}\footnote{A Lipschitz map $f:(\mathcal{Z},c_{z}) \rightarrow (\mathcal{X},c_{x})$ is said to be regular if there exists a constant $C>0$ such that given any ball $B$ in $\mathcal{Z}$, $f^{-1}(B)$ can be covered by at most $C$ balls, each of radius $C.\textrm{rad}(B)$ \citep{david2000regular}.} maps $f:\mathbb{R}^{k} \rightarrow \mathbb{R}^{d}$. These in turn make the restriction of $f$ to $\overline{\Omega}_{z}$ (closure) to be BL. While bi-Lipschitz transforms acting as decoders automatically enforce non-degeneracy in reconstructed signals, to obtain upper bounds of associated errors it is sufficient to have Lipschitz decoders. Most theoretical studies on GANs tend to impose this restriction on generators. The existence of such a benchmark Lipschitz transform, however, is readily guaranteed if encoders are considered according to Lemma \ref{JL}. This also supports the practical convention of building the decoder as exactly the inverse of $E^{*}(t)$.

We, on the contrary, prescribe constructing a decoder map that utilizes the information encapsulated in the latent law wholly and efficiently. To demonstrate the notion, let us fragment the reconstruction loss, given an encoder $E$, as follows
\begin{equation} \label{decoding}
    W_{c_{x}}^{1}(\mu, (D \circ E)_{\#}\hat{\mu}_{n}) \leq W_{c_{x}}^{1}((D \circ E)_{\#}\hat{\mu}_{n}, D_{\#}\rho) + \underbrace{W_{c_{x}}^{1}(D_{\#}\rho, \hat{\mu}_{n})}_\text{Decoder Translation Error} + W_{c_{x}}^{1}(\hat{\mu}_{n}, \mu).
\end{equation}

Observe that the first term on the right is essentially the propagated disagreement at the latent space. If the metric, measuring the discrepancy follows a data-processing inequality, we can expect a non-asymptotic upper bound of the same order as that obtained on the latent error. As such, $D$ must be constructed keeping in mind the sole aim of minimizing the semi-discrete translation error. The following lemma provides the backbone of the construction.

\begin{lemma}[\citet{yang2022capacity}] \label{yang}
    Let $\nu$ be an univariate absolutely continuous distribution and $\hat{\mu}_{n} \in \mu^{\otimes n}$. Given $W \geq 7d + 1$ and $L \geq 2$, there exists a NN transform based on ReLU activation $\phi^{'} \in \Phi(W,L)_{1}^{d}$ such that $\forall \varepsilon > 0$
    \begin{equation*}
        W_{c_{x} \equiv L_{1}}^{1}(\hat{\mu}_{n}, \phi^{'}_{\#}\nu) \leq \varepsilon,
    \end{equation*}
    whenever $n \leq \frac{W-d-1}{2}\floor{\frac{W-d-1}{6d}}\floor{\frac{L}{2}} + 2$.
\end{lemma}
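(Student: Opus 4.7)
The statement is an approximation/expressivity result: a ReLU network of prescribed width and depth can transport a continuous univariate source distribution to a discrete empirical measure in $\mathbb{R}^d$ to arbitrary Wasserstein accuracy. The plan is to proceed in three conceptual stages---reduce the source, construct an idealised discontinuous target map, and realise its continuous ReLU approximant---then estimate the transport cost and check that the combinatorial budget on pieces is respected.

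First, I would reduce the source distribution to the uniform law on $[0,1]$. Since $\nu$ is absolutely continuous on $\mathbb{R}$, its CDF $F_{\nu}$ is continuous, and $F_{\nu}(Y)\sim U[0,1]$ for $Y\sim\nu$. The plan is to design the network so that the first block implements (an approximation of) $F_{\nu}$, absorbing it into $\phi'$; the push-forward target then becomes that of a uniform source. Equivalently, since only $\phi'_{\#}\nu$ matters, one may reparameterise by the quantile transform and assume without loss of generality that the source is $U[0,1]$, treating the CDF layer as a fixed monotone piecewise-linear pre-processing that costs negligible width/depth.

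Next, partition $[0,1]$ into $n$ consecutive intervals $I_{i}=[\tfrac{i-1}{n},\tfrac{i}{n}]$ of mass $1/n$ each, and consider the idealised step map $\phi^{\star}$ taking $I_{i}$ to the sample $X_{i}\in\mathbb{R}^{d}$. Then $\phi^{\star}_{\#}\nu=\hat{\mu}_{n}$ exactly, but $\phi^{\star}$ is discontinuous. The key construction is a continuous, piecewise-affine surrogate $\phi'$ which agrees with $\phi^{\star}$ on the bulk $I_{i}^{\delta}:=[\tfrac{i-1}{n}+\delta,\tfrac{i}{n}-\delta]$ of each interval and linearly interpolates between $X_{i}$ and $X_{i+1}$ on narrow transition zones of length $2\delta$. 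Writing the Kantorovich coupling that matches $I_{i}$-mass with $\delta_{X_{i}}$, the realised $W_{1}$-cost localises to these transition zones and is bounded by $2\delta\cdot\operatorname{diam}_{L_{1}}\{X_{1},\ldots,X_{n}\}$. Choosing $\delta$ small enough therefore drives $W_{c_{x}}^{1}(\hat{\mu}_{n},\phi'_{\#}\nu)\leq\varepsilon$.

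The heart of the argument is to show that such a vector-valued continuous piecewise-linear map $\phi'\colon\mathbb{R}\to\mathbb{R}^{d}$ with the required number of affine pieces can be realised by a ReLU network in $\Phi(W,L)^{d}_{1}$. Here I expect to invoke the counting machinery for ReLU expressivity behind Yang et al.'s bit-extraction construction: each hidden layer can double or concatenate the count of affine pieces of a scalar sawtooth, and vector outputs of dimension $d$ consume $d+1$ output units, leaving $W-d-1$ neurons per layer for piece-generation. Grouping the internal layers into pairs (whence the factor $\lfloor L/2\rfloor$) and budgeting $6d$ neurons per coordinate-wise gadget yields a register producing $\tfrac{W-d-1}{2}\lfloor\tfrac{W-d-1}{6d}\rfloor\lfloor\tfrac{L}{2}\rfloor$ independently programmable pieces, plus two boundary pieces accounting for the $+2$. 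Finally, set the $n$ target values on the chosen pieces to the sample points $X_{1},\ldots,X_{n}$ in the prescribed order. The main obstacle is exactly this bookkeeping: translating the well-known scalar lower bounds on the number of linear regions into a \emph{vector-valued} architectural recipe matching the precise $W,L,d$ dependence stated in the lemma. Once the register is built, the transport bound from the previous paragraph closes the proof.
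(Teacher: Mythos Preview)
The paper does not supply its own proof of this lemma: it is quoted verbatim from \citet{yang2022capacity} and invoked as a black box in the proof of Theorem~\ref{Recon_con}. There is therefore no in-paper argument to compare your sketch against.

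On the merits of your proposal itself, the three-stage strategy (normalise the source, build an idealised piecewise-constant transport, realise a continuous piecewise-linear approximant and count pieces) is indeed the route taken in the cited reference. One simplification worth noting: your CDF-layer reduction is heavier than necessary and creates a side problem, since $F_{\nu}$ is in general not exactly representable by a finite ReLU network and you would have to control that approximation error too. The cleaner device is to partition $\mathbb{R}$ directly by the quantiles of $\nu$, taking $I_{i}=[F_{\nu}^{-1}((i-1)/n),\,F_{\nu}^{-1}(i/n)]$, so that each $I_{i}$ already carries mass $1/n$ under $\nu$ without any preprocessing; then set $\phi'$ equal to $X_{i}$ on the bulk of $I_{i}$ and interpolate linearly over the $\delta$-transition zones. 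This keeps the breakpoint count exactly at $2n-2$ and feeds directly into the bit-extraction piece budget you identify in the last paragraph. With that adjustment your Wasserstein estimate and the combinatorial bookkeeping are correct.
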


The transport hinted in the lemma is essentially a piecewise linear map, Lipschitz continuous on bounded balls. The restriction on $n$ indicates the number of breakpoints. Since the result holds for all probability measures $\nu$ having densities, the only modification required in our case is projecting $\rho$ onto $\mathbb{R}$, using linear maps beforehand. Given such a linear map $D_{0}: \Omega_{z} \rightarrow \mathbb{R}$, and $\phi^{'}$ according to lemma \ref{yang}, the desired decoder is given by $D = \phi^{'} \circ D_{0}$. This operation also preserves the Lipschitz continuity in the resultant decoder. Since there is no unique way of selecting the linear transform, one may use instead a pooled distribution. As such, $D = \phi^{'} \circ \sum_{i=1}^{N_{3}}D_{i} \equiv \sum_{i=1}^{N_{3}}\phi^{'} \circ D_{i}$, where $D_{1},\cdots,D_{N_{3}}$ are individual linear maps aimed at preserving different aspects of $\rho$. This is especially useful when $\abs{\Sigma} > 1$. In this case also $D$ turns out to be Lipschitz (since the property itself stems from individual components and summation--- that too without scaling--- only changes the associated constant). The following result formalizes our discussion.

\begin{theorem}[Reconstruction Consistency in a Latent-Consistent WAE] \label{Recon_con}
    Given a margin of latent error $t > 0$, let $E^{*}_{n}(t)$ be an optimal encoder satisfying latent consistency under the metric $d_{\textrm{TV}}$. Then, there exists a decoder $D \in \Phi(W,L)_{k}^{d}$, $d \geq 3$ based on ReLU activations, with $W \geq 7d + 1$ and $L \geq 3$ such that
    \begin{align*}
        \mathbb{E}\left[W_{c_{x}}^{1}(\mu, (D \circ E^{*}_{n}(t))_{\#}\hat{\mu}_{n})\right] - \mathcal{O}(t) \lesssim n^{-\frac{1}{d}},
    \end{align*}
    where $n = \mathcal{O}(\frac{W^{2}L}{d})$.
\end{theorem}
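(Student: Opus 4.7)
The starting point is to pass to expectation in the three-way decomposition displayed in (\ref{decoding}). Writing $E^{\ast}:=E^{\ast}_n(t)$ for brevity,
\begin{align*}
\mathbb{E}\bigl[W_{c_x}^{1}(\mu,(D\circ E^{\ast})_{\#}\hat{\mu}_n)\bigr]
\le \underbrace{\mathbb{E}\bigl[W_{c_x}^{1}((D\circ E^{\ast})_{\#}\hat{\mu}_n, D_{\#}\rho)\bigr]}_{(\mathrm{I})}
+\underbrace{\mathbb{E}\bigl[W_{c_x}^{1}(D_{\#}\rho, \hat{\mu}_n)\bigr]}_{(\mathrm{II})}
+\underbrace{\mathbb{E}\bigl[W_{c_x}^{1}(\hat{\mu}_n,\mu)\bigr]}_{(\mathrm{III})}.
\end{align*}
These three pieces encode, respectively, the propagation of latent misfit through the decoder, the semi-discrete decoder translation error, and the intrinsic sampling fluctuation of $\hat{\mu}_n$. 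They will be handled separately and summed.

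For (I), the prescribed decoder $D=\phi'\circ\sum_{i=1}^{N_3}D_i$ is a composition of finitely many linear maps with a piecewise-linear ReLU network, hence Lipschitz continuous on the compact $\Omega_z$. The $W_1$ data-processing inequality then yields $W_{c_x}^{1}((D\circ E^{\ast})_{\#}\hat{\mu}_n, D_{\#}\rho)\le \mathrm{Lip}(D)\,W_{c_z}^{1}(E^{\ast}_{\#}\hat{\mu}_n, \rho)$. Since $\Omega_z$ is compact with diameter $B_z$ (Assumption~\ref{assumption2}), a centered $1$-Lipschitz critic is uniformly bounded by $B_z/2$, giving $W_{c_z}^{1}\le B_z\,d_{\mathrm{TV}}$. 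Invoking the TV-latent consistency of $E^{\ast}_n(t)$ (cf.\ Corollary~\ref{cor:vc_IPT} together with the minimum-distance WAE-GAN construction discussed around (\ref{JS_ineq})) and integrating the tail estimate produces $(\mathrm{I})=\mathcal{O}(t)+o(n^{-1/d})$.

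For (II), we invoke Lemma~\ref{yang}. Pick $D_0:\Omega_z\to\mathbb{R}$ linear (pooled over a small finite family of projections when $|\Sigma|>1$) so that $D_{0\#}\rho$ is absolutely continuous on $\mathbb{R}$; this is guaranteed by the strong-density hypothesis of Assumption~\ref{assumption2}. Lemma~\ref{yang} then supplies $\phi'\in\Phi(W,L)_1^{d}$ with $W\ge 7d+1$ and $L\ge 2$ such that $W_{c_x}^{1}(\hat{\mu}_n,\phi'_{\#}D_{0\#}\rho)\le\varepsilon$ whenever $n\le \tfrac{W-d-1}{2}\lfloor\tfrac{W-d-1}{6d}\rfloor\lfloor\tfrac{L}{2}\rfloor+2\asymp W^{2}L/d$. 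Taking $\varepsilon\asymp n^{-1/d}$ and setting $D:=\phi'\circ D_0$, which absorbs the linear projection into the first affine layer and hence lies in $\Phi(W,L)_{k}^{d}$, delivers $(\mathrm{II})\lesssim n^{-1/d}$; the budget $n=\mathcal{O}(W^{2}L/d)$ stated in the theorem is precisely the quota demanded by Lemma~\ref{yang}.

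For (III), Assumption~\ref{assumption1} places $\mu$ on a compact $\Omega_x\subset\mathbb{R}^d$, so the classical sharp empirical-measure rate gives $\mathbb{E}[W_{c_x}^{1}(\hat{\mu}_n,\mu)]\lesssim n^{-1/d}$ precisely for $d\ge 3$, which explains the ambient-dimension restriction. Summing (I)--(III) yields the claimed $\mathcal{O}(t)+\mathcal{O}(n^{-1/d})$ bound. The main obstacle is the matching in step (II): one must verify that the composed architecture $\phi'\circ D_0$ (or its $\Sigma$-pooled analogue) still satisfies the breakpoint-count of Lemma~\ref{yang} under the prescribed $(W,L)$-budget, and that the resulting $D$ remains globally Lipschitz with a constant independent of $n$, so that step (I) does not covertly inherit an $n$-dependence through $\mathrm{Lip}(D)\cdot B_z$. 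A secondary care-point is ensuring that the pooled linear projection preserves absolute continuity of $D_{0\#}\rho$, which follows from the strong density of $\rho$ together with non-degeneracy of at least one direction in the pool.
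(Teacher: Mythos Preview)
Your proposal uses the same three-term decomposition as the paper, and terms (II) and (III) are handled identically (Lemma~\ref{yang} for the translation error, the standard $n^{-1/d}$ empirical rate for the sampling term). The only substantive difference is in term (I).

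You bound (I) via the Lipschitz contraction $W_{c_x}^{1}(D_{\#}\nu_1,D_{\#}\nu_2)\le \mathrm{Lip}(D)\,W_{c_z}^{1}(\nu_1,\nu_2)$ followed by $W_{c_z}^{1}\le B_z\,d_{\mathrm{TV}}$. This forces you to control $\mathrm{Lip}(D)$ uniformly in $n$, and you rightly flag this as the main obstacle: the piecewise-linear $\phi'$ from Lemma~\ref{yang} is tailored to hit the atoms of $\hat{\mu}_n$, and its slopes can grow with $n$ without further argument. The paper sidesteps this entirely by reversing the order: it first passes from $W_{c_x}^{1}$ to TV in the \emph{output} space via $W_{c_x}^{1}\le B_x\,d_{\mathrm{TV}}$ (using only compactness of $\Omega_x$), and then applies the data-processing inequality for TV, $d_{\mathrm{TV}}(D_{\#}\nu_1,D_{\#}\nu_2)\le d_{\mathrm{TV}}(\nu_1,\nu_2)$, which holds for \emph{any} measurable $D$. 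No Lipschitz constant ever appears, and one obtains $(\mathrm{I})\le B_x\,t$ directly. So the paper's route is both shorter and removes precisely the care-point you identified; your argument would go through only after supplying an $n$-free bound on $\mathrm{Lip}(D)$, which is not provided.
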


The theorem reveals the extent to which the realized latent loss can potentially amplify during reconstruction. The corresponding excess error always stays $\mathcal{O}(n^{-\frac{1}{d \vee 2}})$, with high probability (using McDiarmid's inequality). The result also allows for the formulation of WAEs to be made general by replacing $W_{c_{x}}^{1}$ with $W_{c_{x}}^{p}$, $1 \leq p$. In such a case, the observation: $W^{p}(\mu_{1}, \mu_{2}) \leq B^{\frac{p-1}{p}}_{x} W^{1}(\mu_{1}, \mu_{2})^{\frac{1}{p}}$, $\mu_{1}, \mu_{2} \in \mathcal{P}(\mathcal{X})$ coupled with Theorem \ref{Recon_con} provides the regeneration guarantee.

\begin{remark}[Reconstruction as a Consequence of Latent Consistency in WAE-MMD] \label{rem_MMD}
    Considering a latent error $d_{\textrm{TV}}\left(E^{*}_{n}(t)_{\#}\hat{\mu}_{n}, \rho\right) \leq t$ unites both WAE models (WAE-GAN and WAE-MMD) in showing reconstruction consistency based on the fact that TV is a natural upper bound to both JS and MMD. However, if the bounded kernel $\kappa$ applied in a latent space (in WAE-MMDs) is integrally strictly positive definite and follows strong invariance, we have the partial inequality--- given Assumption \ref{assumption1} and \ref{assumption2}--- as follows
    \begin{align*}
        W_{c_{x}}^{1}((D \circ E)_{\#}\hat{\mu}_{n}, D_{\#}\rho) &\lesssim W_{c_{z}}^{1}(E_{\#}\hat{\mu}_{n}, \rho) \\ &\leq C_{\varepsilon} \:d_{\mathcal{H}_{\kappa}}\left(E_{\#}\hat{\mu}_{n}, \rho\right) + \varepsilon,
    \end{align*}
    for some $C_{\varepsilon} > 0$ and $\forall \varepsilon > 0$ (\citet{modeste2022characterization}, Proposition 3.9). The first inequality is due to the Lipschitz continuity of $D$. Now, if MMD, equipped with the same $\kappa$ is applied on the input space as well, the same $D$ constructed so far satisfies $d_{\mathcal{H}_{\kappa}}\left(D_{\#}\rho, \hat{\mu}_{n}\right) < \varepsilon$, $\forall \varepsilon > 0$ (\citet{yang2022capacity}, Lemma 3.3). As a result, the right-hand side of inequality (\ref{decoding}) can be written entirely in terms of MMD. Hence, Theorem \ref{latent_con_MMD} can be readily plugged in to obtain a deterministic upper bound to the realized reconstruction error.
\end{remark}

\subsection{Simulations} \label{sim_Recon}

We continue with the earlier experimental setup to provide empirical validation. In fact, reconstruction outputs are obtained simultaneously with latent results. Conforming to our previous prescription, we employ $4$-deep decoders for the Five Gaussian data set. On the other hand, to reconstruct observations corresponding to MNIST, we take a pragmatic approach while choosing network widths ($k=$64-128-256-512). The final output tensor is suitably reshaped to have the size of an image (batch size, 1, 28, 28).

\begin{figure}[ht]
    \centering 
\begin{subfigure}{0.32\textwidth}
  \includegraphics[width=\linewidth]{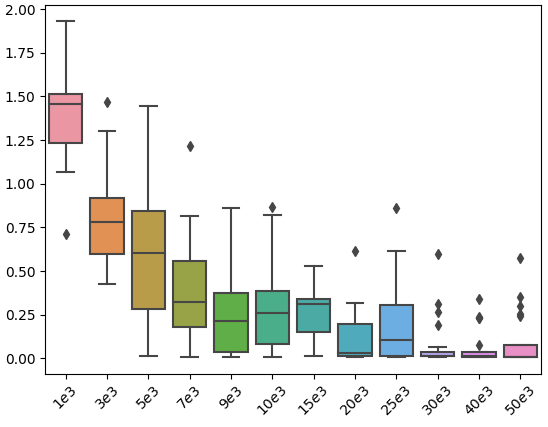}
  \caption{Gaussian}
  \label{fig:MMD_recon_Gauss_Relu}
\end{subfigure}\hspace{3pt} 
\begin{subfigure}{0.32\textwidth}
  \includegraphics[width=\linewidth]{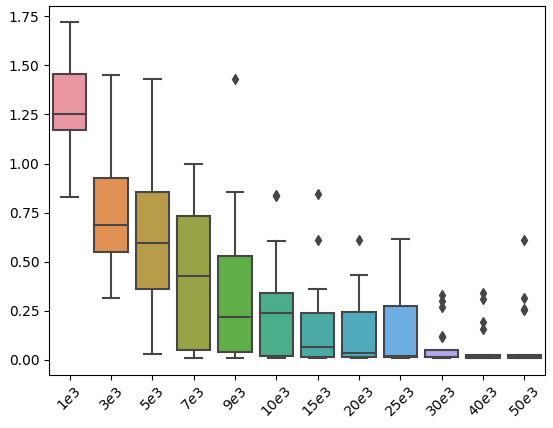}
  \caption{Beta copula}
  \label{fig:MMD_recon_GBeta_Relu}
\end{subfigure}\hspace{3pt} 
\begin{subfigure}{0.32\textwidth}
  \includegraphics[width=\linewidth]{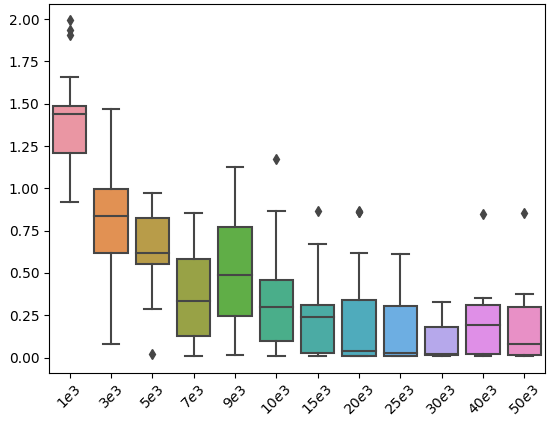}
  \caption{Exponential copula}
  \label{fig:MMD_recon_Exp_Relu}
\end{subfigure}\hfil 
\caption{Wasserstein reconstruction loss for the three latent distributions under MMD using ReLU encoders. The penalization handed to the latent loss is kept at $\lambda = 0.2$.}
\label{fig:MMD_GB_recon}
\end{figure}

The diminishing trait of error values with shrinking variance is evident in Fig \ref{fig:MMD_GB_recon}. The limiting margin of error where the sequence converges (for all latent distributions under consideration) remains well below the tolerable latent loss. This further attests to our theoretical bound. The optimizations not only make the error eventually vanish, but also produce perceptually alike samples [Fig \ref{fig:JS_GB_recon_photo}]. Reconstruction errors corresponding to Five Gaussian replicates in a WAE-MMD setup also tend to follow a convergence rate $\mathcal{O}(n^{-\frac{1}{2}})$ [Fig \ref{fig:GB_recon_MMD_Group}]. This is a validation to the Remark \ref{rem_MMD}, even under the deployment of GroupSort encoders. Reconstructions of MNIST also result in photo-realistic copies of the input law [Fig \ref{fig:MNIST_Gauss_recon}]. The corresponding errors exhibit sharply decaying behavior under both WAE architectures. 

\begin{figure}[ht]
     \centering 
\begin{subfigure}{0.34\textwidth}
  \includegraphics[width=\linewidth]{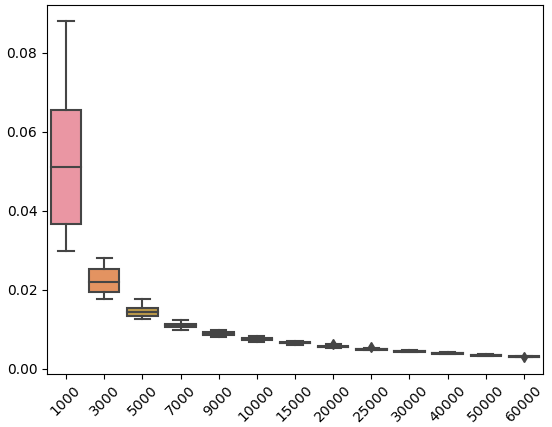}
  \caption{}
  \label{fig:MNIST_recon_JS_Gauss}
\end{subfigure} 
\hspace{3pt}
\begin{subfigure}{0.34\textwidth}
  \includegraphics[width=\linewidth]{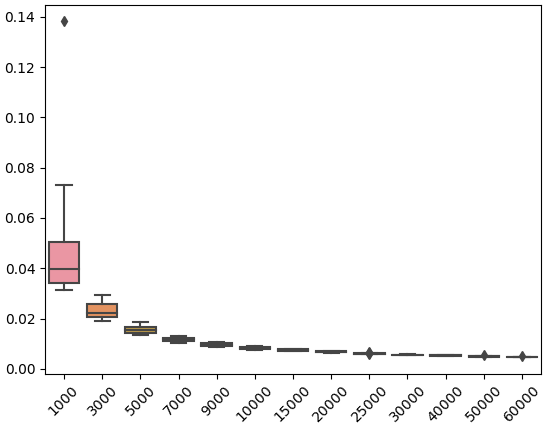}
  \caption{}
  \label{fig:MNIST_recon_MMD_Gauss}
\end{subfigure}\hfill 
\begin{subfigure}{0.9\textwidth}
  \includegraphics[width=\linewidth]{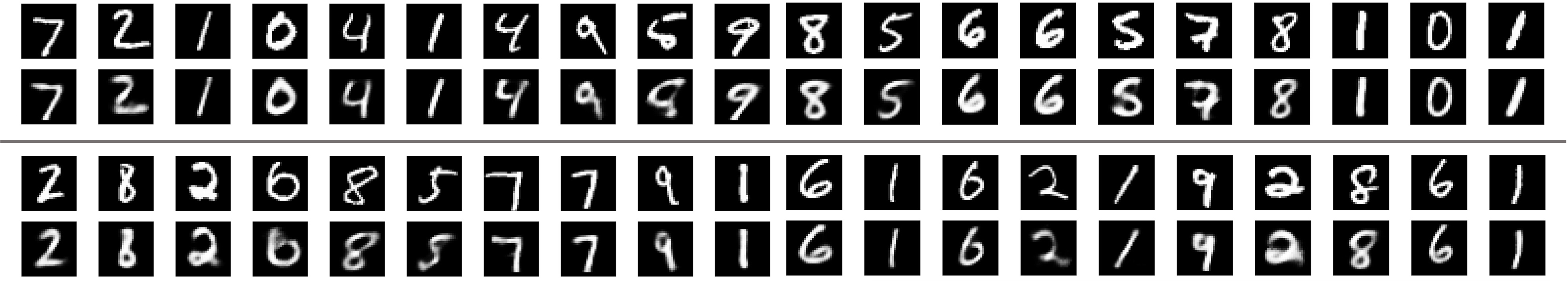}
  \caption{\centering{Reconstruction under latent JS loss}}
  \label{fig:MNIST_recon_Js_Gauss_sample}
\end{subfigure}\hfil 
\caption{MNIST reconstruction error given Gaussian latent laws under (a) JS and (b) MMD latent loss, using ReLU encoders. In (c), the odd rows hold the input digits and the even ones are their reconstructed counterparts.}
\label{fig:MNIST_Gauss_recon}
\end{figure}

\begin{figure}[ht]
    \centering 
\begin{subfigure}{0.32\textwidth}
  \includegraphics[width=\linewidth]{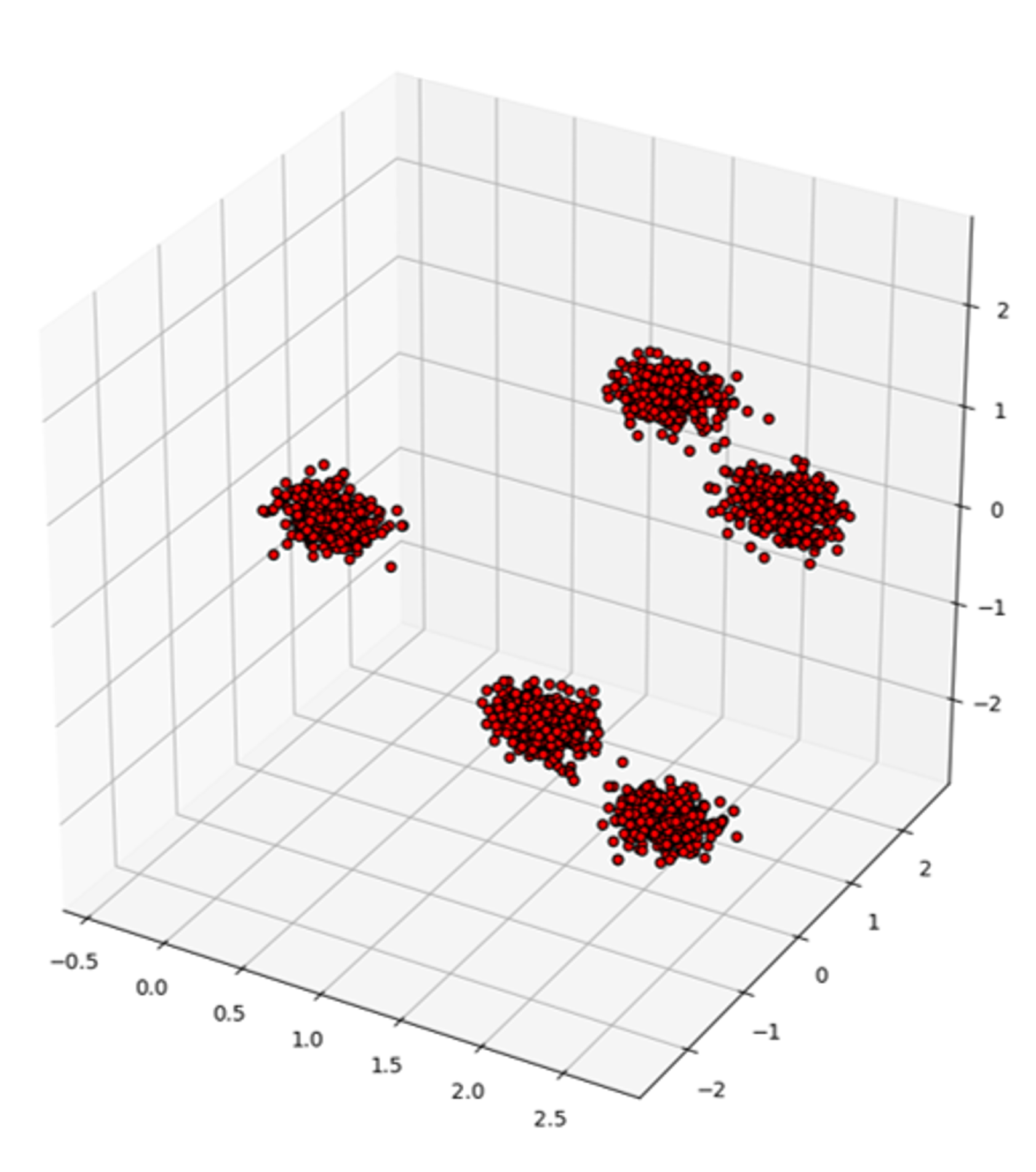}
  \caption{Input data}
  \label{fig:Actual_GB}
\end{subfigure}\hspace{3pt} 
\begin{subfigure}{0.32\textwidth}
  \includegraphics[width=\linewidth]{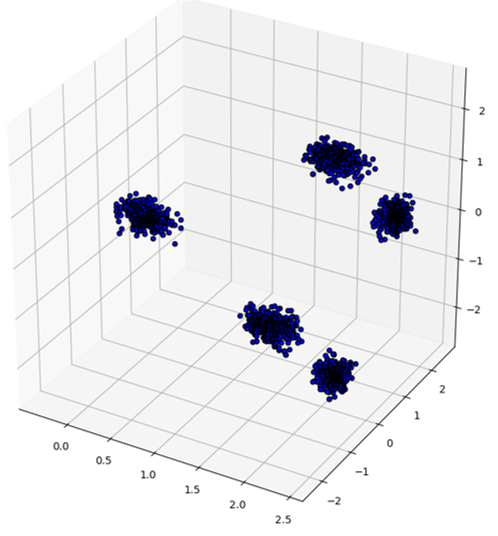}
  \caption{Gaussian}
  \label{fig:JS_recon_GB_Gauss}
\end{subfigure}\hspace{3pt} 
\begin{subfigure}{0.32\textwidth}
  \includegraphics[width=\linewidth]{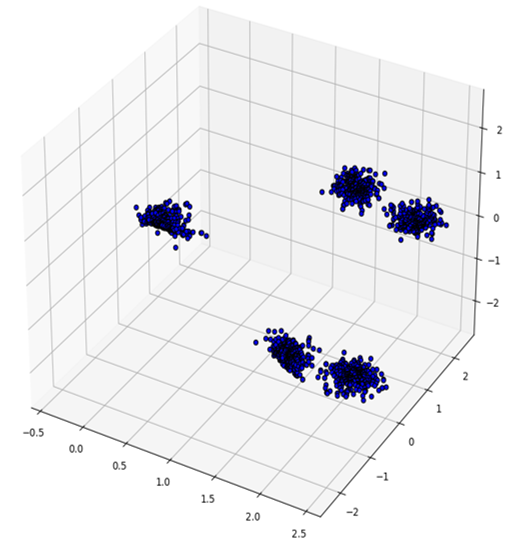}
  \caption{Beta copula}
  \label{fig:JS_recon_GB_Beta}
\end{subfigure}\hfil 
\caption{Reconstructed samples ($n=10,000$) from Five Gaussian dataset under JS latent loss for given latent distributions, using ReLU encoders after $1500$ epochs.}
\label{fig:JS_GB_recon_photo}
\end{figure}

\section{Robustness to Distribution Shift} \label{Robust}

The quality of deep generative model outputs is often marred by contamination in data. Images, and consequently WAEs are very much susceptible to such adversarial corruption. In our recommendation, we have prioritized the free flow of information through the model networks in a WAE. As such, a corrupted set of input samples runs the risk of spoiling all the downstream tasks. Thus, to get a comprehensive look at the machinery of WAE, one must test its innate capability to preserve regeneration quality under the influence of adversaries. Some of the well-recognized models in robust statistics include \textit{Adaptive}, \textit{Oblivious} and the \textit{Huber contamination model} \citep{chen2016robust, zhu2022generalized}. In Huber contamination, instead of assuming independent replicates from the input density $p_{\mu}$, it is assumed that there lies a probability $\epsilon >0$ that the sample comes from a \textit{contamination} $p_{c} \in \mathcal{P}(\mathcal{X})$. The density $p_{c}$ is independent of the input law and remains unknown. As such, input observations
\begin{equation} \label{corr}
    X_{1}, \cdots , X_{n} \sim \Tilde{p}:=(1-\epsilon)p_{\mu} + \epsilon p_{c}
\end{equation}
are what we have at hand. Under such a setup, \citet{liu2019robust} showed that given $p_{\mu} \in \mathcal{C}^{s}_{L}(\Omega_{x})$, kernel density estimates incur euclidean losses $\mathcal{O}(n^{-\frac{2s}{2s+1}} \vee \epsilon^{\frac{2s}{s+1}})$. The underlying kernels are considered to be square-integrable and bounded, with centered moments. This result is particularly motivating since given that $D \circ E \approx \textrm{id}$ a.e., it gives us a bound on the regeneration error for VAEs. 

The regime we consider in our following discussion is closer to oblivious contamination. We assume that the \textit{contaminated input distribution} $\Tilde{p}$ is such that $\mathbb{E}_{X \sim \Tilde{p}, Y \sim p_{\mu}} c_{x}(X,Y) \leq \epsilon$, a more general notion compared to Huber. Observe that, such a criterion automatically implies $1$-Wasserstein contamination under metric $c_{x}$ \citep{liu2022wrob}. No additional assumption on the regularity of $\Tilde{p}$ is assumed. The goal lies the same as before: reconstructing $p_{\mu}$ based on an input estimator. In other words, in the absence of additional regularization, we check for the extent of inherent distributional robustness WAEs possess. 

We have already seen that both the encoder and decoder, under careful construction can follow Lipschitz continuity. As a result, their composition behaves similarly. To generalize such composite maps, in this section, we consider maps $\mathcal{G} : \mathcal{X} \rightarrow \mathcal{X}$ which induce group actions. Now, if $G \in \mathcal{G}$ satisfies information preservation, it is approximately equivalent to constructing an estimator based on translated observations rather than translating a pre-constructed estimator to approach $p_{\mu}$. As such, given replicates $X_{1}, \cdots, X_{n} \sim \Tilde{p}$, we essentially need to look for upper bounds to the loss $W_{c_{x}}^{1}(\hat{p}_{n}, p_{\mu}) \lesssim \norm{\hat{p}_{n} - p_{\mu}}_{1}$, where $\hat{p}_{n} \in L_{1}(\mathbb{R}^{d})$ is based on $\{G(X_{i})\}^{n}_{i=1}$.

To cope with the adversary, first, we modify the properties of the regularly invariant kernels we utilized earlier. We call a kernel $\kappa(x,y): \mathcal{X} \times \mathcal{X} \rightarrow \mathbb{R}$ \textit{transformation invariant} if given action $G \in \mathcal{G}$, $\kappa(G(x),y) = \kappa(x,G^{-1}(y))$ \citep{liu2022learning}. The following theorem provides the extent of WAEs' resilience based on such kernel estimates.

\begin{theorem}[Reconstruction Consistency under Contamination] \label{conta}
    Let the contaminated distribution $\Tilde{p}$ be such that $\sup_{\Omega_{x}} \Tilde{p}(x) < \infty$. Also, let the kernel $\kappa$ be regular, translation invariant with respect to $L_{1}$ (\ref{Reg_inv_ker}) and transformation invariant which satisfies $\int_{\mathcal{X}}\kappa^{2}(v,v-u) du < \infty$. Then, given any $G \in \mathcal{G}$, a kernel density estimate $\hat{p}_{h} \equiv \hat{p}_{h,n}$ based on $\kappa$ satisfies
    \begin{equation*}
        \mathbb{E}\abs{\hat{p}_{h}(0) - p_{\mu}(0)} \lesssim n^{-\frac{m_{x}}{d + 2m_{x}}} \vee \epsilon^{\frac{m_{x}}{2d+m_{x}}}.
    \end{equation*}
\end{theorem}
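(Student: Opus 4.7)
The plan is a three-piece decomposition
\[
|\hat{p}_{h}(0) - p_{\mu}(0)| \le |\hat{p}_{h}(0) - \mathbb{E}\hat{p}_{h}(0)| + |\mathbb{E}_{\tilde{p}}[K_{h}\circ G]-\mathbb{E}_{p_{\mu}}[K_{h}\circ G]| + |\mathbb{E}_{p_{\mu}}[K_{h}\circ G] - p_{\mu}(0)|,
\]
separating a stochastic fluctuation, a contamination-induced bias, and a classical smoothing bias, and then tuning $h$ so that the dominant component produces each of the two rates appearing in the statement.

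For the stochastic fluctuation I would use the variance bound $\operatorname{Var}(\hat{p}_{h}(0)) \lesssim (nh^{d})^{-1}$, which follows from $\int \kappa^{2}(v,v-u)\,du<\infty$ and $\sup_{\Omega_{x}}\tilde{p}<\infty$, and then pass to the $L_{1}$ moment via Cauchy--Schwarz to obtain $\mathcal{O}((nh^{d})^{-1/2})$. For the smoothing bias I would invoke Assumption \ref{assumption1}, i.e.\ $p_{\mu}\in\mathcal{W}^{m_{x},p}_{L}(\Omega_{x})$, together with the vanishing of the mixed moments of $\kappa$ up to order $m_{x}-1$ (part of being regular in Definition \ref{Reg_inv_ker}), so that a Taylor expansion of $p_{\mu}$ around the origin yields the familiar rate $\mathcal{O}(h^{m_{x}})$. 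The transformation-invariance identity $\kappa(G(x),y)=\kappa(x,G^{-1}(y))$ is used here to move the group action $G$ onto the kernel argument before the expansion is carried out, so no additional Lipschitz penalty from $G$ contaminates the smoothing rate.

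The contamination bias is where the real work lies. I would take a coupling $\pi$ of $\tilde{p}$ and $p_{\mu}$ realising $\mathbb{E}_{\pi}[c_{x}(X,Y)]\le \epsilon$ and write
\[
|\mathbb{E}_{\tilde{p}}[K_{h}\circ G]-\mathbb{E}_{p_{\mu}}[K_{h}\circ G]| \le \int |K_{h}(G(x))-K_{h}(G(y))|\,d\pi(x,y).
\]
Rather than applying a naive pointwise Lipschitz bound on $K_{h}$, which would only give $\operatorname{Lip}(K_{h})\asymp h^{-d-1}$ and hence a suboptimal final exponent, the idea is to combine the weak $L_{r}$-invariance of Definition \ref{Reg_inv_ker}, namely $\{\int |\kappa(w,v)-\kappa(w,u)|^{r}\,dw\}^{1/r}=\mathcal{O}(\|v-u\|)$, with H\"older's inequality and the boundedness of $\tilde{p}$, so that an integrated modulus is pushed through the coupling and produces a bound of the form $\mathcal{O}(\epsilon/h^{2d})$. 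Balancing $h^{m_{x}}\asymp (nh^{d})^{-1/2}$ then produces $h\asymp n^{-1/(d+2m_{x})}$ and the rate $n^{-m_{x}/(d+2m_{x})}$, while balancing $h^{m_{x}}\asymp \epsilon/h^{2d}$ produces $h\asymp \epsilon^{1/(2d+m_{x})}$ and the rate $\epsilon^{m_{x}/(2d+m_{x})}$; the maximum is exactly what is claimed.

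The main obstacle I anticipate is obtaining the correct $h$-scaling of the contamination bias. The classical $W_{1}$-Lipschitz estimate delivers only $\epsilon/h^{d+1}$ and hence a weaker exponent $m_{x}/(m_{x}+d+1)$; sharpening this to $\epsilon/h^{2d}$ requires exploiting the two invariance properties of $\kappa$ simultaneously. Transformation invariance removes the derivative of $G$ so that no excess Lipschitz constant accrues from the composition, while the integrated $L_{r}$ modulus from weak invariance trades a factor of $h^{-1}$ that a pointwise gradient bound would incur for a factor of $h^{-d}$ coming from an $L_{r}$ norm, the latter being absorbed against the uniformly bounded contaminated density $\tilde{p}$. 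Once this calibration is secured, the remaining steps are textbook KDE bias/variance estimates followed by the bandwidth optimisation indicated above.
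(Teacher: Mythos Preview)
Your three-piece decomposition, variance bound via Cauchy--Schwarz plus $\int\kappa^{2}<\infty$, Taylor-based smoothing bias of order $h^{m_{x}}$, and coupling argument for the contamination bias are exactly what the paper does. The paper also uses transformation invariance to rewrite $\kappa((G(X)-x)/h)=\kappa((X-G^{-1}(x))/h)$ and, at the very end, assumes without loss of generality that $G$ fixes the origin so that the additional term $|p_{\mu}(0)-p_{\mu}(G^{-1}(0))|$ vanishes; you effectively fold this into your smoothing piece.

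The only place you go astray is the contamination step, and it is a harmless confusion of direction rather than a genuine gap. The ``naive'' pointwise Lipschitz bound $\operatorname{Lip}(K_{h})\asymp h^{-(d+1)}$ yields $\epsilon/h^{d+1}$, which for $d\ge 1$ is \emph{smaller} than the $\epsilon/h^{2d}$ you are aiming for, not larger. Hence the simple pointwise bound already implies the target $\epsilon/h^{2d}$ (and thus the stated rate $\epsilon^{m_{x}/(2d+m_{x})}$) a fortiori; no refined $L_{r}$-modulus argument is needed. This is precisely what the paper does: it obtains $h^{-d}\,\mathbb{E}|{(X-Y)/h}|\lesssim \epsilon/h^{d+1}$ from the invariance (Lipschitz-type) property of $\kappa$ and then relaxes to $\epsilon/h^{2d}$ before balancing the bandwidth. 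Your elaborate justification trades a sharper intermediate bound for a weaker one under the mistaken impression that the latter is harder to reach.
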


\subsection{Simulations} \label{sim_rob}

During empirical validation, based on the diverse natures of their support, the thickness of tails, and central tendencies, we select three potential contaminating distributions: Standard Gaussian, Cauchy, and Dirichlet. While the Five Gaussian data set is corrupted with the latter two, we apply all three on MNIST. For utmost rigor in our experiments, we adopt an elaborate contaminating regime. We not only vary the proportion of observations getting corrupted but also regulate the extent of it. For example, there may be a set of input observations $\{X_{i}\}_{i=1}^{n}$, out of whom $\floor{\frac{n}{2}}$ are replaced by $ (0.8)X_{i} + (1-0.8)Y_{i}$, where $\{Y_{i}\}^{i \in \mathscr{C}}$ are replicates from a contaminating law. $\mathscr{C}$ denotes the indexes receiving the corruption, $\abs{\mathscr{C}} = \floor{\frac{n}{2}}$. We refer to the mixing proportion as \textit{level} ($\alpha$). This regime generalizes the entire contamination landscape in statistics. The specific choice of parameters for Dirichlet is taken as $(5,3,5)$. Perhaps the most interesting observation from our huge body of experiments is some of the near-accurate reconstructions.   

\begin{figure}[ht]
    \centering 
\begin{subfigure}{0.32\textwidth}
  \includegraphics[width=\linewidth]{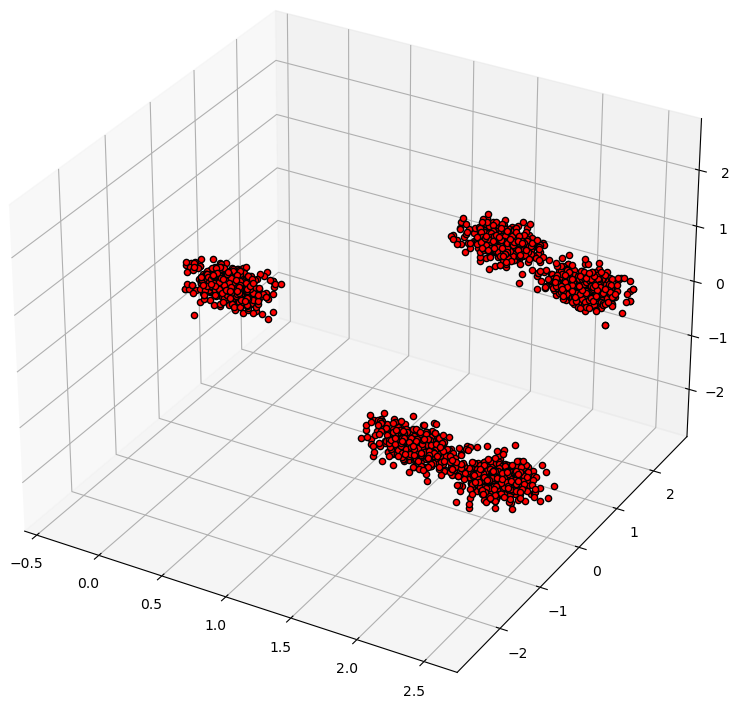}
  \caption{Actual Data}
  \label{fig:Target_FG_Dir_JS}
\end{subfigure}\hspace{3pt} 
\begin{subfigure}{0.32\textwidth}
  \includegraphics[width=\linewidth]{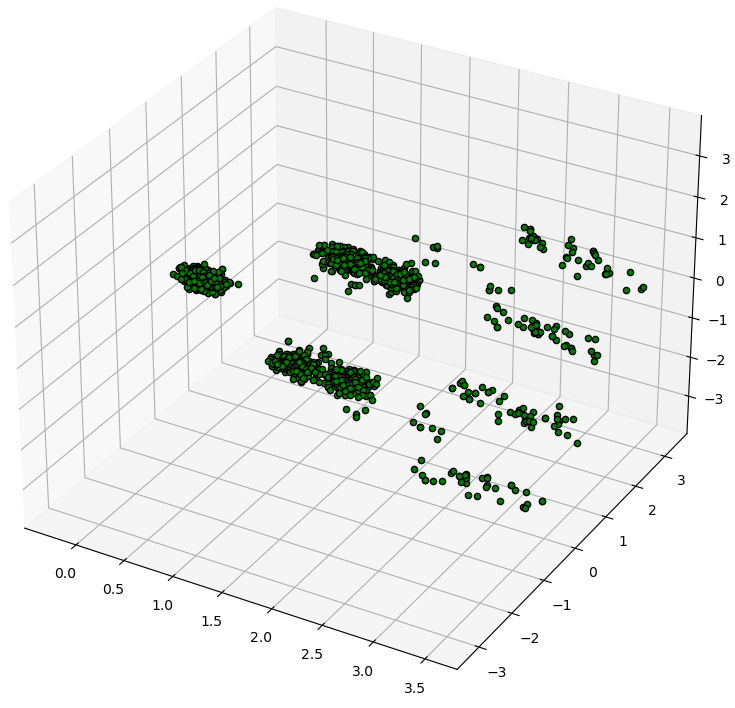}
  \caption{Contaminated Data}
  \label{fig:Cont_FG_Dir_JS}
\end{subfigure}\hspace{3pt} 
\begin{subfigure}{0.32\textwidth}
  \includegraphics[width=\linewidth]{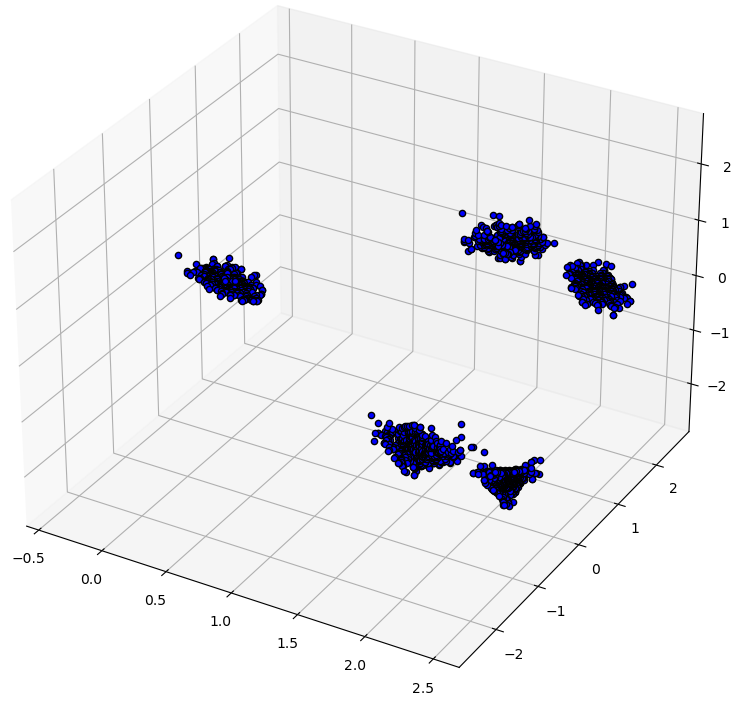}
  \caption{Reconstructed Data}
  \label{fig:Recon_FG_Dir_JS}
\end{subfigure}\hfil 
\caption{Reconstructed samples ($n=10,000$) from Five Gaussian dataset with half the observations contaminated at level $0.2$, under JS latent loss. The corrupting distribution is taken to be Dirichlet($5,3,5$).}
\label{fig:JS_Robust_FG}
\end{figure}

In the case of the MNIST data set, even at a significant level of contamination, the reconstruction errors continue to converge to a near-zero value. The corresponding reconstructed samples are of remarkably sound resolution, given the regenerative capability of WAEs in general [Fig \ref{fig:MNIST_robust_MMD_plot}]. We also study the effect of varying $\alpha$ on reconstructed image quality [Fig \ref{fig:diffusion_robust}], which hints at tolerable levels of contamination.

\begin{figure}[ht]
     \centering 
\begin{subfigure}{0.32\textwidth}
  \includegraphics[width=\linewidth]{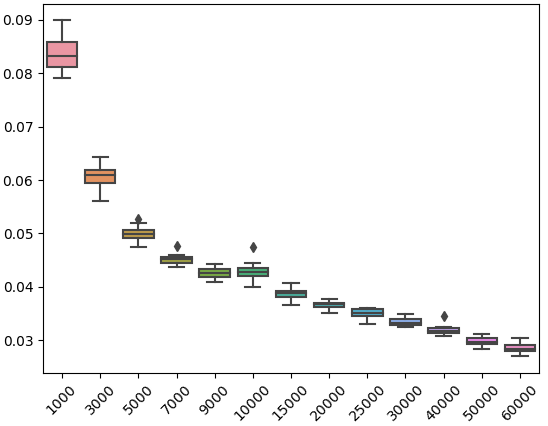}
  \caption{Gaussian}
  \label{fig:MNIST_robust_Gauss_MMD}
\end{subfigure} 
\begin{subfigure}{0.32\textwidth}
  \includegraphics[width=\linewidth]{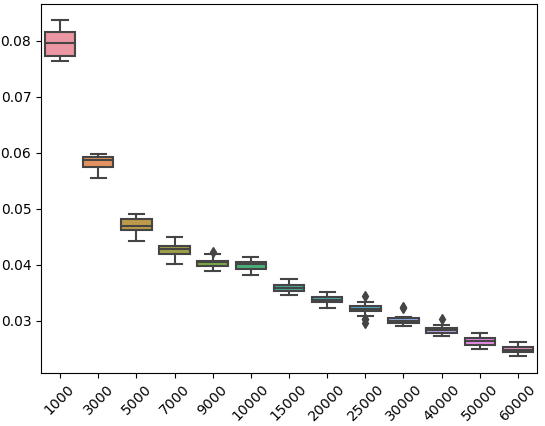}
  \caption{Cauchy}
  \label{fig:MNIST_robust_Cauchy_MMD}
\end{subfigure} 
\begin{subfigure}{0.32\textwidth}
  \includegraphics[width=\linewidth]{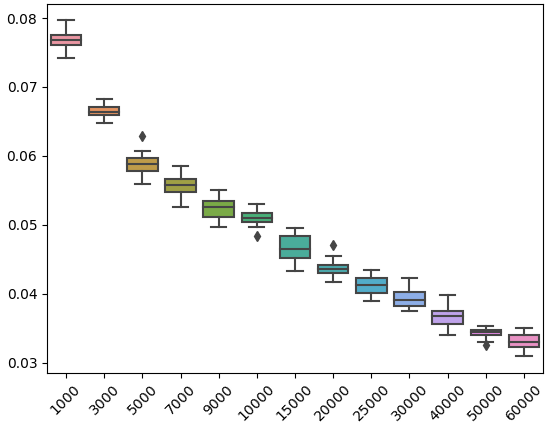}
  \caption{Dirichlet}
  \label{fig:MNIST_robust_Dir_MMD}
\end{subfigure}\hfill 
\begin{subfigure}{\textwidth}
  \includegraphics[width=\linewidth]{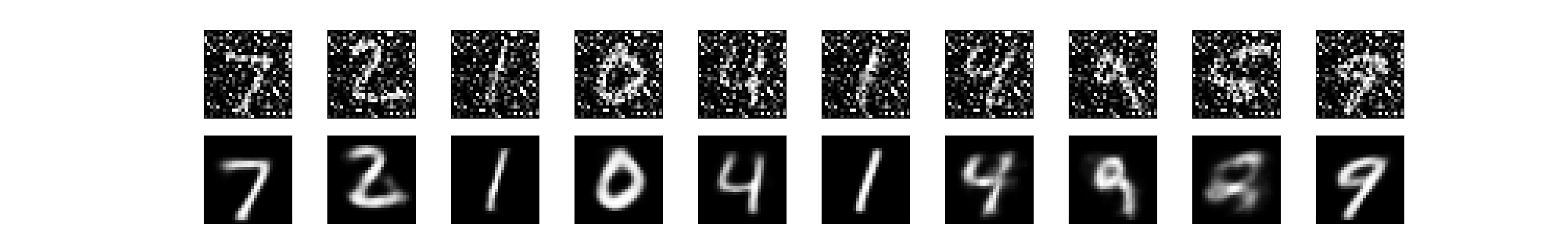}
  \caption{\centering{Reconstruction under Cauchy contamination}}
  \label{fig:MNIST_robust_recon_MMD}
\end{subfigure}\hfil 
\caption{Reconstruction errors incurred by a ReLU-induced WAE-MMD for MNIST, under different contaminating distributions at level $0.2$. In all the experiments, the latent distribution is kept standard Gaussian. In (d), the first row represents contaminated samples (standard Cauchy at level $0.2$) and the second row contains their reconstructed counterparts.}
\label{fig:MNIST_robust_MMD_plot}
\end{figure}

\section{Discussion and Future Work}

In this paper, we provide statistical guarantees regarding the concurrent tasks Wasserstein autoencoders carry out, i.e. achieving both the latent and reconstruction benchmark laws. Under probabilistic characterization of the input data, we establish deterministic upper bounds to both losses. Our non-parametric estimation approach caters to both WAE architectures, namely WAE-MMD and WAE-GAN. In the process, we find out sufficient properties an encoder must possess to become information preserving. This particular notion further enables us to prescribe to a practitioner building an encoder, the architectural specifications of an ideal neural network. Deployment of such a network, in turn, aids the latter process of reconstruction. In a WAE-MMD framework, we explore the sufficient conditions the deployed kernel needs to satisfy to ensure latent consistency. We put to test our theoretical findings in simulations based on real and synthetic data sets. The phenomenon of information preservation in the latent space is fascinating to witness in the flesh. The encoded distributions tend to maximize their alignment with the latent target without losing the local geometric properties of the input. Similarly, we recommend decoder frameworks that achieve near-perfect regenerations. Such results are also substantiated by accompanying numerical experiments that show sharply decaying losses over varying sample sizes. Finally, in a density estimation setup, we test the degree of robustness WAEs hold naturally, against distribution shifts without additional regularization.      

We dedicate the rest of the section to pointing out possibilities our theoretical framework spawns. The first question arises regarding the modes of the distributions involved. Regenerated samples using a WAE are not known to be plagued by `mode collapse' as severely as a vanilla GAN output. However, it is not uncommon for real data distributions ($\mu$) to have non-convex support. In such a case, the OT (due to Brenier) map between the latent distribution and $\mu$ will mostly be discontinuous. Moreover, NN-based transforms often fail to universally approximate such discontinuous functions. This may lead to significant mismatches between the supports of $\mu$ and $D_{\#}\rho$, however good the representative samples may be. As such, there always lies an innate possibility of missing out on modes of $\mu$, even if the effect is benign to the eye. The question also involves the role of underlying divergences. In generative modeling, they are typically judged based on their capacity of \textit{mode covering} and \textit{mode seeking} \citep{Li2023mode}. A mode-covering divergence tends to prioritize the spread of masses to all target modes and as a result, generates out-of-sample observations. While mode-seeking distances avoid doing so, their conservative mass assignment leads to the model missing out on one or several modes. Unfortunately, both TV (weakly mode-seeking) and JS (uniformly mode-seeking) lean towards the later characterization. As such, in case $\rho$ is multi-modal and there exists a mismatch between the number of modes of $\mu$ (unknown) and $\rho$, WAEs operate under a heightened risk of losing information on some modes. In our non-parametric setup, we do not specify the modality of input and latent distributions. This creates an interesting prospect to study the effect of varying numbers of modes in $\rho$ on information preservation and reconstruction. Future work may also look into the tolerable modality of input laws in a WAE-MMD before mode collapse transcends benignity. This is particularly intriguing since the mode-seeking properties of MMD remain unexplored.  

Another question that remains closely related to that regarding modes, is the uniformity of samples over them. The model may also fail to recognize a mode of $\mu$ if observations from it are vastly outnumbered. We have already seen the denoising capability of WAEs. If the input data suffer such \textit{imbalance}, it is not improbable for the model to treat them as outliers, especially if the modes are well-separated. As such, along with the modality, formulating appropriate estimators that ensure proportionate participation and accurate reconstruction under imbalance must be taken up further.     


\section*{Appendix: Technical Proofs} \label{appA}

\begin{proof}[Proof of Lemma \ref{lemm_inv}]
    Let $E:\mathcal{X} \rightarrow \mathcal{Z}$ and $D:\mathcal{Z} \rightarrow \mathcal{X}$ be measurable maps--- namely a encoder-decoder pair--- that satisfy
    \begin{equation} \label{lossless}
        W_{c_{x}}^{1}(\mu, (D \circ E)_{\#}\mu) + \lambda.\Omega(E_{\#}\mu, \rho) = 0,
    \end{equation}
    given $\lambda > 0$. Since $\Omega$ is a divergence metric metrizing the underlying class of probability distributions, it is implied that $E_{\#}\mu = \rho$. As such, we observe a lossless encoding in the population sense. Now,
    \begin{align*}
        E_{\#}\mu = \rho \implies (D \circ E)_{\#}\mu = D_{\#}\rho \overset{(1)}{\implies} \mu = D_{\#}\rho,
    \end{align*}
    where ($1$) is due to (\ref{lossless}). This hints towards an absolute information preservation in reconstruction based on the fact that $D \circ E = \textrm{id}_{X}$ a.e.

    Given a probability space automorphism $\varphi$, $\forall A \in \mathcal{Z}$
    \begin{equation*}
        \rho(\varphi(A)) = \rho(\varphi^{-1}(\varphi(A))) = \rho(A),
    \end{equation*}
    since $\varphi^{-1}$ also becomes an automorphism. Hence,
    \begin{align*}
        &W_{c_{x}}^{1}(\mu, (D \circ \varphi) \circ (\varphi^{-1} \circ E)_{\#}\mu) + \lambda.\Omega((\varphi^{-1} \circ E)_{\#}\mu, \rho) \\ =& \; W_{c_{x}}^{1}(\mu, (D \circ (\varphi \circ \varphi^{-1}) \circ E)_{\#}\mu) + \lambda.\Omega(\varphi^{-1}_{\#} (E_{\#}\mu), \rho) \\ =& \; W_{c_{x}}^{1}(\mu, (D \circ E)_{\#}\mu) + \lambda.\Omega(\varphi^{-1}_{\#} \rho, \rho) = 0.
    \end{align*}
    As such, the encoder-decoder pair $(\varphi^{-1} \circ E, D \circ \varphi)$ is also a zero-solution of the population loss function. Also, in case $\varphi \neq \textrm{id}$, the pair clearly differs from $(E,D)$ a.e.
\end{proof}

\begin{proof}[Proof of Theorem \ref{theo: lip}]
    Given a transform $g \in \mathscr{F}_{L}(\mathcal{X}, \mathcal{Z})$, the information dissipated can be decomposed using the triangle inequality as follows
    \begin{align}
        d_{\mathcal{H}}\left(g_{\#}\Tilde{\mu}_{n}, \widehat{(g_{\#}\mu)}_m\right) \leq d_{\mathcal{H}}\left(g_{\#}\Tilde{\mu}_{n}, g_{\#}\mu\right) + d_{\mathcal{H}}\left(g_{\#}\mu, \widehat{(g_{\#}\mu)}_m\right). \label{main_ineq}
    \end{align}
    Here, $\Tilde{\mu}_{n}$ denotes the RIK estimator (see Definition \ref{Reg_inv_ker}), defined as $\frac{d\Tilde{\mu}_{n}}{dx} = \frac{1}{nh^d}\sum^{n}_{i=1} \kappa\left(\frac{x}{h}, \frac{x_i}{h}\right) = \hat{p}_{h}(x)$, $x \in \Omega_x$ where $h$ is the bandwidth. Also, since the kernels are bounded, there exists $B > 0$ such that $\sup_{\Omega_{x}}\kappa(\cdot, \cdot) = B \leq 1$. In most cases, with choices of kernels being distributions themselves, the modal values tend to satisfy this criterion. Since members of $\mathcal{H}$ are bounded, total variation turns out to be a natural upper bound for the associated loss. In particular,
    \begin{align}
        d_{\mathcal{H}}\left(g_{\#}\Tilde{\mu}_{n}, g_{\#}\mu\right) & = \sup_{h \in \mathcal{H}} \int h(z) \:d(g_{\#}\Tilde{\mu}_{n} - g_{\#}\mu) \nonumber \\ & = L \sup_{h' \in \frac{1}{L}\mathcal{H} \circ g} \int h^{'}(x) \:d(\Tilde{\mu}_{n} - \mu) \nonumber \\ & \leq \frac{L}{2}\int \abs{\hat{p}_{h}(x) - p_{\mu}(x)} dx \\ & \leq \frac{L}{2}\left\{\int \abs{\hat{p}_{h}(x) - \mathbb{E}[\hat{p}_{h}(x)]} dx + \norm{\mathbb{E}[\hat{p}_{h}(x)] - p_{\mu}}_{1}\right\} \label{part1},
    \end{align}
    where $\mathcal{H} \circ g = \{h \circ g : h \in \mathcal{H}\}$ such that $\norm{h \circ g}_{\infty} = \max_{x} \abs{h(g(x))} < \infty$. The first inequality is obtained by taking the supremum over all bounded real-valued functions on $\mathcal{X}$. As such, it is sufficient to find the concentration of $\hat{p}_{h}$ around $p_{\mu}$ under the essential supremum norm. 

    The bias term under the norm satisfies
    \begin{align}
        \mathbb{E}[\hat{p}_{h}(x)] - p_{\mu}(x) &= \frac{1}{h^d} \int \kappa\left(\frac{x}{h}, \frac{y}{h}\right) p_{\mu}(y) dy - p_{\mu}(x) \nonumber \\ & = \int \kappa\left(\frac{x}{h}, \frac{x}{h} - u\right) [p_{\mu}(x - hu) - p_{\mu}(x)] du \label{bias1} \\ &= \int \kappa\left(\frac{x}{h}, \frac{x}{h} - u\right) \sum_{\abs{\alpha} \leq m_{x}-1} \frac{D^{\alpha} p_{\mu}(x)}{\alpha !} (-uh)^{\alpha} du \label{bias2} \\ & \quad + m_{x} \int \kappa\left(\frac{x}{h}, \frac{x}{h} - u\right) \sum_{\abs{\alpha} = m_{x}} \frac{(-uh)^{\alpha}}{\alpha!} \int^{1}_{0} (1-t)^{m_{x}-1} D^{\alpha} p_{\mu}(x - tuh) dt \nonumber \\ & = \int \int^{1}_{0} \kappa\left(\frac{x}{h}, \frac{x}{h} - u\right) (-u)^{m_{x}} h^{m_{x}} \frac{(1-t)^{m_{x}-1}}{(m_{x}-1)!}   D^{m_{x}} p_{\mu}(x - tuh) dt \:du, \label{bias3}
    \end{align}
    where (\ref{bias1}) is obtained using the change in variables $\frac{y}{h} = \frac{x}{h} - u$. In (\ref{bias2}), we use Taylor's expansion for multivariate functions, in particular, $p_{\mu} \in \mathcal{W}^{m_{x},p}_{L}(\Omega_{x})$. The first part of the sum vanishes due to the regularity of underlying kernels (see Definition \ref{Reg_inv_ker}). Now, using Minkowski's inequality for integrals, given $p=1$ we get
    \begin{align}
        \norm{\mathbb{E}[\hat{p}_{h}(x)] - p_{\mu}}_{1} &\leq h^{m_{x}} \norm{D^{m_{x}} p_{\mu}}_{1} \int \sup_{v} \abs{\kappa(v,v-u)}{\abs{u}}^{m_{x}} du \int^{1}_{0} \frac{(1-t)^{m_{x}-1}}{(m_{x}-1)!} dt  \nonumber \\ &\lesssim h^{m_{x}}, \label{part2}
    \end{align}
    again due to the regularity of $\kappa$ and Assumption \ref{assumption1}.

    Let us define $\mathscr{K} = \left\{f(x,\cdot) = \frac{1}{h^{d}} \kappa\left(\frac{x}{h}, \frac{\cdot}{h}\right): x \in \Omega_{x}\right\}$. Now, given the invariance of the underlying kernels $\kappa$, the bracketing number of $\mathscr{K}$ turns out to satisfy
    \begin{equation*}
        \mathcal{N}_{[\:]}(\mathscr{K}, L_{1}(\mu), \epsilon) \leq E_{\kappa}\left(\frac{L \sqrt{d} B_{x}}{h^{d+1}\epsilon}\right)^{d},
    \end{equation*} 
    where $E_{\kappa} > 0$ is an universal constant depending on $d$ (\citet{van2000asymptotic}, Example 19.7). Also, observe that 
    \begin{align*}
        \textrm{Var}(f) \leq \int f^{2} d\mu \leq \norm{f}_{\infty} \int \abs{f} d\mu \leq \frac{B}{h^{d}},
    \end{align*}
    where the last inequality utilizes the fact $\sup_{f} \mathbb{E}\abs{f} \leq \mathbb{E}[\sup_{f} \abs{f}]$. Hence, using Bernstein's inequality (see, \citet{yukich1985density} for the detailed bracketing argument) we infer that
    \begin{equation}
        \mathbb{P}^{n}\left(\sup_{x} \abs{\hat{p}_{h}(x) - \mathbb{E}[\hat{p}_{h}(x)]} > \epsilon \right) \leq 4 E_{\kappa}\left(\frac{L \sqrt{d} B_{x}}{h^{d+1}\epsilon}\right)^{d} \exp\left\{-\frac{En \epsilon^2 h^d}{B}\right\}, \label{part3}
    \end{equation}
    where $E > 0$ is an universal constant\footnote{\citet{lafferty2008concentration} derive the sharp value of $E$ under Lipschitz continuous kernels.} and $0< \epsilon \leq \frac{2}{3}$. This enables us to state a concentration bound for $d_{\mathcal{H}}\left(g_{\#}\Tilde{\mu}_{n}, g_{\#}\mu\right)$ readily using (\ref{part1}) and (\ref{part2}).

    Bounding the expected estimation error based on the translated data (second term in \ref{main_ineq}) is comparatively straightforward. We present the refined Dudley's entropy integral, which becomes the cornerstone of the rest of the proof.
    \begin{lemma} \label{Dudley}
        Given a symmetric class of functions $\mathcal{H}$, satisfying $\sup_{h \in \mathcal{H}} \norm{h}_{\infty} \leq M$, $M > 0$ we have 
        \begin{equation*}
            \mathbb{E}[d_{\mathcal{H}}\left(\hat{\rho}_{m}, \rho\right)] \leq 2\inf_{\delta \in (0,M)} \left(2\delta + \frac{12}{\sqrt{m}}\int^{M}_{\delta}\sqrt{\log \mathcal{N}\left(\mathcal{H}, \norm{\cdot}_{\infty}, \epsilon\right)} d\epsilon\right),
        \end{equation*}
        where $\rho \in \mathcal{P}(\mathcal{Z})$.
    \end{lemma}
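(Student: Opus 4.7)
The plan is to prove this refined Dudley bound in three standard moves: symmetrization, replacement of the empirical $L_2$ metric by the uniform metric in the chaining, and invoking Dudley's entropy integral.

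First I would symmetrize. Writing $d_{\mathcal{H}}(\hat{\rho}_m,\rho) = \sup_{h\in\mathcal{H}}\bigl\{\tfrac{1}{m}\sum_{i=1}^m h(Z_i) - \mathbb{E}_\rho h\bigr\}$, and introducing an independent ghost sample $Z_1',\dots,Z_m'\sim\rho$ together with i.i.d.\ Rademacher variables $\varepsilon_1,\dots,\varepsilon_m$, Jensen's inequality followed by the symmetry of $\mathcal{H}$ (which is closed under negation up to a sign absorbed into the Rademacher variables) gives the standard bound
\begin{equation*}
\mathbb{E}\,d_{\mathcal{H}}(\hat{\rho}_m,\rho) \;\leq\; 2\,\mathbb{E}\,\sup_{h\in\mathcal{H}}\frac{1}{m}\sum_{i=1}^m \varepsilon_i\, h(Z_i) \;=:\; 2\,\mathbb{E}\,\mathcal{R}_m(\mathcal{H}).
\end{equation*}
This accounts for the outer factor of $2$ in the stated bound, and reduces the problem to controlling the Rademacher complexity $\mathcal{R}_m(\mathcal{H})$ uniformly in the sample $\{Z_i\}$.

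Next I would apply Dudley's chaining to $\mathcal{R}_m(\mathcal{H})$ conditionally on the $Z_i$. For fixed data, the process $h\mapsto \tfrac{1}{m}\sum_i \varepsilon_i h(Z_i)$ is sub-Gaussian in the empirical $L_2$ metric $\|h-h'\|_{L_2(\hat{\rho}_m)}^2 = \tfrac{1}{m}\sum_i (h-h')^2(Z_i)$ with parameter $\|h-h'\|_{L_2(\hat{\rho}_m)}/\sqrt{m}$, by Hoeffding. Since $\|h-h'\|_{L_2(\hat{\rho}_m)} \leq \|h-h'\|_\infty$ for every realization of the sample, any $\epsilon$-cover of $\mathcal{H}$ in $\|\cdot\|_\infty$ is also an $\epsilon$-cover in $L_2(\hat{\rho}_m)$, hence $\mathcal{N}(\mathcal{H},L_2(\hat{\rho}_m),\epsilon)\leq \mathcal{N}(\mathcal{H},\|\cdot\|_\infty,\epsilon)$. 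The classical chaining argument then yields, for any $\delta\in(0,M)$,
\begin{equation*}
\mathcal{R}_m(\mathcal{H}) \;\leq\; 2\delta + \frac{12}{\sqrt{m}}\int_\delta^{M}\sqrt{\log\mathcal{N}(\mathcal{H},\|\cdot\|_\infty,\epsilon)}\,d\epsilon,
\end{equation*}
where the $2\delta$ term bounds the residual contribution from the finest scale (using $\sup_h \|h\|_\infty\leq M$ so that the diameter of $\mathcal{H}$ in any of the metrics above is at most $2M$), and the constant $12$ is the usual chaining constant obtained by summing a geometric series of sub-Gaussian maxima over dyadic scales.

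Combining the symmetrization inequality with the chaining bound, taking expectation over the data, and then taking the infimum over $\delta\in(0,M)$ on the right (which is valid since the left side does not depend on $\delta$) produces exactly the claimed inequality. The only delicate point is the interchange of metrics: carrying out the chaining in $\|\cdot\|_\infty$ rather than in $L_2(\hat{\rho}_m)$ loses nothing here and frees the bound from the data, which is what makes the statement deterministic in the covering number. No additional ingredient beyond Hoeffding, a union bound across scales, and Jensen's inequality is needed.
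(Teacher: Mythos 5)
The paper does not actually prove Lemma \ref{Dudley}: it is introduced with the phrase ``We present the refined Dudley's entropy integral'' and then used directly, i.e.\ it is treated as a known result. Your argument supplies the standard proof of that known result, and it is correct in its essential structure: symmetrize against a ghost sample with Rademacher signs (giving the outer factor of $2$), observe that the conditional Rademacher process $h\mapsto m^{-1}\sum_i\varepsilon_i h(Z_i)$ is sub-Gaussian with parameter $\|h-h'\|_{L_2(\hat\rho_m)}/\sqrt m$ by Hoeffding, replace $L_2(\hat\rho_m)$-covering numbers by uniform-norm covering numbers via the pointwise domination $\|h-h'\|_{L_2(\hat\rho_m)}\le\|h-h'\|_\infty$, and then run truncated Dudley chaining, which gives a bound that holds for every realization of the sample and hence survives the outer expectation. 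Taking the infimum over $\delta\in(0,M)$ at the end is legitimate because the left-hand side is $\delta$-free. The only point where a careful reader might quibble is the precise numerical constants ($2\delta$ versus $4\delta$ in the truncation term, $12$ versus other chaining constants, and $M$ versus $2M$ as the top of the entropy integral); these are known to vary across references and your brief justification (the diameter bound and the geometric summation over dyadic scales) is consistent with the version stated in the lemma, so there is no substantive gap.
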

    As such, given that the entropy corresponding to the underlying critic functions satisfy polynomial discrimination, we obtain an upper bound corresponding to the infimum as follows
    \begin{equation}
        \mathbb{E}\left[d_{\mathcal{H}}\left(g_{\#}\mu, \widehat{(g_{\#}\mu)}_m\right)\right] \lesssim m^{-\frac{1}{q \vee 2}}. \label{part4}
    \end{equation}
    To obtain a probabilistic concentration inequality corresponding to the same error observe that
    \begin{equation*}
        d_{\mathcal{H}}\left(g_{\#}\mu, \widehat{(g_{\#}\mu)}_m\right) = \sup_{h \in \mathcal{H}}\Big\{\frac{1}{m}\sum^{m}_{i=1} h(Y_{i})- \mathbb{E}_{g_{\#}\mu}h\Big\} := W(Y_{1}, \cdots , Y_{m}),
    \end{equation*}
    given $Y_{1}, \cdots , Y_{m} \sim g_{\#}\mu$. As such, for $y_{1}, \cdots, y_{m}, y^{'}_{m}$
    \begin{align*}
        &\abs{W(y_{1}, \cdots , y_{m}) - W(y_{1}, \cdots , y^{'}_{m})} \\ =& \frac{1}{m}\abs{\sup_{h \in \mathcal{H}}\sum^{m}_{i=1} \left(h(y_{i})- \mathbb{E}_{g_{\#}\mu}h\right) - \sup_{h^{'} \in \mathcal{H}}\sum^{m-1}_{i=1} \left(h^{'}(y_{i})- \mathbb{E}_{g_{\#}\mu}h^{'}\right) + h^{'}(y^{'}_{m}) - \mathbb{E}_{g_{\#}\mu}h^{'}} \\ \leq & \frac{1}{m}\abs{\sup_{h \in \mathcal{H}} h(y_{m}) - h(y_{m}^{'})} \leq \frac{2M}{m}.
    \end{align*}
    Applying McDiarmid's inequality, 
    \begin{equation*}
        d_{\mathcal{H}}\left(g_{\#}\mu, \widehat{(g_{\#}\mu)}_m\right) \leq \epsilon + \mathcal{O}(m^{-\frac{1}{q \vee 2}})
    \end{equation*}
    holds with probability $\geq 1 - \exp\left\{-\frac{n \epsilon^2}{2 M^2}\right\}$, where $\epsilon > 0$. This bound, along with (\ref{part3}) ensure the existence of constants $l, E_{1}, E_{2}$ and $E_{3} >0$ that proof the theorem. Observe that, the bound satisfies for arbitrary choices of $h$. To ensure that the realized error is indeed $o(1)$ with high probability, we specify $h := h_{n} = (\frac{1}{n})^{\xi}$ such that $\xi \geq \frac{1}{d}$. 
\end{proof}

\begin{proof}[Proof of Lemma \ref{lemm_bound}]
    Given any $\phi \in \Phi(W, L)^{k}_{d}$ and $\mu_{1}, \mu_{2} \in \mathcal{P}(\mathcal{X})$, by the definition of IPMs
    \begin{align*}
        d_{\mathcal{L}_{c_{z}}^{1}}(\phi_{\#}\mu_{1}, \phi_{\#}\mu_{2}) &= \sup_{l \in \mathcal{L}_{c_{z}}^{1}} \mathbb{E}_{\mu_{1}} [l \circ \phi] - \mathbb{E}_{\mu_{2}} [l \circ \phi] \\ &= \sup_{l \in \mathcal{L}_{c_{z}}^{1}} \left\{\mathbb{E}_{\mu_{1}} [l \circ \phi] - \mathbb{E}_{\mu_{1}} [l \circ g] + \mathbb{E}_{\mu_{2}} [l \circ g] - \mathbb{E}_{\mu_{2}} [l \circ \phi] + \mathbb{E}_{\mu_{1}} [l \circ g] - \mathbb{E}_{\mu_{2}} [l \circ g]\right\} \\ &\leq \sup_{l \in \mathcal{L}_{c_{z}}^{1}} \left\{ \mathbb{E}_{\mu_{1}}\abs{l \circ \phi - l \circ g} + \mathbb{E}_{\mu_{2}}\abs{l \circ g - l \circ \phi} + \mathbb{E}_{\mu_{1}} [l \circ g] - \mathbb{E}_{\mu_{2}} [l \circ g]\right\} \\ &\leq 2\norm{\phi - g}_{\infty} + \sup_{l \in \mathcal{L}_{c_{z}}^{1}} \mathbb{E}_{\mu_{1}} [l \circ g] - \mathbb{E}_{\mu_{2}} [l \circ g],
    \end{align*}
    where the first inequality is due to Jensen's inequality and in the second one we use the fact that $l \in \mathcal{L}_{c_{z}}^{1}$, given $c_{z} \equiv L_{1}$. The arbitrary choice of $g \in \mathscr{F}(\mathcal{X}, \mathcal{P}(\mathcal{Z}))$ makes the result hold for the infimum as well. As such,
    \begin{equation} \label{bnd_1}
        d_{\mathcal{L}_{c_{z}}^{1}}(\phi_{\#}\mu_{1}, \phi_{\#}\mu_{2}) \leq 2 \inf_{g \in \mathscr{F}(\mathcal{X}, \mathcal{P}(\mathcal{Z}))}\norm{\phi - g}_{\infty} + d_{\mathcal{L}_{c_{z}}^{1}}(g_{\#}\mu_{1}, g_{\#}\mu_{2}).
    \end{equation}
    Now, under the critic $\mathcal{F}$, $\forall \varepsilon > 0$ there exists $f_{\varepsilon} \in \mathcal{F}$ such that
    \begin{align*}
        d_{\mathcal{F}}(\phi_{\#}\mu_{1}, \phi_{\#}\mu_{2}) &\leq \mathbb{E}_{\phi_{\#}\mu_{1}} [f_{\varepsilon}] - \mathbb{E}_{\phi_{\#}\mu_{2}} [f_{\varepsilon}] + \varepsilon \\ &= \mathbb{E}_{\phi_{\#}\mu_{1}} [f_{\varepsilon} - l] + \mathbb{E}_{\phi_{\#}\mu_{2}} [l - f_{\varepsilon}] + \mathbb{E}_{\phi_{\#}\mu_{1}} [l] - \mathbb{E}_{\phi_{\#}\mu_{2}} [l] + \varepsilon \\ &\leq 2\norm{f_{\varepsilon} - l}_{\infty} + \sup_{l \in \mathcal{L}_{c_{z}}^{1}} \mathbb{E}_{\phi_{\#}\mu_{1}} [l] - \mathbb{E}_{\phi_{\#}\mu_{2}} [l] + \varepsilon.
    \end{align*}
    Similarly, taking infimum over all such choices of $l$
    \begin{equation} \label{bnd_2}
        d_{\mathcal{F}}(\phi_{\#}\mu_{1}, \phi_{\#}\mu_{2}) \leq 2 \mathcal{E}(\mathcal{F}, \mathcal{L}_{c_{z}}^{1}) + d_{\mathcal{L}_{c_{z}}^{1}}(\phi_{\#}\mu_{1}, \phi_{\#}\mu_{2}) + \varepsilon.
    \end{equation}
    The inequalities (\ref{bnd_1}) and (\ref{bnd_2}) together proof the lemma. 
\end{proof}

\begin{proof}[Proof of Theorem \ref{MMD_IPT}]
    Given any NN-induced map $\phi$, using the triangle inequality on MMDs one can write
    \begin{equation} \label{thm4.8_1}
        d_{\mathcal{H}_{\kappa}}\left(\phi_{\#}\hat{\mu}_{n} , \widehat{(\phi_{\#}\mu)}_m\right) \leq \underbrace{d_{\mathcal{H}_{\kappa}}\left(\phi_{\#}\hat{\mu}_{n} , \phi_{\#}\mu\right)}_\text{(i)} + \underbrace{d_{\mathcal{H}_{\kappa}}\left(\phi_{\#}\mu , \widehat{(\phi_{\#}\mu)}_m\right)}_\text{(ii)}. 
    \end{equation}
    Since the underlying kernels are bounded to begin with, an immediate upper bound for (ii) might be: $d_{\mathcal{H}_{\kappa}}(\phi_{\#}\mu , \widehat{(\phi_{\#}\mu)}_m) \leq \sqrt{\sup_{z \in \Omega_{z}}\kappa(z,z)}\: d_{\textrm{TV}}(\phi_{\#}\mu , \widehat{(\phi_{\#}\mu)}_m)$ (\citet{sriperumbudur2009integral}, Theorem 14 (ii)). Being larger in general, TV may enforce information preservation onto MMDs. However, the very property of boundedness of the kernels enables us to show the concentration of empirical measures under MMD as well. Observe that, for bounded kernels, MMD satisfies the bounded difference inequality with the universal upper bound $2m^{-1}\sqrt{\sup_{z \in \Omega_{z}}\kappa(z,z)}$. As such, using McDiarmid's inequality
    \begin{equation} \label{thm4.8_2}
        \mathbb{P}\left(d_{\mathcal{H}_{\kappa}}(\phi_{\#}\mu , \widehat{(\phi_{\#}\mu)}_m) \leq \mathbb{E}[d_{\mathcal{H}_{\kappa}}(\phi_{\#}\mu , \widehat{(\phi_{\#}\mu)}_m)] + t\right) \geq 1 - e^{-\frac{mt^{2}}{2C_{\kappa}}},
    \end{equation}
    where $C_{\kappa}$ is a positive constant such that $\sup_{z \in \Omega_{z}}\kappa(z,z) \leq C_{\kappa}$. Furthermore, we observe that $\mathbb{E}\left[d_{\mathcal{H}_{\kappa}}(\phi_{\#}\mu , \widehat{(\phi_{\#}\mu)}_m)\right] \leq \left[\mathbb{E} \:d^{2}_{\mathcal{H}_{\kappa}}(\phi_{\#}\mu , \widehat{(\phi_{\#}\mu)}_m)\right]^{\frac{1}{2}} \leq \sqrt{\frac{2C_{k}}{m}}$ (\citet{briol2019statistical}, lemma 2). 
    In pursuit of establishing an upper bound to (i), one needs additional enforcement. The first of which is presented as the following lemma.

    \begin{lemma} \label{MMD_1}
        Given arbitrary $\phi, g \in \mathscr{F}(\mathcal{X},\mathcal{Z})$ and $\mu_{1}, \mu_{2} \in \mathcal{P}_{\kappa}(\mathcal{X})$ such that the underlying kernel function $\kappa$ is strongly invariant, there exists a constant $D > 0$ (dependant on $\kappa$ and the latent dimension) for which
        \[d^{2}_{\mathcal{H}_{\kappa}}(\phi_{\#}\mu_{1}, \phi_{\#}\mu_{2}) \leq D\norm{\phi - g}_{\infty} + d^{2}_{\mathcal{H}_{\kappa}}(g_{\#}\mu_{1}, g_{\#}\mu_{2}).\]
    \end{lemma}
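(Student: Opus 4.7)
The plan is to pass everything through the kernel mean embedding picture and then reduce the two-encoder bound to a single-encoder perturbation estimate. Writing $d_{\mathcal{H}_{\kappa}}(\phi_{\#}\mu_{1},\phi_{\#}\mu_{2}) = \|K(\phi_{\#}\mu_{1}) - K(\phi_{\#}\mu_{2})\|_{\mathcal{H}_{\kappa}}$, I would first estimate, for an arbitrary $\mu \in \mathcal{P}_{\kappa}(\mathcal{X})$, the quantity $\|K(\phi_{\#}\mu) - K(g_{\#}\mu)\|_{\mathcal{H}_{\kappa}}$. Since kernel mean embedding of a push-forward admits a Bochner integral representation $K(\phi_{\#}\mu) = \int K(\phi(x))\, d\mu(x)$, Jensen's inequality yields
\[
\|K(\phi_{\#}\mu) - K(g_{\#}\mu)\|_{\mathcal{H}_{\kappa}} \leq \int \|K(\phi(x)) - K(g(x))\|_{\mathcal{H}_{\kappa}}\, d\mu(x).
\]
Invoking the strong invariance $\|K(u)-K(v)\|_{\mathcal{H}_{\kappa}} \leq L_{\kappa}\|u-v\|$ (with $L_{\kappa}$ possibly dimension-dependent through $k$), the right-hand side is at most $L_{\kappa}\|\phi-g\|_{\infty}$.

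Next I would apply the triangle inequality twice in $\mathcal{H}_{\kappa}$:
\[
d_{\mathcal{H}_{\kappa}}(\phi_{\#}\mu_{1},\phi_{\#}\mu_{2}) \leq d_{\mathcal{H}_{\kappa}}(\phi_{\#}\mu_{1}, g_{\#}\mu_{1}) + d_{\mathcal{H}_{\kappa}}(g_{\#}\mu_{1}, g_{\#}\mu_{2}) + d_{\mathcal{H}_{\kappa}}(g_{\#}\mu_{2},\phi_{\#}\mu_{2}).
\]
The first and third terms are each bounded by $L_{\kappa}\|\phi-g\|_{\infty}$ by the previous step, giving the linear-in-$\|\phi-g\|_{\infty}$ estimate $d_{\mathcal{H}_{\kappa}}(\phi_{\#}\mu_{1},\phi_{\#}\mu_{2}) \leq 2L_{\kappa}\|\phi-g\|_{\infty} + d_{\mathcal{H}_{\kappa}}(g_{\#}\mu_{1}, g_{\#}\mu_{2})$.

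Finally I would square this relation and convert cross-terms to the desired shape. Expanding produces a $4L_{\kappa}^{2}\|\phi-g\|_{\infty}^{2}$ term, a $4L_{\kappa}\|\phi-g\|_{\infty}\, d_{\mathcal{H}_{\kappa}}(g_{\#}\mu_{1},g_{\#}\mu_{2})$ cross-term, and $d^{2}_{\mathcal{H}_{\kappa}}(g_{\#}\mu_{1},g_{\#}\mu_{2})$. Using $\sup_{z}\kappa(z,z)\leq C_{\kappa}$ one has $d_{\mathcal{H}_{\kappa}}(g_{\#}\mu_{1},g_{\#}\mu_{2}) \leq 2\sqrt{C_{\kappa}}$, which linearizes the cross-term into $8L_{\kappa}\sqrt{C_{\kappa}}\|\phi-g\|_{\infty}$. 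The only remaining obstacle is the quadratic $\|\phi-g\|_{\infty}^{2}$ term: since $\phi,g$ map into the bounded relevant region of $\mathcal{Z}$ (of diameter $B_{z}$ tied to the latent dimension $k$ via $\Omega_{z}$), we can write $\|\phi-g\|_{\infty}^{2} \leq B_{z}\|\phi-g\|_{\infty}$, turning it into $4L_{\kappa}^{2}B_{z}\|\phi-g\|_{\infty}$. Setting $D := 4L_{\kappa}^{2}B_{z} + 8L_{\kappa}\sqrt{C_{\kappa}}$ gives the desired inequality with a constant depending only on the kernel and the latent dimension.

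The main obstacle I expect is the last step: justifying the linearization of $\|\phi-g\|_{\infty}^{2}$. If the boundedness of the relevant range of the encoders is not assumed explicitly, one may instead need a trichotomy argument—small $\|\phi-g\|_{\infty}$ handled by absorbing the square into a linear term via $L_{\kappa}/\sqrt{C_{\kappa}}$, large $\|\phi-g\|_{\infty}$ handled by the trivial MMD ceiling $4C_{\kappa}$—so that the stated linear bound holds unconditionally with a slightly enlarged $D$. Everything else (Bochner integration, strong invariance, triangle inequality, boundedness of $\kappa$) is standard once the embedding picture is in place.
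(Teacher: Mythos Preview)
Your proposal is correct but follows a different route from the paper. The paper works directly with the \emph{squared} MMDs: it writes
\[
d^{2}_{\mathcal{H}_{\kappa}}(\phi_{\#}\mu_{1},\phi_{\#}\mu_{2}) - d^{2}_{\mathcal{H}_{\kappa}}(g_{\#}\mu_{1},g_{\#}\mu_{2}) = \int\!\!\!\int\big[\kappa(\phi(x),\phi(y))-\kappa(g(x),g(y))\big]\,d(\mu_{1}-\mu_{2})^{\otimes 2},
\]
splits the bracket by inserting $\kappa(g(x),\phi(y))$, recognises each piece as a kernel mean embedding evaluated at $\phi(x)$ versus $g(x)$ (or at $\phi(y)$ versus $g(y)$), and then applies the reproducing property with Cauchy--Schwarz and strong invariance to bound the integrand by a multiple of $\norm{\phi-g}_{\infty}$. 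The linear bound thus appears without ever squaring anything, so no $\norm{\phi-g}_{\infty}^{2}$ term arises.

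Your approach is more modular---first a linear bound on the unsquared MMD via the triangle inequality in $\mathcal{H}_{\kappa}$ and the Lipschitz feature map, then square---and the first half is arguably cleaner than the paper's kernel-splitting. The price is the extra work you flagged at the end. That obstacle can be removed without any boundedness of the encoder range or trichotomy: once you have $\lvert d_{\mathcal{H}_{\kappa}}(\phi_{\#}\mu_{1},\phi_{\#}\mu_{2})-d_{\mathcal{H}_{\kappa}}(g_{\#}\mu_{1},g_{\#}\mu_{2})\rvert\leq 2L_{\kappa}\norm{\phi-g}_{\infty}$, simply factor the difference of squares and bound the sum of the two MMDs by $4\sqrt{C_{\kappa}}$ using only the kernel bound. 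This yields $D=8L_{\kappa}\sqrt{C_{\kappa}}$ directly and sidesteps the linearisation issue entirely.
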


    \begin{proof}[Proof of lemma \ref{MMD_1}]
    Observe that
        \begin{align*}
            &\abs{d^{2}_{\mathcal{H}_{\kappa}}(\phi_{\#}\mu_{1}, \phi_{\#}\mu_{2}) - d^{2}_{\mathcal{H}_{\kappa}}(g_{\#}\mu_{1}, g_{\#}\mu_{2})} \\ = &  \abs{\int_{\Omega_{x} \times \Omega_{x}}\left[\kappa(\phi(x),\phi(y)) - \kappa(g(x),g(y))\right] (\mu_{1}-\mu_{2}) \otimes (\mu_{1}-\mu_{2}) (dxdy)} \\ = & \Bigg|\int_{\Omega_{x} \times \Omega_{x}}\left[\kappa(\phi(x),\phi(y)) - \kappa(g(x),\phi(y)) + \kappa(g(x),\phi(y)) - \kappa(g(x),g(y))\right] \\ & \mspace{250mu} (\mu_{1}-\mu_{2}) \otimes (\mu_{1}-\mu_{2}) (dxdy)\Bigg| \\ \overset{(1)}{=} & \Bigg|\int_{\Omega_{x}}\left[K(\phi_{\#}\mu_{1} - \phi_{\#}\mu_{2})(\phi(x)) - K(\phi_{\#}\mu_{1} - \phi_{\#}\mu_{2})(g(x))\right] (\mu_{1}-\mu_{2}) (dx) \\ & \mspace{50mu} + \int_{\Omega_{x}}\left[K(g_{\#}\mu_{1} - g_{\#}\mu_{2})(\phi(y)) - K(g_{\#}\mu_{1} - g_{\#}\mu_{2})(g(y))\right] (\mu_{1}-\mu_{2}) (dy)\Bigg| \\ \leq &\int_{\Omega_{x}}\abs{K(\phi_{\#}\mu_{1} - \phi_{\#}\mu_{2})(\phi(x)) - K(\phi_{\#}\mu_{1} - \phi_{\#}\mu_{2})(g(x))} \abs{\mu_{1}-\mu_{2}} (dx) \\ & \mspace{50mu} + \int_{\Omega_{x}}\abs{K(g_{\#}\mu_{1} - g_{\#}\mu_{2})(\phi(y)) - K(g_{\#}\mu_{1} - g_{\#}\mu_{2})(g(y))} \abs{\mu_{1}-\mu_{2}} (dy), 
        \end{align*}
        where ($1$) is due to the fact
        \begin{align*}
            &\int \kappa(\phi(x),\phi(y)) (\mu_{1}-\mu_{2}) \otimes (\mu_{1}-\mu_{2}) (dxdy) \\ = & \int \kappa(\phi(x),y) (\mu_{1}-\mu_{2}) \otimes (\phi_{\#}\mu_{1} - \phi_{\#}\mu_{2}) (dxdy) = \int K(\phi_{\#}\mu_{1} - \phi_{\#}\mu_{2})(\phi(x)) (\mu_{1}-\mu_{2}) (dx).
        \end{align*}
        Now,
        \begin{align*}
            &\abs{K(\phi_{\#}\mu_{1} - \phi_{\#}\mu_{2})(\phi(x)) - K(\phi_{\#}\mu_{1} - \phi_{\#}\mu_{2})(g(x))} \\ &\leq \int_{\Omega_{z}} \abs{\kappa(\phi(x),y) - \kappa(g(x),y)} \abs{\phi_{\#}\mu_{1} - \phi_{\#}\mu_{2}} (dy) \\ &\overset{(2)}{\leq} \int_{\Omega_{z}} \norm{K(\phi( x)) - K(g(x))} \sqrt{\kappa(y,y)} \abs{\phi_{\#}\mu_{1} - \phi_{\#}\mu_{2}} (dy) \\ &\overset{(3)}{\lesssim} \norm{\phi(x) - g(x)} \int_{\Omega_{z}} \sqrt{\kappa(y,y)} \abs{\phi_{\#}\mu_{1} - \phi_{\#}\mu_{2}} (dy),
        \end{align*}
        where ($2$) is due to the reproducing kernel property, coupled with the Cauchy-Schwartz inequality. The strong invariance of the underlying kernel inspires ($3$). Noticing the quantity under the integral to be finite, we obtain 
        \begin{align*}
            &\int_{\Omega_{x}}\abs{K(\phi_{\#}\mu_{1} - \phi_{\#}\mu_{2})(\phi(x)) - K(\phi_{\#}\mu_{1} - \phi_{\#}\mu_{2})(g(x))} \abs{\mu_{1}-\mu_{2}} (dx) \\ &\lesssim \int_{\Omega_{x}} \norm{\phi(x) - g(x)} \abs{\mu_{1}-\mu_{2}} (dx) \lesssim \norm{\phi - g}_{\infty},
        \end{align*}
        since the dominating measure is sigma-finite. The suppressed constant is namely $k$ (the latent dimension). Similarly, observing $\int_{\Omega_{z}} \sqrt{\kappa(y,y)} \abs{g_{\#}\mu_{1} - g_{\#}\mu_{2}} (dy) < \infty$ in addition, we conclude
        \begin{align*}
            \abs{d^{2}_{\mathcal{H}_{\kappa}}(\phi_{\#}\mu_{1}, \phi_{\#}\mu_{2}) - d^{2}_{\mathcal{H}_{\kappa}}(g_{\#}\mu_{1}, g_{\#}\mu_{2})} \lesssim \norm{\phi - g}_{\infty}.
        \end{align*}
        As such, there indeed exists a constant that satisfies the lemma. 
    \end{proof}
    \noindent
    Now, let us choose in particular $g \in \mathscr{F}_{L}(\mathcal{X}, \mathcal{Z})$. For ease of understanding, we continue with the distributions $\mu_{1}, \mu_{2} \in \mathcal{P}_{\kappa}(\mathcal{X})$. Using the reproducing property again, we get
    \begin{align*}
        d^{2}_{\mathcal{H}_{\kappa}}(g_{\#}\mu_{1}, g_{\#}\mu_{2}) &= \int_{\Omega_{x} \times \Omega_{x}} \kappa(g(x),g(y)) (\mu_{1}-\mu_{2}) \otimes (\mu_{1}-\mu_{2}) (dxdy) \\ &= \int_{\Omega_{x}} K(g_{\#}\mu_{1} - g_{\#}\mu_{2})(g(x)) (\mu_{1}-\mu_{2}) (dx).
    \end{align*}
    While proving Lemma \ref{MMD_1}, we have observed that the function $K(g_{\#}\mu_{1} - g_{\#}\mu_{2})$ is Lipschitz continuous with accompanying constant $c_{g}^{(\mu_{1},\mu_{2})} = \int_{\Omega_{z}} \sqrt{\kappa(y,y)} \abs{g_{\#}\mu_{1} - g_{\#}\mu_{2}} (dy)$. We mention that for Energy kernels, the same function rather turns out to be H\"{o}lder continuous. As such, 
    \begin{align*}
        d^{2}_{\mathcal{H}_{\kappa}}(g_{\#}\mu_{1}, g_{\#}\mu_{2}) &\leq c_{g}^{(\mu_{1},\mu_{2})} \sup_{f \in \mathcal{L}_{c_{z} \equiv L_{2}}^1} \left[\int_{\Omega_{x}} f(g(x)) (\mu_{1}-\mu_{2}) (dx)\right] \\ &\overset{(4)}{=} c_{g}^{(\mu_{1},\mu_{2})} d_{\mathcal{L}_{c_{z}}^{1}}(g_{\#}\mu_{1}, g_{\#}\mu_{2}) \leq c_{g}^{(\mu_{1},\mu_{2})}L d_{\mathcal{L}_{c_{x}}^{1}}(\mu_{1},\mu_{2}),  
    \end{align*}
    where ($4$) is due to the Kantorovitch-Rubinstein duality. Hence, we may write
    \begin{align} \label{thm4.8_3}
        d^{2}_{\mathcal{H}_{\kappa}}\left(\phi_{\#}\hat{\mu}_{n} , \phi_{\#}\mu\right) \leq D_{n}\norm{\phi - g}_{\infty} + c_{g}^{(\hat{\mu}_{n},\mu)}L d_{\mathcal{L}_{c_{x}}^{1}}(\hat{\mu}_{n},\mu),
    \end{align}
    where $D_{n}$ and $c_{g}^{(\hat{\mu}_{n},\mu)}$ are no longer constants, but sequences based on $n$ that converge to $0$ almost surely as $n \rightarrow \infty$ (by dominated convergence theorem). In particular,
    \begin{equation*}
        c_{g}^{(\hat{\mu}_{n},\mu)} = c_{g,n} = \int_{\Omega_{z}} \sqrt{\kappa(y,y)} \abs{g_{\#}\hat{\mu}_{n} - g_{\#}\mu} (dy)=\int_{\Omega_{x}} \sqrt{\kappa(g(y),g(y))} \abs{\hat{\mu}_{n} - \mu} (dy)
    \end{equation*}
    and $D_{n} = o(c_{g,n} \vee c_{\phi,n})$. Applying the concentration of $\hat{\mu}_{n}$ around $\mu$ under the metric $d_{\mathcal{L}_{c_{x}}^{1}}$, along with the inequalities \ref{thm4.8_1} and \ref{thm4.8_2} we infer that given $t>0$
    \begin{equation*}
        \mathbb{P}\left(d_{\mathcal{H}_{\kappa}}\left(\phi_{\#}\hat{\mu}_{n} , \widehat{(\phi_{\#}\mu)}_m\right) \leq t + \sqrt{\frac{2C_{\kappa}}{m}} + \sqrt{D_{n}\norm{\phi - g}_{\infty}} + \sqrt{\mathcal{O}(c_{g}^{(\hat{\mu}_{n},\mu)}{(d^{2}n)}^{-\frac{1}{d}}) + c_{g}^{(\hat{\mu}_{n},\mu)}Lt}\right)
    \end{equation*}
    remains at least $1 - 2\exp{\left(-\frac{2(m \wedge n)t^{2}}{B}\right)}$, where $B = \max\{B^{2}_{x},4C_{\kappa}\}$. The quantity $B_{x}$ symbolises the diameter of $\Omega_{x}$ under the metric $c_{x}$.
\end{proof}

\begin{proof}[Proof of Corollary \ref{IPT_sig}]
    The proof takes inspiration from Theorem 3.5 of \citet{lee2017ability}. First, let us construct $\phi_{1:L+1} = \phi_{1} \circ \cdots \circ \phi_{L+1}$ where the individual functions are defined as 
    \[\phi_{i} : \mathbb{R}^{N_{i-1}} \rightarrow \mathbb{R}^{N_{i}} \;\textrm{such that} \;\left(\phi_{i}(x)\right)_{j} = c_{ij0} + \sum_{k=1}^{r_{i}} c_{ijk} \sigma\left(\langle m_{ijk}, x\rangle + b_{ijk}\right),\]
    where $c_{ijk}, b_{ijk} \in \mathbb{R}$ and $m_{ijk} \in \mathbb{R}^{N_{i-1}}$ are model parameters in accordance with Definition \ref{NN}. Since these are shallow networks, the quantity $r_{i}, 1 \leq i \leq L+1$ denotes the number of nodes they have. Observe that, following Barron's argument, $f_{1}: \mathbb{R}^{d} \rightarrow \mathbb{R}^{N_{1}}$ can be approximated using $\phi_{1}$ (under the $L_{2}$ norm, with respect to $\mu$). The same result holds for all intermediate individual pieces, with respect to measures at their respective domains. Our goal is to approximate the composition of them all. To invoke an induction argument, let us first consider a sequence of nested sets $\{S_{i}\}_{i=1}^{L+1} \subseteq \mathbb{R}^{d}$ such that $S_{i} = S_{i-1} \cap \{x: \phi_{1:i-1}(x) \in \Omega^{s,N_{i-1}}_{i-1}\}$. The measure $\mu$, restricted onto $S_{i}$, when pushed-forward by the map $\phi_{1:i-1}$ yields $\mu^{i} = (\phi_{1:i-1})_{\#}(\mathbb{1}_{S_{i}}\mu)$ (need not be a probability measure). The support of $\mu^{i}$ is thus obtained to be $\phi_{1:i-1}(S_{i}) \subseteq \Omega^{s,N_{i-1}}_{i-1}$. Let us denote the Lipschitz constant corresponding to $f_{i}$ by $\norm{f_{i}}_{\mathcal{B}}$ \citep{wojtowytsch2022representation}. Now, given any $\varepsilon > 0$, define $r_{i} = \ceil{\frac{4C^{2}_{i}N_{i}}{\varepsilon^{2}}}$. Now,
    \begin{align}
        &\left(\int_{\mathbb{R}^{d}} \mathbb{1}_{S_{i}} \norm{f_{1:i} - \phi_{1:i}}^{2} d\mu\right)^{\frac{1}{2}} \nonumber \\ \overset{(1)}{\leq} & \left(\int_{\mathbb{R}^{d}} \mathbb{1}_{S_{i}} \norm{f_{i} \circ f_{1:i-1} - f_{i} \circ \phi_{1:i-1}}^{2} d\mu\right)^{\frac{1}{2}} + \left(\int_{\mathbb{R}^{N_{i-1}}}  \norm{f_{i} - \phi_{i}}^{2} d(\phi_{1:i-1})_{\#}(\mathbbm{1}_{S_{i}}\mu)\right)^{\frac{1}{2}} \nonumber \\ \leq & \:\norm{f_{i}}_{\mathcal{B}} \left(\int_{\mathbb{R}^{d}} \mathbb{1}_{S_{i}} \norm{f_{1:i-1} - \phi_{1:i-1}}^{2} d\mu\right)^{\frac{1}{2}} + \varepsilon \nonumber \\ \overset{(2)}{\leq} & \:\norm{f_{i}}_{\mathcal{B}} \left(\int_{\mathbb{R}^{d}} \mathbb{1}_{S_{i-1}} \norm{f_{1:i-1} - \phi_{1:i-1}}^{2} d\mu\right)^{\frac{1}{2}} + \varepsilon \nonumber \\ \leq & \left[1+\norm{f_{i}}_{\mathcal{B}}+\norm{f_{i}}_{\mathcal{B}}\norm{f_{i-1}}_{\mathcal{B}}+\cdots+\prod_{j=1}^{i}\norm{f_{j}}_{\mathcal{B}}\right]\varepsilon \leq \frac{\left\{\bigvee_{1}^{i}\norm{f_{j}}_{\mathcal{B}}\right\}^{i+1} - 1}{\bigvee_{1}^{i}\norm{f_{j}}_{\mathcal{B}}-1} \varepsilon \nonumber,
    \end{align}
   where $(1)$ is due to the triangle inequality. The second term in the same stage turns out to be $\leq \varepsilon$ using Barron's theorem \citep{barron1993universal, lee2017ability} given our specific choice of $r_{i}$. The inequality $(2)$ is based on the filtration offered by $S_{i}$. Observe that, in case the maximum of the Lipschitz constants is exactly $1$, the upper bound becomes $i\varepsilon$.
   This implies that
   \begin{align*}
       \mu\left(S_{i-1} \cup \{x: \norm{f_{1:i-1}(x) - \phi_{1:i-1}(x)} \geq s\}\right) \leq \left[\frac{\left\{\bigvee_{1}^{i-1}\norm{f_{j}}_{\mathcal{B}}\right\}^{\overline{i-1}+1} - 1}{\bigvee_{1}^{i-1}\norm{f_{j}}_{\mathcal{B}}-1}\right]^{2} \cdot \frac{\varepsilon^{2}}{s^{2}}.
   \end{align*}
   As such, 
   \begin{align}
       \mu(S_{i}) &= \mu(S_{i-1}) - \mu(S_{i-1} \cup \{x: \norm{f_{1:i-1}(x) - \phi_{1:i-1}(x)} \geq s\}) \nonumber \\ &\geq \mu(S_{i-1}) - \left[\frac{\left\{\bigvee_{1}^{i-1}\norm{f_{j}}_{\mathcal{B}}\right\}^{i} - 1}{\bigvee_{1}^{i-1}\norm{f_{j}}_{\mathcal{B}}-1}\right]^{2} \cdot \frac{\varepsilon^{2}}{s^{2}} \geq 1 - \frac{\varepsilon^{2}}{s^{2}} \sum_{l=1}^{i-1}\left[\frac{\left\{\bigvee_{1}^{l}\norm{f_{j}}_{\mathcal{B}}\right\}^{l+1} - 1}{\bigvee_{1}^{l}\norm{f_{j}}_{\mathcal{B}}-1}\right]^{2} \label{cor_barr}
   \end{align}
   In other words, there exists $\phi = \phi_{1:L+1}$ such that $\left(\int_{\mathbb{R}^{d}} \mathbb{1}_{S} \norm{f_{1:L+1} - \phi}^{2} d\mu\right)^{\frac{1}{2}} \leq \mathcal{O}(\varepsilon)$ for a set $S \subset \mathbb{R}^d$ satisfying inequality \ref{cor_barr}. Now, owing to the compactness of the base domain and its regularity, the range of $\phi_{i} = (\phi_{i1}, \phi_{i2}, \cdots, \phi_{iN_{i}})$ is always contained in a space of finite diameter. Hence,
   \begin{align*}
       \int_{\Omega_{x}} \norm{f_{1:i} - \phi_{1:i}}^{2} d\mu \leq &\int_{S_{i}} \norm{f_{1:i} - \phi_{1:i}}^{2} d\mu + \int_{S^{c}_{i}} \norm{f_{1:i} - \phi_{1:i}}^{2} d\mu \\ \leq &\left[\frac{\left\{\bigvee_{1}^{i}\norm{f_{j}}_{\mathcal{B}}\right\}^{i+1} - 1}{\bigvee_{1}^{i}\norm{f_{j}}_{\mathcal{B}}-1}\right]^{2} \cdot {\varepsilon}^{2} + \mathcal{O}\left(\frac{\varepsilon^{2}}{s^{2}}\right).
   \end{align*}
   Taking the square root and choosing $i$ corresponding to the ultimate layer we prove the result.
\end{proof}



\begin{proof}[Proof of Theorem \ref{latent_con_MMD}]
Given an encoder $\phi \in \Phi(W, L)^{k}_{d}$ and an empirical distribution corresponding to $\mu$ (based on $n$ i.i.d. replicates), let us fragment the realized latent WAE-MMD loss as usual,
\begin{align} \label{MMD_frag}
    d_{\mathcal{H}_{\kappa}}\left(\phi_{\#}\hat{\mu}_{n}, \rho\right) \leq d_{\mathcal{H}_{\kappa}}\left(\phi_{\#}\hat{\mu}_{n}, \widehat{(\phi_{\#}\mu)}_n\right) + d_{\mathcal{H}_{\kappa}}\left(\widehat{(\phi_{\#}\mu)}_n, \hat{\rho}_{n}\right) + d_{\mathcal{H}_{\kappa}}\left(\hat{\rho}_{n}, \rho\right)
\end{align}
which holds for all empirical $\hat{\rho}_{n}$, based on $n$ i.i.d. samples from $\rho$. Observe that, the first quantity is the information dissipated during encoding and can be put under a deterministic upper bound using Theorem \ref{MMD_IPT}. The second quantity, due to the observation $\norm{\kappa(\cdot,z) - \kappa(\cdot,z^{'})}_{\mathcal{H}_{\kappa}} \leq 2\sqrt{C_{\kappa}} \mathbb{1}_{z \neq z^{'}}$, satisfies
\begin{align} \label{upper_MMD}
    d_{\mathcal{H}_{\kappa}}\left(\widehat{(\phi_{\#}\mu)}_n, \hat{\rho}_{n}\right) \leq \sqrt{C_{\kappa}} d_{\textrm{TV}}\left(\widehat{(\phi_{\#}\mu)}_n, \hat{\rho}_{n}\right).
\end{align}
Now, recall that $\rho \in \mathcal{P}_{\Sigma}(\Omega_{z})$. As such, the observations from $\rho$, forming its empirical counterpart, are first projected onto the subset of $\Sigma$-invariant distributions. This is termed \textit{symmetrization} and the corresponding operator enabling it, $S^{\Sigma} : \mathcal{P}(\mathcal{Z}) \rightarrow \mathcal{P}(\mathcal{Z})$ is defined as 
\begin{align*}
    \mathbb{E}_{S^{\Sigma}[\rho]} f = \int_{\mathcal{Z}} \left[\int_{\Sigma} f(\varphi_{\sigma}(z)) \mu_{\Sigma}(d\sigma)\right] d\rho(z) = \mathbb{E}_{\rho} \mathbb{E}_{\mu_{\Sigma}} [f \circ \varphi_{\sigma}],
\end{align*}
where $\mu_{\Sigma}$ is the Haar measure on the compact group $\Sigma$ and $f$ denotes any bounded measurable function. In other words, the recipe to obtain samples from $S^{\Sigma}[\rho]$ in general is to draw i.i.d. $\{z_{i}\}^{n}_{1} \sim \rho$ and $\{\sigma_{i}\}^{n}_{1} \sim \mu_{\Sigma}$ independently, followed by the operation $\varphi_{\sigma_{j}}(z_{i})$ \citep{Birrell2022StructurepreservingG}. This in turn enables us to narrow down the class of critic functions under MMD to its $\Sigma$-invariant subset $\mathcal{H}^{\Sigma}_{\kappa}$. We use the same idea to obtain an upper bound to the remaining estimation error in (\ref{MMD_frag}) as follows. Let,
    \begin{align}
        \gamma(z_{1}, z_{2}, \cdots, z_{n}) &= d_{\mathcal{H}_{\kappa}}(\hat{\rho}_{n}, \rho) =  d_{\mathcal{H}^{\Sigma}_{\kappa}}(\hat{\rho}_{n}, \rho) \nonumber \\ &= \sup_{\norm{f}_{\mathcal{H}_{\kappa}} \leq 1} \big\{\mathbb{E}_{S^{\Sigma}[\hat{\rho}_{n}]} f - \mathbb{E}_{\rho} f\big\} \nonumber \\ &= \sup_{\norm{f}_{\mathcal{H}_{\kappa}} \leq 1} \big\{\frac{1}{n\abs{\Sigma}} \sum_{i=1}^{n} \sum_{j=1}^{\abs{\Sigma}} f(\sigma_{j} z_{i})- \mathbb{E}_{\rho} f\big\} \nonumber \\ &\leq \sup_{\norm{f}_{\mathcal{H}_{\kappa}} \leq 1} \abs{\frac{1}{n\abs{\Sigma}} \sum_{i=1}^{n} \sum_{j=1}^{\abs{\Sigma}} f(\sigma_{j} z_{i})- \mathbb{E}_{\rho} f}.
    \end{align}
    To look for the specific constant satisfying the bounded difference property, observe that 
    \begin{align}
        &\abs{\gamma(z_{1},\cdots, z_{i}, \cdots, z_{n}) - \gamma(z_{1},\cdots, z^{'}_{i}, \cdots, z_{n})} \nonumber \\ &\leq \sup_{\norm{f}_{\mathcal{H}_{\kappa}} \leq 1} \abs{\frac{1}{n\abs{\Sigma}} \sum_{j=1}^{\abs{\Sigma}} f(\sigma_{j} z_{i})- f(\sigma_{j} z^{'}_{i})} \nonumber \\ &\leq \frac{1}{n\abs{\Sigma}} \Big\{\norm{\sum_{j=1}^{\abs{\Sigma}} K(\sigma_{j} z_{i})}_{\mathcal{H}_{\kappa}} +  \norm{\sum_{j=1}^{\abs{\Sigma}} K(\sigma_{j} z^{'}_{i})}_{\mathcal{H}_{\kappa}}\Big\},
    \end{align}
    where 
    \begin{align}
        \norm{\sum_{j=1}^{\abs{\Sigma}} K(\sigma_{j} z_{i})}_{\mathcal{H}_{\kappa}} &= \left[\sum_{j=1}^{\abs{\Sigma}} \kappa(\sigma_{j} z_{i},\sigma_{j} z_{i}) + \sum_{j \neq l} \kappa(\sigma_{j} z_{i},\sigma_{l} z_{i})\right]^{\frac{1}{2}} \nonumber \\
        &= \left[\sum_{j=1}^{\abs{\Sigma}} \kappa(\sigma_{j} z_{i},\sigma_{j} z_{i}) + \sum_{\sigma_{j} \neq \textrm{id}} \kappa(\sigma_{j} z_{i}, z_{i})\right]^{\frac{1}{2}} \\ &\leq \sqrt{C_{\kappa} \abs{\Sigma}}\left[1+\varsigma_{\kappa,\Sigma}\left(\abs{\Sigma}-1\right)\right]^{\frac{1}{2}}.
    \end{align}
    As such, using only the one-sided McDiarmid's inequality, for every $t>0$ we have with probability $\geq 1 - \exp\Big\{-\frac{n\abs{\Sigma}t^{2}}{2C_{\kappa}\left[1+\varsigma_{\kappa,\Sigma}\left(\abs{\Sigma}-1\right)\right]}\Big\}$
    \begin{align} \label{mcd_MMD}
        d_{\mathcal{H}_{\kappa}}(\hat{\rho}_{n}, \rho) - \mathbb{E}[d_{\mathcal{H}_{\kappa}}(\hat{\rho}_{n}, \rho)] \leq t.
    \end{align}
    In order to upper bound the expectation, we use the symmetrization trick as follows
    \begin{align}
        &\mathbb{E}_{\{Z_{i}\}_{1}^{n} \sim \rho} \sup_{\norm{f}_{\mathcal{H}_{\kappa}} \leq 1} \abs{\frac{1}{n\abs{\Sigma}} \sum_{i=1}^{n} \sum_{j=1}^{\abs{\Sigma}} f(\sigma_{j} z_{i})- \mathbb{E}_{\rho} f} \nonumber \\ &= \mathbb{E}_{Z} \sup_{\norm{f}_{\mathcal{H}_{\kappa}} \leq 1} \abs{\frac{1}{n\abs{\Sigma}} \sum_{i=1}^{n} \sum_{j=1}^{\abs{\Sigma}} f(\sigma_{j} z_{i})- \mathbb{E}_{\{Z^{'}_{i}\}_{1}^{n} \sim \rho}\left(\frac{1}{n\abs{\Sigma}} \sum_{i=1}^{n} \sum_{j=1}^{\abs{\Sigma}} f(\sigma_{j} z^{'}_{i})\right)} \nonumber \\ &\leq \mathbb{E}_{Z,Z^{'}} \sup_{\norm{f}_{\mathcal{H}_{\kappa}} \leq 1} \abs{\frac{1}{n\abs{\Sigma}} \sum_{i=1}^{n} \sum_{j=1}^{\abs{\Sigma}} f(\sigma_{j} z_{i})- f(\sigma_{j} z^{'}_{i})} \nonumber \\ &= \mathbb{E}_{Z,Z^{'},\xi} \sup_{\norm{f}_{\mathcal{H}_{\kappa}} \leq 1} \abs{\frac{1}{n\abs{\Sigma}} \sum_{i=1}^{n} \xi_{i}\sum_{j=1}^{\abs{\Sigma}} f(\sigma_{j} z_{i})- f(\sigma_{j} z^{'}_{i})} \\ &\leq 2\sqrt{\frac{C_{\kappa}\left[1+\varsigma_{\kappa,\Sigma}\left(\abs{\Sigma}-1\right)\right]}{n\abs{\Sigma}}} \label{lemmaA.14},
    \end{align}
    where $(\xi_{1}, \xi_{2}, \cdots, \xi_{n}) \sim$ i.i.d. standard Rademacher and (\ref{lemmaA.14}) is due to (\citet{chen2023sample}, Lemma A.14). As such, \ref{mcd_MMD} implies that
    \begin{align*}
        d_{\mathcal{H}_{\kappa}}(\hat{\rho}_{n}, \rho) \leq 2\sqrt{\frac{C_{\kappa}\left[1+\varsigma_{\kappa,\Sigma}\left(\abs{\Sigma}-1\right)\right]}{n\abs{\Sigma}}}\left(1+\sqrt{\frac{1}{2}\ln{\frac{1}{\delta}}}\right) 
    \end{align*}
    holds with probability $1-\delta$ for $\delta > 0$. Observe that, the proof of this concentration bound acts as a generalization to the usual MMD, which arises in case $\abs{\Sigma}=1$. 
    
    Going back to \ref{MMD_frag}, we write 
    \begin{equation*}
        d_{\mathcal{H}_{\kappa}}\left(\phi_{\#}\hat{\mu}_{n}, \rho\right) - \sqrt{C_{\kappa}} \sup_{\mathcal{P}_{n}(\rho)}\Delta_{\Phi,n} \leq d_{\mathcal{H}_{\kappa}}\left(\phi_{\#}\hat{\mu}_{n}, \widehat{(\phi_{\#}\mu)}_n\right) + d_{\mathcal{H}_{\kappa}}\left(\hat{\rho}_{n}, \rho\right),
    \end{equation*}
    where the supremum is taken over possible estimators based on replicates from $\rho$ of size $n$. It becomes evident that the quantity on the left has finite expectation and is essentially bounded in probability. Based on the concentration found in Theorem \ref{MMD_IPT}, along with the bound as in Corollary \ref{IPT_ReLU_Group} we conclude that given $t > 0$
    \begin{align} \label{ult_conc_MMD}
         d_{\mathcal{H}_{\kappa}}\left(\phi_{\#}\hat{\mu}_{n}, \rho\right) &- \sqrt{C_{\kappa}} \sup_{\mathcal{P}_{n}(\rho)}\Delta_{\Phi,n} \leq \sqrt{t}(\sqrt{c_{g,n}L} + 2\sqrt{t}) + \mathcal{O}(\sqrt{c_{g,n}}(d^{2}n)^{-\frac{1}{2d}}) \\ \nonumber &+ \sqrt{\frac{2C_{\kappa}}{n}}\left[1 + \sqrt{\frac{1+\varsigma_{\kappa,\Sigma}\left(\abs{\Sigma}-1\right)}{\abs{\Sigma}}}\right] + \mathcal{O}(\sqrt{d D_{n}}N_{1}^{-\frac{2}{d}}N_{2}^{-\frac{2}{d}})
    \end{align}
    holds with probability at least $1 - 2\exp\Big(-\frac{2nt^{2}}{\max\left\{B^{2}_{x}, 4C_{\kappa}\left[1+\varsigma_{\kappa,\Sigma}\left(\abs{\Sigma}-1\right)\right]\right\}}\Big)$. Observe that, here $c_{g,n}$ is as described in Theorem \ref{MMD_IPT} and typically behave as $\mathcal{O}(n^{-\frac{1}{2}})$ due to the strong law. $N_{1}$ and $N_{2}$ specify the width $W = \mathcal{O}(d\floor{N_{1}^{\frac{1}{d}}} \vee N_{1}+1)$ and length $L = \mathcal{O}(N_{2})$ of the encoder. Applying a change in variables on (\ref{ult_conc_MMD}) we prove the theorem. 
\end{proof}

\begin{proof}[Proof of Theorem \ref{Recon_con}]
    Given an optimal encoder $E^{*}_{n}(t)$ that incurs latent loss $d_{\textrm{TV}}\left(E^{*}_{n}(t)_{\#}\hat{\mu}_{n}, \rho\right) \leq t$, fragmenting the reconstruction error according to (\ref{decoding}) yields
    \begin{equation*}
        W_{c_{x}}^{1}(\mu, (D \circ E^{*}_{n}(t))_{\#}\hat{\mu}_{n}) \leq W_{c_{x}}^{1}((D \circ E^{*}_{n}(t))_{\#}\hat{\mu}_{n}, D_{\#}\rho) + W_{c_{x}}^{1}(D_{\#}\rho, \hat{\mu}_{n}) + W_{c_{x}}^{1}(\hat{\mu}_{n}, \mu).
    \end{equation*}
    The decoder transform is constructed as $D = \phi^{'} \circ D_{0}$, where $\phi^{'}$ is according to lemma \ref{yang} and $D_{0}: \Omega_{z} \rightarrow \mathbb{R}$ is a linear map (or an ensemble of several). $D$ can be equivalently written as $D = \phi^{'} \circ \sigma_{I} \circ D_{0}$, where $\sigma_{I}$ is the identity activation, applied componentwise. As such, based on our definition, $D$ indeed belongs to $\Phi(W,L)_{k}^{d}$ with depth $\geq 3$. As discussed earlier, the resultant $D$ thus turns out to be Lipschitz continuous. If $c_{x}$ is taken as $L_{1}$, then observe
    \begin{align*}
        W_{c_{x}}^{1}\left((D \circ E^{*}_{n}(t))_{\#}\hat{\mu}_{n}, D_{\#}\rho\right) &\leq B_{x} d_{\textrm{TV}}\left((D \circ E^{*}_{n}(t))_{\#}\hat{\mu}_{n}, D_{\#}\rho\right) \\
        &= B_{x} \sup_{\omega \in \sigma(\mathcal{X})}\abs{E^{*}_{n}(t)_{\#}\hat{\mu}_{n}(D^{-1}(\omega)), \rho(D^{-1}(\omega))} \\ &\leq B_{x} \sup_{\omega^{'} \in \sigma(\mathcal{Z})}\abs{E^{*}_{n}(t)_{\#}\hat{\mu}_{n}(\omega^{'}), \rho(\omega^{'})} \\ &= B_{x} d_{\textrm{TV}}\left(E^{*}_{n}(t)_{\#}\hat{\mu}_{n}, \rho\right) \leq t B_{x},
    \end{align*}
    where the latter inequality is reached by taking supremum over all measurable sets belonging to the Borel $\sigma$-algebra on $\mathcal{Z}$ instead of the particular path directed by $D^{-1}$. Also, the definition of $D$ implies that given arvitrary $\varepsilon > 0$, $W_{c_{x}}^{1}(D_{\#}\rho, \hat{\mu}_{n}) < \varepsilon$ (lemma \ref{yang}). 
    
    As such, the concentration of the reconstruction error is essentially determined by the statistical estimation error in the input space. Taking expectation over the samples we write
    \begin{align*}
        \mathbb{E}\left[W_{c_{x}}^{1}(\mu, (D \circ E^{*}_{n}(t))_{\#}\hat{\mu}_{n})\right] - tB_{x} \leq \mathcal{O}(n^{-\frac{1}{d}}),
    \end{align*}
    whenever $d \geq 3$. Using the naive estimator, this is the sharpest rate one can achieve. However, to remove the influence of the dimensionality $d$, here also \cite{Weed2017Sharp}'s device can be applied (see Remark \ref{curse_dim}). 
    
    The bound, based on the empirical estimator does not appreciate the smoothness of the input density $p_{\mu}$. On the other hand, given $m>0$, if $m_{x} > m$ we have $\mathcal{W}^{m_{x},p}_{L} \subset \mathscr{B}^{m}_{pq}(L^{'})$, for some $L^{'} > 0$, where $1 \leq q \leq \infty$ and $1 \leq p < \infty$ i.e. belonging to the general Besov space (\citet{gine2021mathematical}, Section 4.3). Thus, if wavelet estimates (as in \citet{weed2019estimation}) are deployed instead, the rate can be expected to be $\mathcal{O}(n^{-\frac{1+m}{d+2m}})$ given $d \geq 3$. 
\end{proof}

\begin{proof}[Proof of Theorem \ref{conta}]
    Let us consider the kernel $\kappa(x,y)$ to be of the form $\kappa(x-y)$, without loss of generality. Given replicates $X_{1}, \cdots, X_{n} \sim \Tilde{p}$, the density estimate based on transformed (due to $G \in \mathcal{G}$) samples is given as 
    \begin{align*}
        \hat{p}_{h}(x) = \frac{1}{nh^d}\sum^{n}_{i=1} \kappa\left(\frac{G(X_{i}) - x}{h}\right),
    \end{align*}
    where $h$ is the bandwidth. Decomposing the $L_{1}$ reconstruction error yields
    \begin{align}
        \abs{\hat{p}_{h}(x) - p_{\mu}(x)} \leq \abs{\hat{p}_{h}(x) - \mathbb{E}[\hat{p}_{h}(x)]} + \abs{\mathbb{E}[\hat{p}_{h}(x)] - p_{\mu}(G^{-1}(x))} + \abs{p_{\mu}(x) - p_{\mu}(G^{-1}(x))}.
    \end{align}
    Now, observe that
    \begin{align*}
        \mathbb{E}\abs{\hat{p}_{h}(x) - \mathbb{E}[\hat{p}_{h}(x)]} &\leq \sqrt{\mathbb{E}\left(\hat{p}_{h}(x) - \mathbb{E}[\hat{p}_{h}(x)]\right)^{2}} \\ &= \sqrt{\frac{1}{n}\textrm{Var}_{\Tilde{p}}\left[\frac{1}{h^d} \kappa\left(\frac{G(X) - x}{h}\right)\right]} \\ &  \overset{(1)}{=}
        \sqrt{\frac{1}{n}\textrm{Var}_{\Tilde{p}}\left[\frac{1}{h^d} \kappa\left(\frac{X - G^{-1}(x)}{h}\right)\right]}  \\ &\overset{(2)}{\lesssim} \sqrt{\frac{1}{nh^d}}, 
    \end{align*}
    where ($1$) is due to the transformation invariance of the kernel. The inequality ($2$) is obtained as a result of $\int \kappa^{2}(u) du <\infty$. The suppressed constant in the process is the upper bound to the density $\Tilde{p}$. For the bias term, we write
    \begin{align*}
        \abs{\mathbb{E}[\hat{p}_{h}(x)] - p_{\mu}(G^{-1}(x))} &= \abs{\frac{1}{h^d} \mathbb{E}_{\Tilde{p}}\left[\kappa\left(\frac{G(X_{i}) - x}{h}\right)\right] - p_{\mu}(G^{-1}(x))} \\ &\leq \abs{\frac{1}{h^d} \mathbb{E}_{\Tilde{p}}\left[\kappa\left(\frac{X - G^{-1}(x)}{h}\right)\right] - \frac{1}{h^d} \mathbb{E}_{p_{\mu}}\left[\kappa\left(\frac{Y - G^{-1}(x)}{h}\right)\right]}  \\ &\quad + \abs{\frac{1}{h^d} \mathbb{E}_{p_{\mu}}\left[\kappa\left(\frac{Y - G^{-1}(x)}{h}\right)\right] - p_{\mu}(G^{-1}(x))} \\ &\overset{(3)}{\lesssim} \frac{1}{h^d} \mathbb{E}_{\Tilde{p}, p_{\mu}} \abs{\frac{X-Y}{h}} + h^{m_{x}} \lesssim \frac{\epsilon}{h^{2d}} \vee h^{m_{x}},
    \end{align*}
    where in ($3$) we use the invariance of $\kappa$ for the first term, and the second bound is obtained using \citet{gine2021mathematical}, Proposition 4.3.33. We emphasize the fact that kernels satisfying Lipschitz continuity are commonly taken in practice, which are readily invariant to translations. The latter inequality is due to the contamination model. As such, assuming without loss of generality that the transform $G$ preserves the origin, we get
    \begin{equation*}
        \mathbb{E}\abs{\hat{p}_{h}(0) - p_{\mu}(0)} \lesssim \sqrt{\frac{1}{nh^d}} \vee \frac{\epsilon}{h^{2d}} \vee h^{m_{x}}.
    \end{equation*}
    Taking $h = n^{-\frac{1}{d + 2m_{x}}} \vee \epsilon^{\frac{1}{2d+m_{x}}}$, we prove the theorem.
\end{proof}

\section*{Appendix: Experiments and Simulations}\label{appB} 
    
\subsection*{Encoded vs. Latent Distributions}

To check the efficacy of a WAE-encoding statistically, we perform two-sample non-parametric tests of equality on target latent and encoded observations. \citet{peacock1983two} suggested a multi-dimensional generalization to the well-known Kolmogorov–Smirnov test, which, however, has high computational complexity. In our study, we identify the test suggested by \citet{fasano1987multidimensional} (FF) as a suitable alternative based on its manageable complexity without sacrificing the power and consistency\footnote{We follow the fasano.franceschini.test implementation \citep{puritz2021fasano} on R.}. Many suggestions have been made to improve the reliability of two-sample tests in higher dimensions since. However, given the ease of implementation, we use FF's version of the KS test. To ascertain our findings, we additionally carry out a referral test of equality of distributions, based on kernelized distances between pairs of observations, namely the Cram\'{e}r test \citep{baringhaus2004new}.
Unlike FF, the test statistic corresponding to Cram\'{e}r\footnote{We implement the cramer package \citep{franz2006cramer} in R with underlying kernel specified to $\kappa(z) = \sqrt{z}/2$.} is not distribution-free, and requires bootstrapping to obtain the $p$-value. We utilize two of such methods, namely the usual \textit{Monte-Carlo} and calculating the approximate \textit{eigenvalues} as the weights to the limiting distribution (of the statistic). Test results on the Five Gaussian data at $5\%$ level of significance, given $\lambda = 0.8$, are as follows:

\begin{table}[!ht]
    \centering
    \caption{Two-sample tests of equality on latent and encoded distributions.}
    \label{tab:parameters}
    \vspace{5pt}
    \footnotesize
    \begin{threeparttable}
    \begin{tabular}{l|c|c|cc}
    \toprule
    Architecture & Latent Distribution & KS test & \multicolumn{2}{c}{Cram\'{e}r test} \\
    \cmidrule{4-5}
    &  &  & Monte-Carlo & Eigenvalue\\
    \midrule
    \multirow{2}{*}{WAE-GAN} & Gaussian & \xmark & \xmark & \xmark \\
    & Exponential & \xmark & \xmark & \xmark \\
    \midrule
    \multirow{2}{*}{WAE-WAE} & Gaussian & \xmark & \xmark & \xmark \\
    & Exponential & \xmark & \xmark & \xmark \\
    \bottomrule
    \end{tabular}
    \begin{tablenotes}
         \item $^{*}$The decisions `Accept' and `Reject' against the null hypothesis that the two distributions are equal are denoted by the symbols (\cmark) and (\xmark) respectively.
    \end{tablenotes}
    \end{threeparttable}
\end{table}

The test results corroborate our theoretical findings. Though we only establish an upper bound to the latent loss, it is apparent that there lies an optimization error due to the minimum distance estimate. As such, under a metrizing measure of discrepancy, the target latent law and the encoded estimate must be distinct in distribution. The rejection of the null hypothesis reiterates the same observation. 

\begin{figure}[ht]
    \centering
    \includegraphics[width=\linewidth]{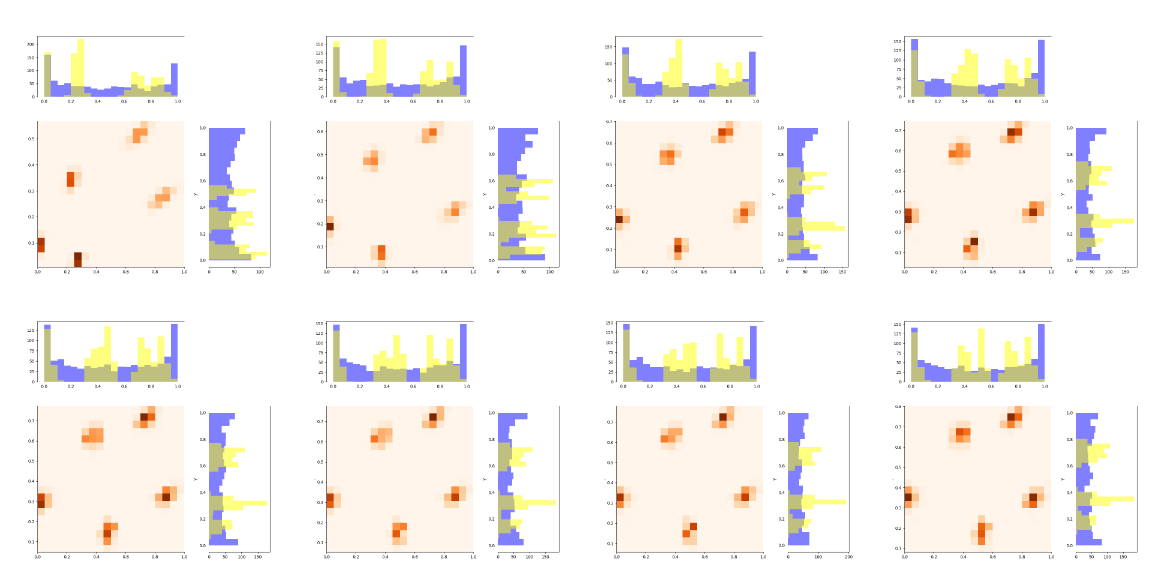}
    \caption{Concentration of bin estimates under ReLU encoders ({\color{yellow}yellow}) for latent Beta$(0.5,0.8)$ copula ({\color{blue}blue}), over epochs $(200,400,600,800,1000,1200,1400,2000)$ (left to right) in a WAE-GAN setup.}
    \label{fig:JS_Beta_Latent_whole}
\end{figure}

It becomes even more clear looking at the histograms corresponding to samples from the two, overlaid. The interesting observation from Fig \ref{fig:JS_Beta_Latent_whole}, \ref{fig:MMD_Gauss_Latent_whole} is the visual manifestation of information preservation. Semantic information, in the form of cluster structures originally present in the data set, remains intact in encoded distributions while trying to maximize similarity with their target counterparts. The evolution of this `maximization' is clear from the histograms obtained over epochs. Another viewpoint that attests to this finding is the quantile-quantile plot [Fig \ref{fig:MMD_qqplot}] of the marginals.

\begin{figure}
    \centering
    \includegraphics[width=\linewidth]{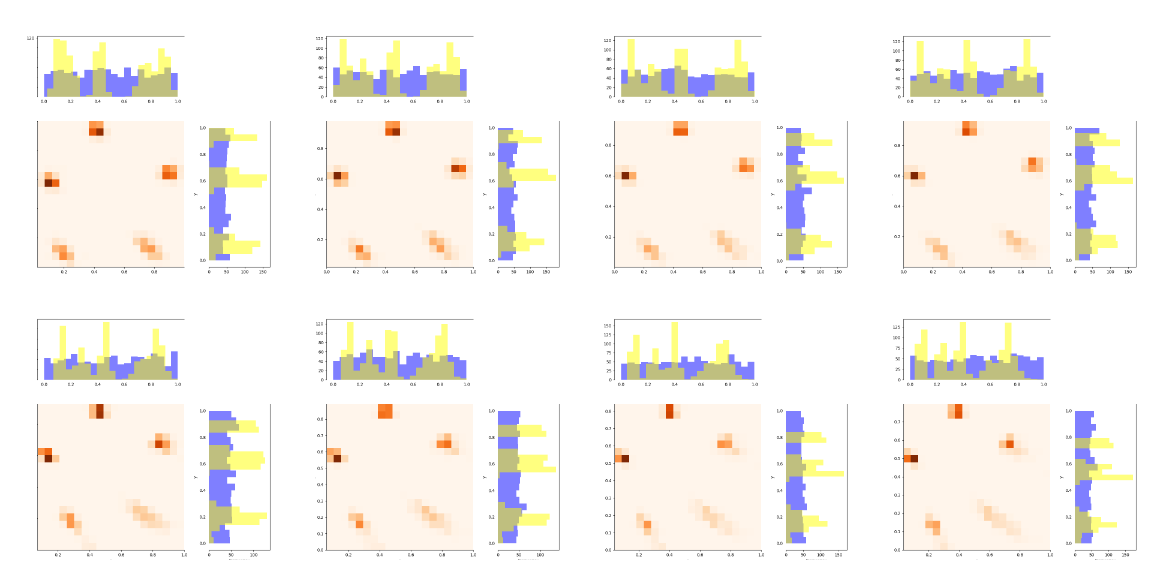}
    \caption{Concentration of bin estimates under ReLU encoders ({\color{yellow}yellow}) for latent bivariate Gaussian distribution ({\color{blue}blue}), over epochs $(200,400,600,800,1200,1400,1600,1800)$ (left to right) in a WAE-MMD setup with regularization $\lambda=0.1$.}
    \label{fig:MMD_Gauss_Latent_whole}
\end{figure}

\begin{figure}
    \begin{subfigure}{\textwidth}
  \includegraphics[width=\linewidth]{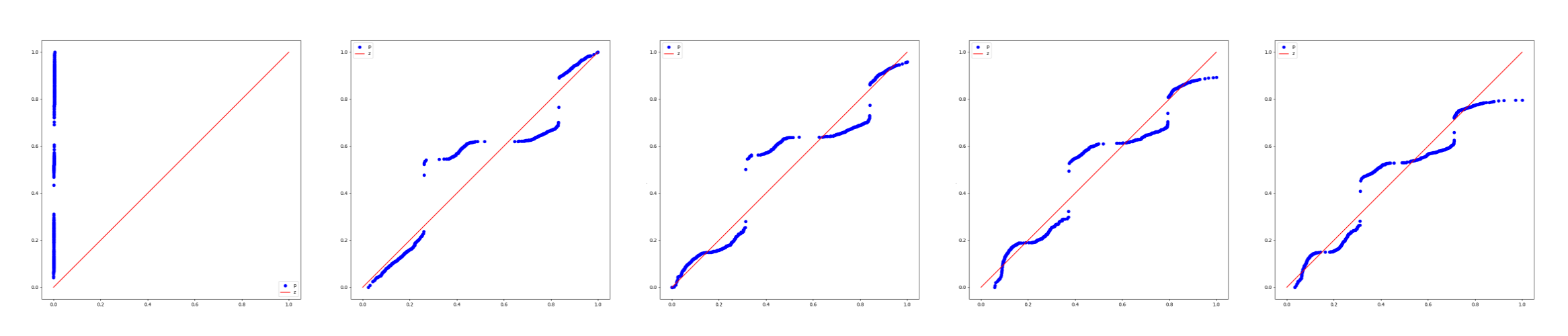}
  \caption{Gaussian}
  \label{fig:MMD_Gauss_qqplot}
\end{subfigure} 
\begin{subfigure}{\textwidth}
  \includegraphics[width=\linewidth]{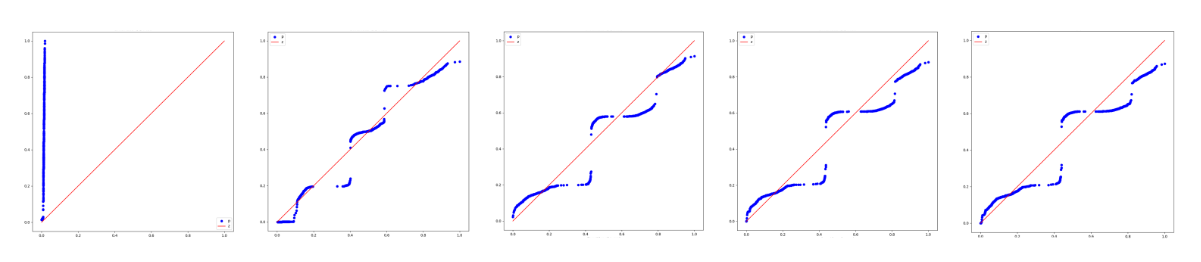}
  \caption{Beta Copula}
  \label{fig:MMD_Beta_qqplot}
\end{subfigure} 
\caption{Propagation of quantile-quantile (QQ) plots of marginals corresponding to encoded vs latent distribution under ReLU encoders, over epochs $(0,200,800,1400,1800)$ in a WAE-MMD setup with regularization $\lambda=0.1$.}
\label{fig:MMD_qqplot}
\end{figure}


\begin{figure}
     \centering 
    \begin{subfigure}{0.32\textwidth}
  \includegraphics[width=\linewidth]{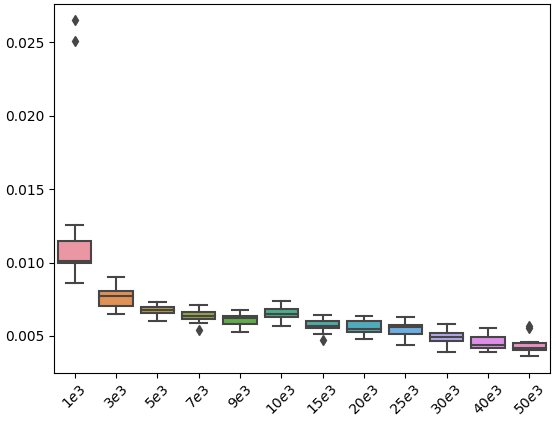}
  \caption{Gaussian}
  \label{fig:GB_recon_JS_Gauss_Group}
\end{subfigure} 
\begin{subfigure}{0.32\textwidth}
  \includegraphics[width=\linewidth]{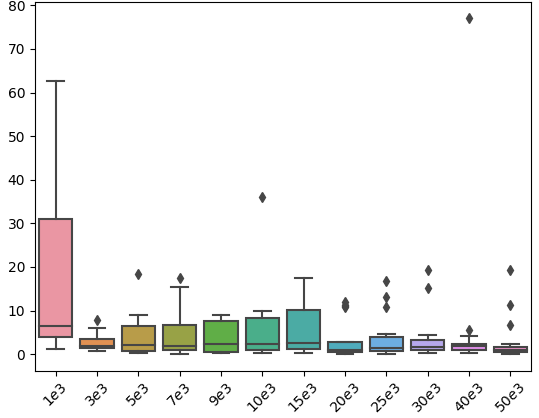}
  \caption{Beta}
  \label{fig:GB_recon_MMD_Beta_Group}
\end{subfigure} 
\begin{subfigure}{0.32\textwidth}
  \includegraphics[width=\linewidth]{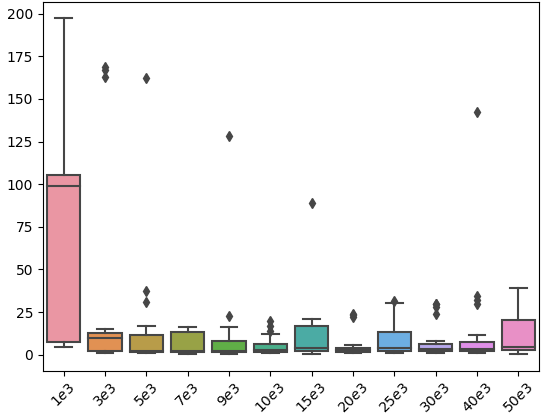}
  \caption{Exponential}
  \label{fig:GB_recon_MMD_Exp_Group}
\end{subfigure} 
\caption{Reconstruction error of Five Gaussian data under (a) JS and (b), (c) sample corrected ($\times n^{\frac{1}{2}}$) MMD latent loss, using GroupSort encoders (grouping 2).}
\label{fig:GB_recon_MMD_Group}
\end{figure}


\begin{figure}
    \centering 
\begin{subfigure}{0.31\textwidth}
  \includegraphics[width=\linewidth]{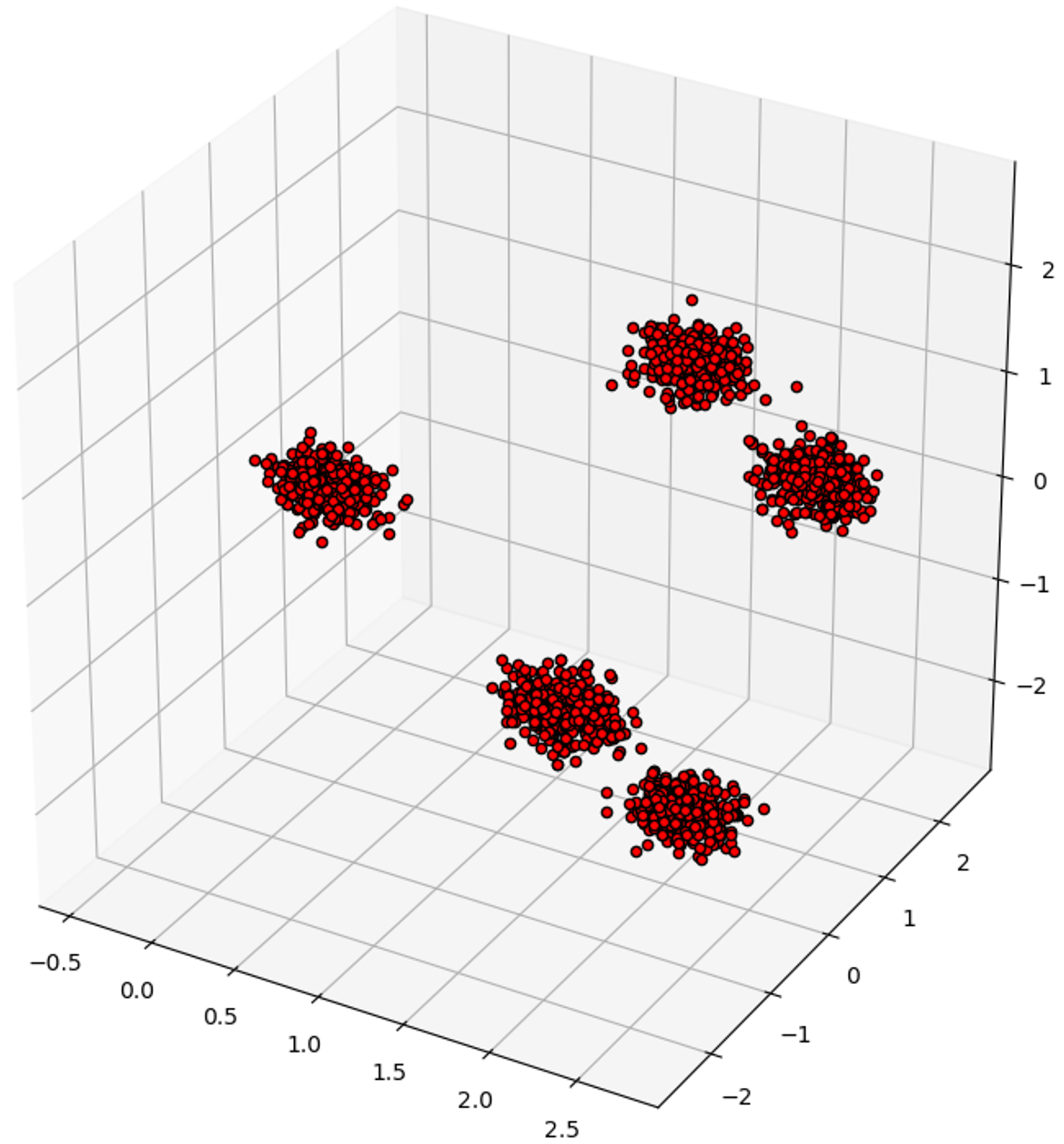}
  \caption{Actual Data}
  \label{fig:Target_FG_Cauchy_MMD}
\end{subfigure}\hspace{3pt} 
\begin{subfigure}{0.31\textwidth}
  \includegraphics[width=\linewidth]{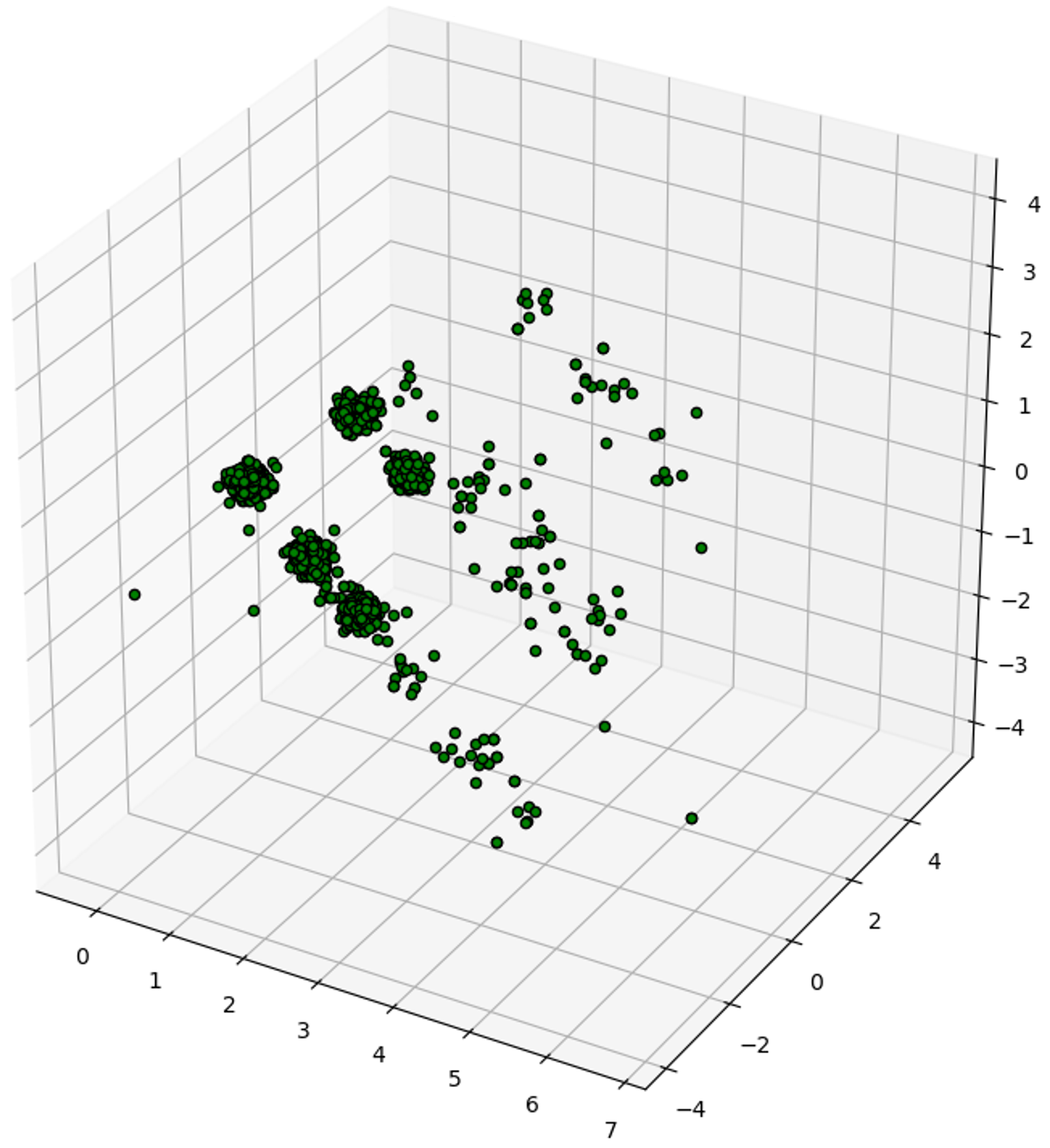}
  \caption{Contaminated Data}
  \label{fig:Cont_FG_Cauchy_MMD}
\end{subfigure}\hspace{3pt} 
\begin{subfigure}{0.31\textwidth}
  \includegraphics[width=\linewidth]{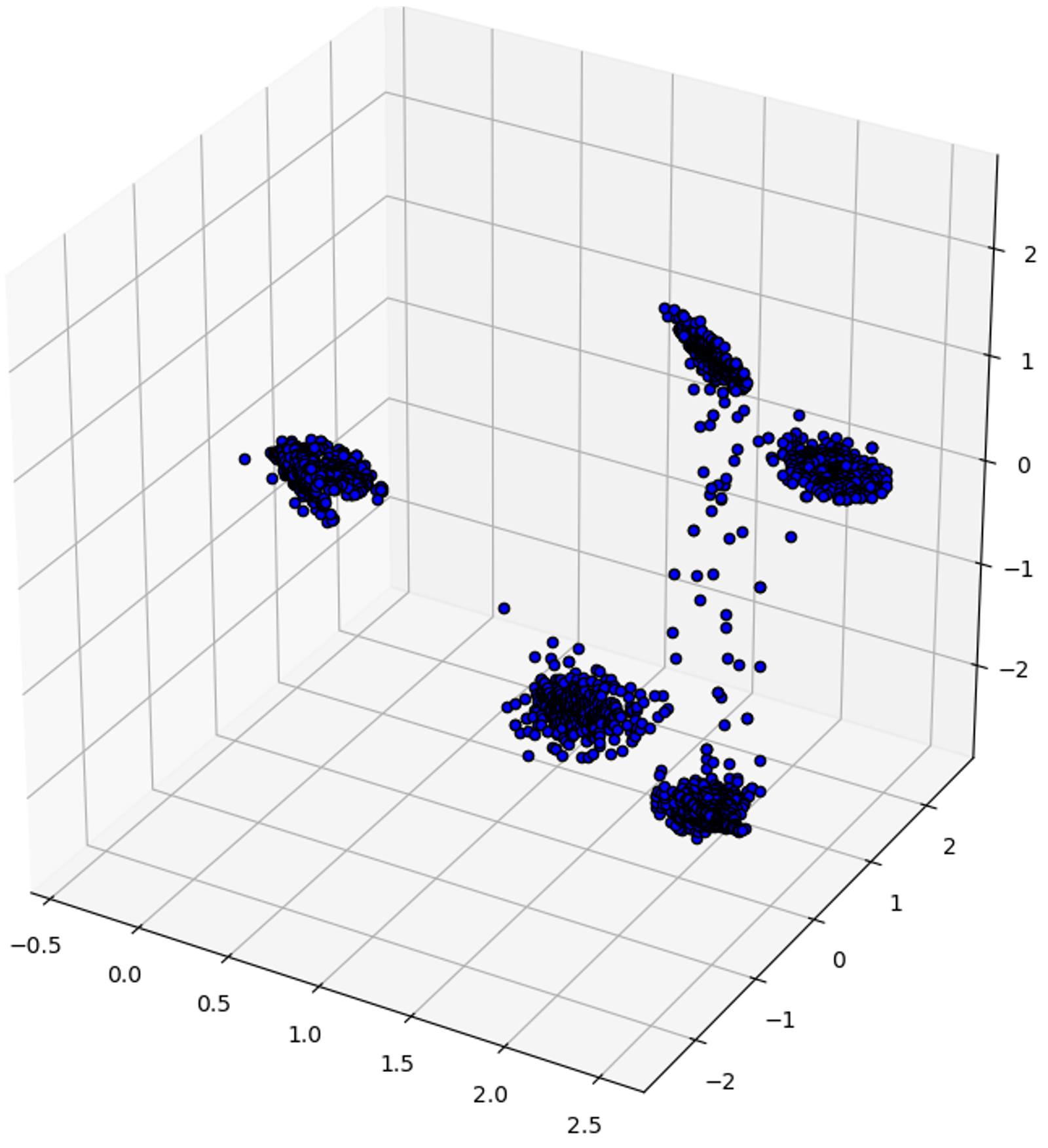}
  \caption{Reconstructed Data}
  \label{fig:Recon_FG_Cauchy_MMD}
\end{subfigure}\hfil 
\caption{Reconstructed samples ($n=10,000$) from Five Gaussian dataset with $10\%$ observations contaminated at level $0.2$, under MMD latent loss. The corrupting distribution is taken to be standard tri-variate Cauchy.}
\label{fig:MMD_Robust_FG_Cauchy}
\end{figure}

\begin{figure}
    \centering 
\begin{subfigure}{0.55\textwidth}
  \includegraphics[width=\linewidth]{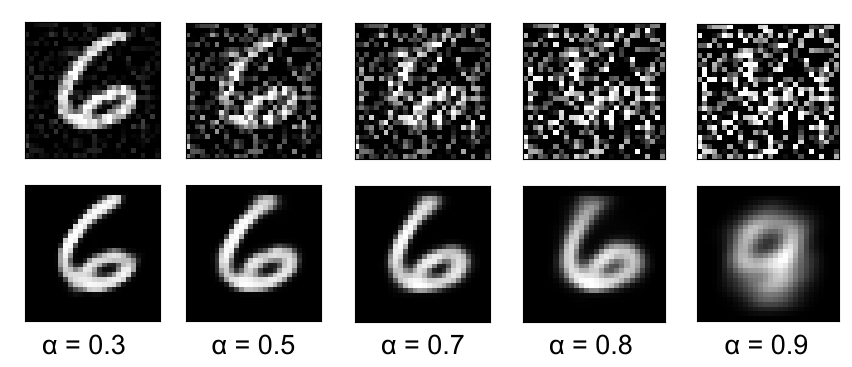}
  \caption{Gaussian}
  \label{fig:Gauss_diff}
\end{subfigure}\hspace{4pt} 
\begin{subfigure}{0.6\textwidth}
  \includegraphics[width=\linewidth]{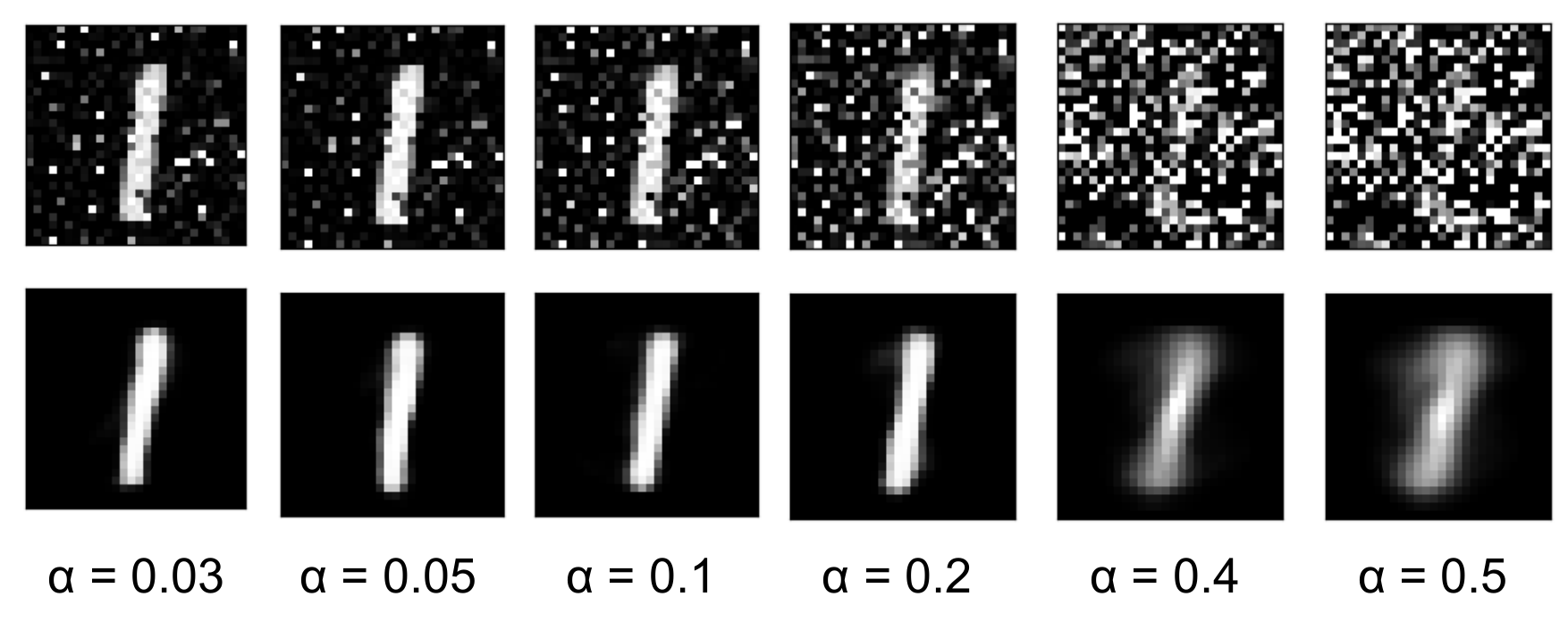}
  \caption{Cauchy}
  \label{fig:Cauchy_diff}
\end{subfigure}\hfil 
\caption{Reconstructed samples ($n=10,000$) from Five Gaussian dataset with $10\%$ observations contaminated at level $0.2$, under MMD latent loss. The corrupting distribution is taken to be standard tri-variate Cauchy.}
\label{fig:diffusion_robust}
\end{figure}


\bibliographystyle{elsarticle-num-names.bst} 
\bibliography{Ref}

\end{document}